\documentclass[11pt]{article}

\usepackage[margin=1in]{geometry}
\usepackage{color}
\usepackage{amssymb}
\usepackage{amsmath}
\usepackage{amsthm}
\usepackage{url}
\usepackage{amsfonts}
\usepackage{hyperref}
\usepackage{authblk}

\usepackage{enumitem}
\setlist{nolistsep}

\usepackage{graphicx}

\newtheorem{theorem}{Theorem}

\newtheorem{lemma}[theorem]{Lemma}
\newtheorem{corollary}[theorem]{Corollary}
\newtheorem{conjecture}[theorem]{Conjecture}

\parindent 0pt
\parskip 7pt
\begin{document}

\title{Clustering subgaussian mixtures by semidefinite programming}

\author[1]{Dustin G.\ Mixon}
\author[2]{Soledad Villar}
\author[2]{Rachel Ward}
\affil[1]{Department of Mathematics and Statistics, Air Force Institute of Technology}
\affil[2]{Department of Mathematics, University of Texas at Austin}

\date{}
\maketitle

\begin{abstract}
We introduce a model-free relax-and-round algorithm for $k$-means clustering based on a semidefinite relaxation due to Peng and Wei \cite{Peng}.
The algorithm interprets the SDP output as a denoised version of the original data and then rounds this output to a hard clustering.  
We provide a generic method for proving performance guarantees for this algorithm, and we analyze the algorithm in the context of subgaussian mixture models.
We also study the fundamental limits of estimating Gaussian centers by $k$-means clustering in order to compare our approximation guarantee to the theoretically optimal $k$-means clustering solution.  

\end{abstract}

\section{Introduction}

Consider the following mixture model:
For each $t\in[k]:=\{1,\ldots,k\}$, let $\mathcal D_t$ be an unknown subgaussian probability distribution over $\mathbb R^m$, with first moment $\gamma_t\in\mathbb{R}^m$ and second moment matrix with largest eigenvalue $\sigma_t^2$.
For each $t$, an unknown number $n_t$ of random points $\{x_{t,i}\}_{i\in[n_t]}$ is drawn independently from $\mathcal D_t$.
Given the points $\{x_{t,i}\}_{i\in[n_t],t\in[k]}$ along with the model order $k$, the goal is to approximate the centers $\{\gamma_t\}_{t\in[k]}$.
How large must $
\Delta:=\min_{a\neq b}\|\gamma_a-\gamma_b\|_2$ be relative to $\sigma_{\max}:=\max_t \sigma_t$, and how large must $n_\mathrm{min}:=\min_t n_t$ be relative to $n_\mathrm{max}:=\max_t n_t$, in order to have sufficient signal for successful approximation?

For the most popular instance of this problem, where the subgaussian distributions are Gaussians, theoretical guarantees date back to the work of Dasgupta \cite{Dasgupta99}.
Dasgupta introduced an algorithm based on random projections and showed that this algorithm well-approximates centers of Gaussians in $\mathbb R^m$ that are separated by $\sigma_{\max} \sqrt m$.
Since Dasgupta's seminal work, performance guarantees for several algorithmic alternatives have emerged, including expectation maximization \cite{DasguptaS07},
spectral methods \cite{Vempala04, KumarK10, Awasthi12}, projections (random and deterministic) \cite{MV10, Arora01}, and the method of moments \cite{MV10}.
Every existing performance guarantee has one two forms:
\begin{itemize}
\item[(a)]
the algorithm correctly clusters all points according to Gaussian mixture component, or
\item[(b)]
the algorithm well-approximates the center of each Gaussian (a la Dasgupta \cite{Dasgupta99}).
\end{itemize}

Results of type (a), which include \cite{Vempala04, KumarK10, Awasthi12, AchlioptasM05}, require the minimum separation between the Gaussians centers to have a multiplicative factor of $\operatorname{polylog} N$, where $N=\sum_{t=1}^kn_t$ is the total number of points.
This stems from a requirement that every point be closer to their Gaussian's center (in some sense) than the other centers, so that the problem of cluster recovery is well-posed.
We note that in the case of spherical Gaussians, such highly separated Gaussian components may be truncated so as to match a different data model known as the stochastic ball model, where the semidefinite program we use in this paper is already known to be tight with high probability \cite{awasthi15, igmixon15}. 

Results of type (b) tend to be specifically tailored to exploit unique properties of the Gaussian distribution, and are thus not easily generalizable to other data models.
For instance, if $x$ has distribution $\mathcal N(\mu, \sigma^2 I_m)$, then $\mathbb E(\|x-\mu \|^2) = m \sigma^2$, and concentration of measure implies that in high dimensions, most of the points will reside in a thin shell with center $\mu$ and radius about $\sqrt m \sigma$.
This sort of behavior can be exploited to cluster even concentric Gaussians as long as the covariances are sufficiently different.
However, algorithms that perform well even with no separation between centers require a sample complexity which is exponential in $k$ \cite{MV10}. 

In this paper, we provide a performance guarantee of type (b), but our approach is model-free.
In particular, we consider the $k$-means clustering objective:
\begin{alignat}{2}
\label{kmeans}
&\text{minimize}  &       &
\sum_{t=1}^k \sum_{i\in A_t} \bigg\| x_i-\frac{1}{|A_t|}\sum_{j\in A_t} x_j \bigg\|_2^2\\
\nonumber
& \text{subject to}& \quad &
\begin{aligned}[t]
A_1\cup\cdots\cup A_k&=\{1,\ldots,N\},\quad A_i\cap A_j=\emptyset\quad\forall i,j\in[k],~i\neq j
\end{aligned}
\end{alignat}
Letting $D$ denote the $N\times N$ matrix defined entrywise by $D_{ij}=\|x_i-x_j\|_2^2$, then a straightforward calculation gives the following ``lifted'' expression for the $k$-means objective: 
\begin{equation}
\label{eq.lifted kmeans}
\sum_{t=1}^k \sum_{i\in A_t} \bigg\| x_i-\frac{1}{|A_t|}\sum_{j\in A_t} x_j \bigg\|_2^2 = \frac{1}{2} \operatorname{Tr}(DX),\qquad X_{ij}=\left\{\begin{matrix} \frac{1}{|A_t|} & \text{if } i,j\in A_t \\ 
 0 & \text{otherwise} \end{matrix}\right.
\end{equation}
The matrix $X$ necessarily satisfies various convex constraints, and relaxing to such constraints leads to the following semidefinite relaxation of \eqref{kmeans}, first introduced by Peng and Wei in~\cite{Peng}:
\begin{alignat}{2}
\label{eq.kmeansSDP}
& \text{minimize}  &       & \operatorname{Tr}(DX) \\
\nonumber
& \text{subject to}& \quad & 
\begin{aligned}[t]
\operatorname{Tr}(X)&=k,~
X1=1,~
X\geq 0,~
X\succeq0
\end{aligned}
\end{alignat}
Here, $X \geq 0$ means that $X$ is entrywise nonnegative, whereas $X\succeq0$ means that $X$ is symmetric and positive semidefinite.

As mentioned earlier, this semidefinite relaxation is known to be tight for a particular data model called the stochastic ball model~\cite{nellore13, awasthi15, igmixon15}.
In this paper, we study its performance under subgaussian mixture models, which include the stochastic ball model and the Gaussian mixture model as special cases.
The SDP is not typically tight under this general model, but the optimizer can be interpreted as a denoised version of the data and can be rounded in order to produce a good estimate for the centers (and therefore produce a good clustering).

To see this, let $P$ denote the $m\times N$ matrix whose columns are the points $\{x_{t,i}\}_{i\in[n_t],t\in[k]}$.
Notice that whenever $X$ has the form \eqref{eq.lifted kmeans}, then for each $t\in[k]$, $PX$ has $|A_t|$ columns equal to the centroid of points assigned to $A_t$.
In particular, if $X$ is $k$-means-optimal, then $PX$ reports the $k$-means-optimal centroids (with appropriate multiplicities).
Next, we note that every SDP-feasible matrix $X\geq0$ satisfies $X^\top 1=X1=1$, and so $X^\top$ is a stochastic matrix, meaning each column of $PX$ is still a weighted average of columns from $P$.
Intuitively, if the SDP relaxation \eqref{eq.kmeansSDP} were close to being tight, then the SDP-optimal $X$ would make the columns of $PX$ close to the $k$-means-optimal centroids.
Empirically, this appears to be the case (see Figure~\ref{fig.relaxandround} for an illustration).
Overall, we may interpret $PX$ as a denoised version of the original data $P$, and we leverage this strengthened signal to identify good estimates for the $k$-means-optimal centroids.

\begin{figure}
\begin{center}
\includegraphics[width=0.28\textwidth]{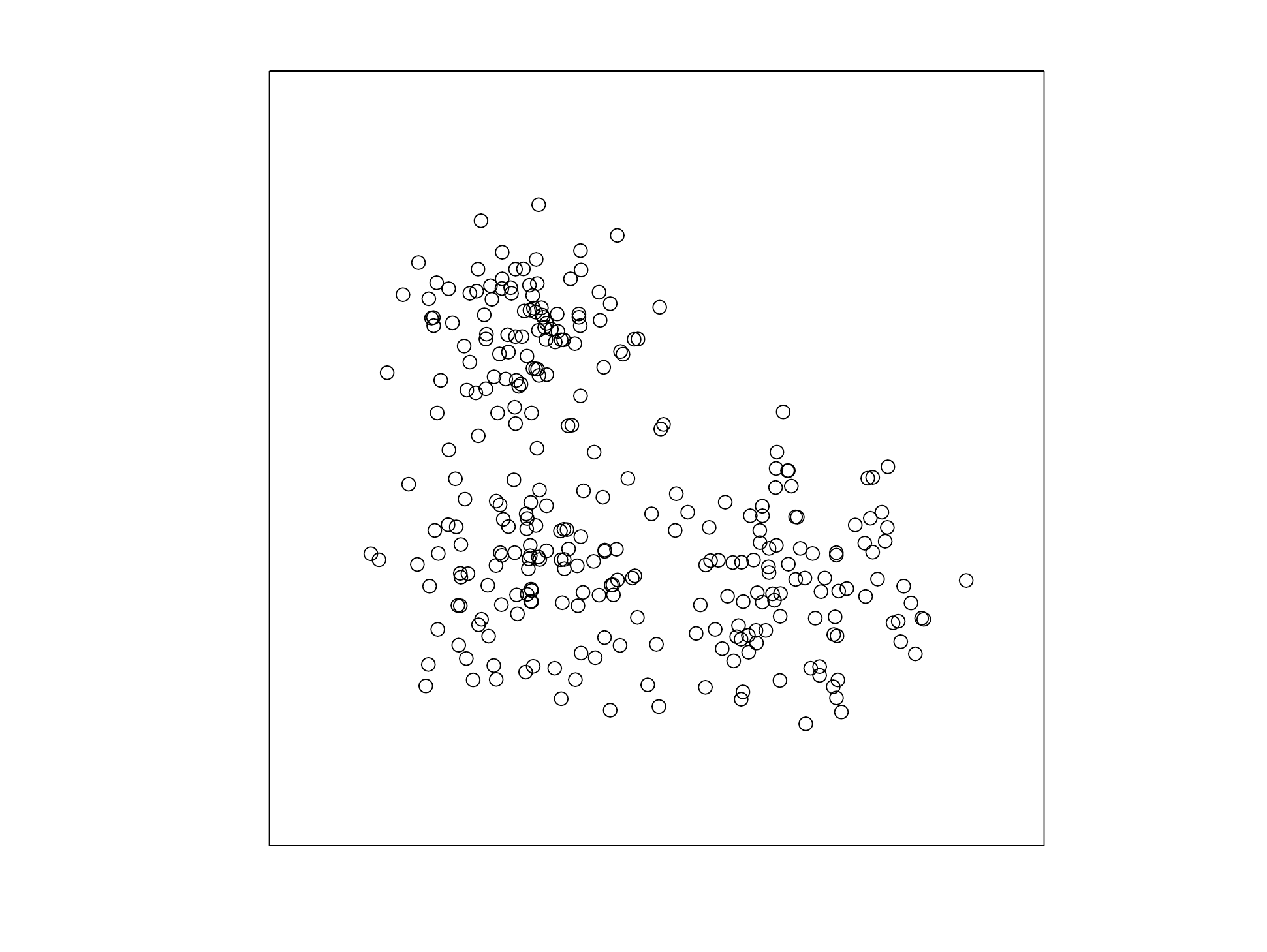}
\hspace{-30pt}
\includegraphics[width=0.28\textwidth]{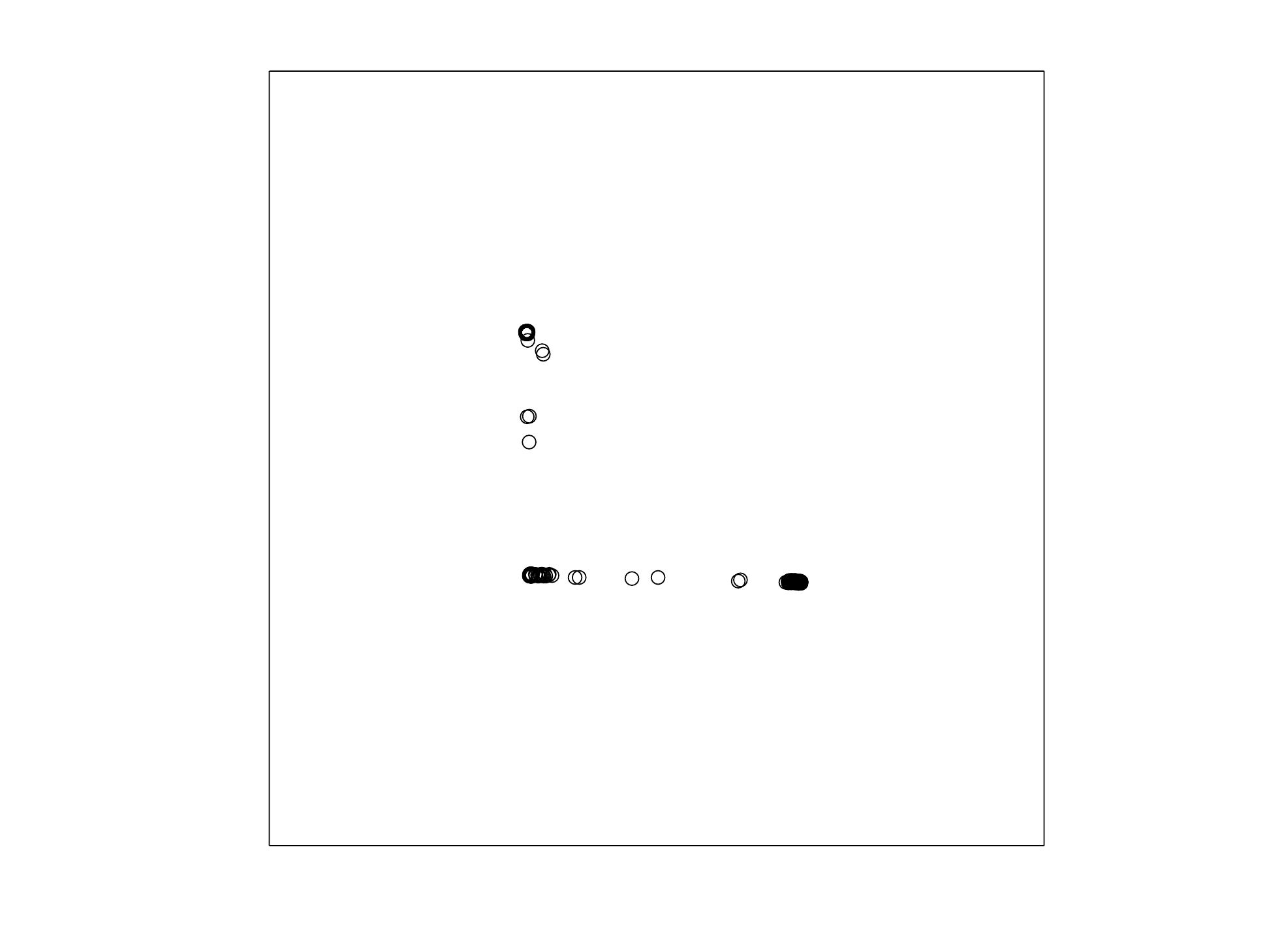}
\hspace{-30pt}
\includegraphics[width=0.28\textwidth]{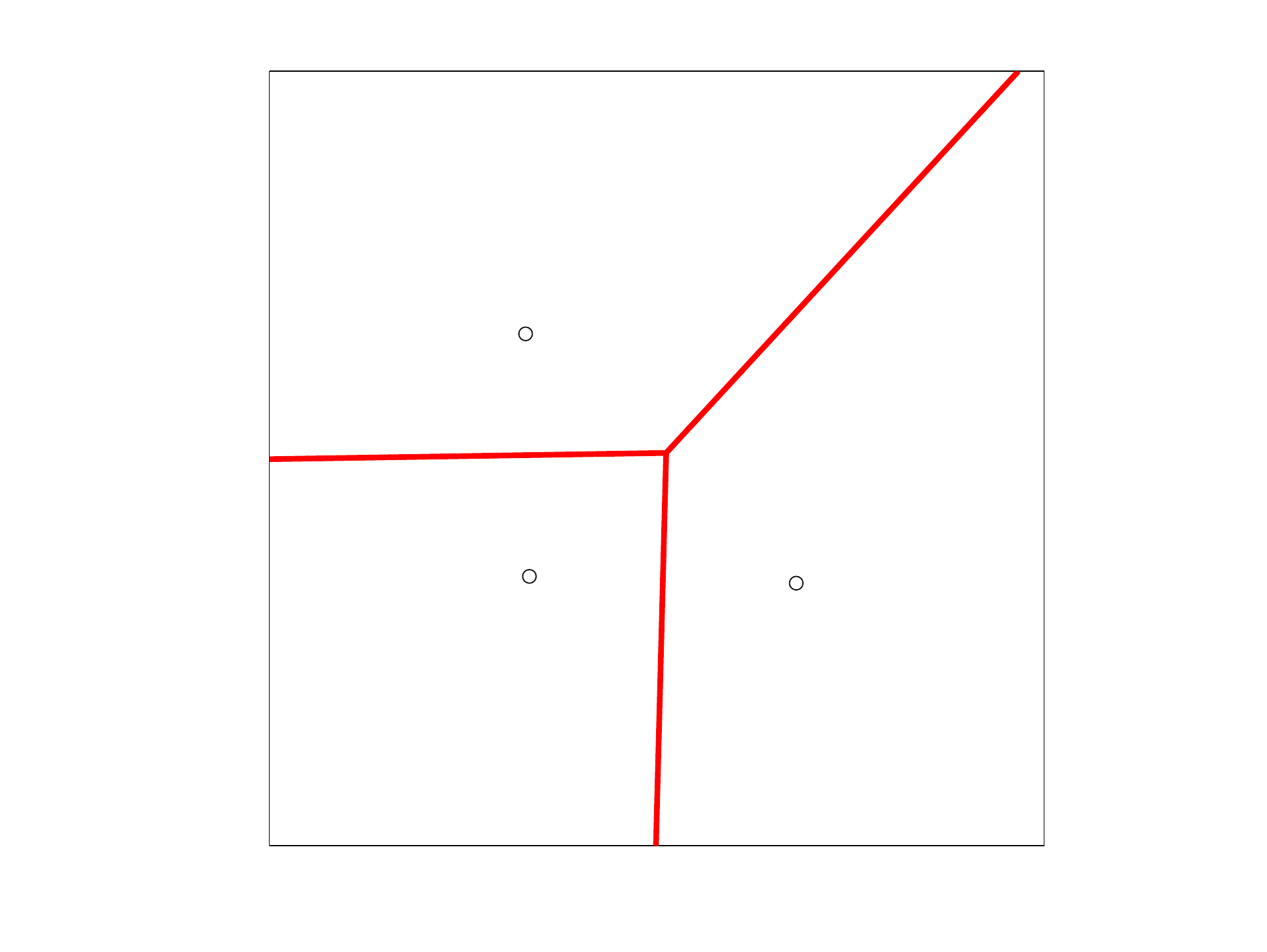}
\hspace{-30pt}
\includegraphics[width=0.28\textwidth]{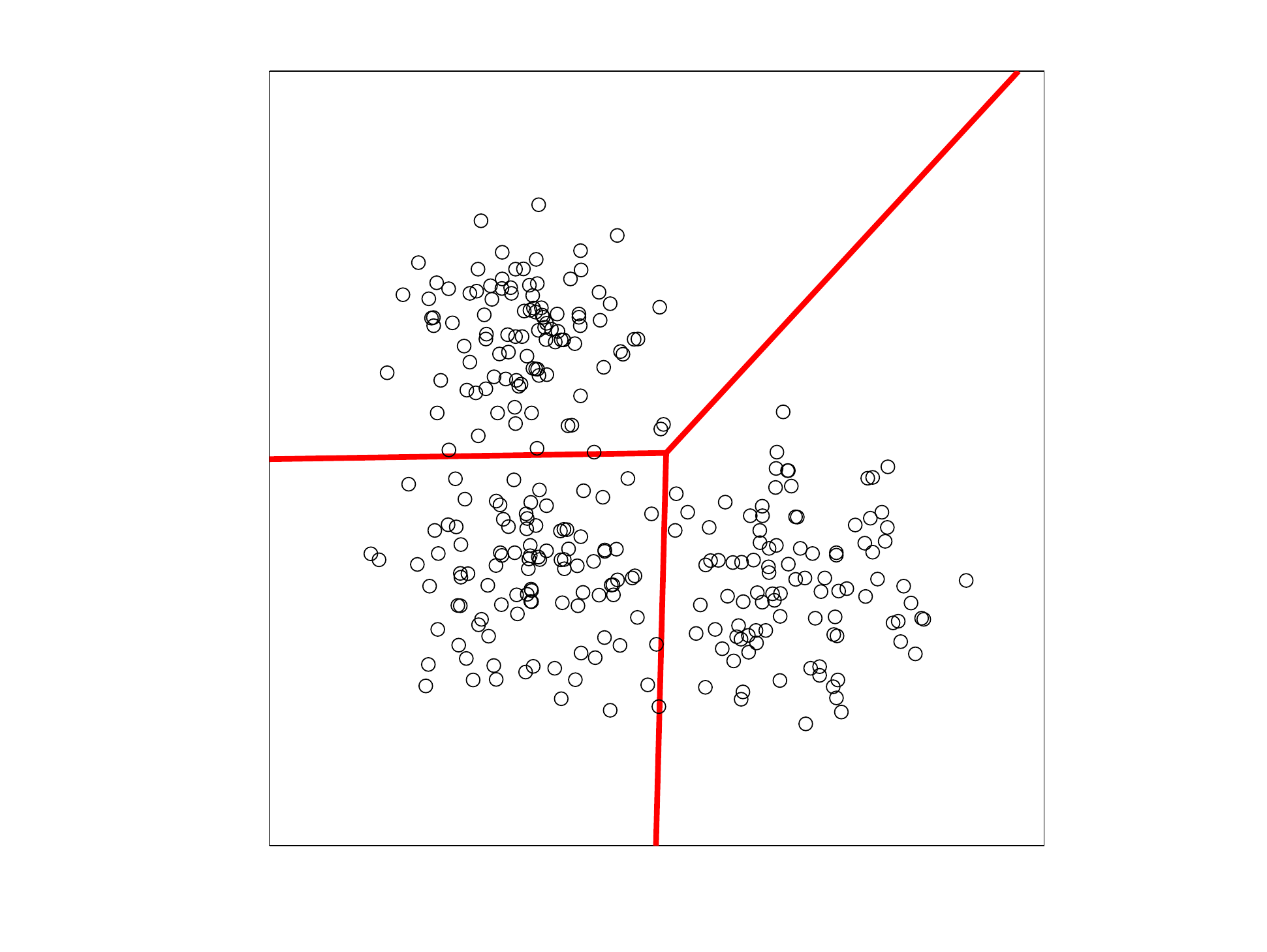}\\
\footnotesize{(a)\hspace{97pt}(b)\hspace{97pt}(c)\hspace{97pt}(d)}
\end{center}
\caption{\footnotesize{\textbf{(a)}
Draw $100$ points at random from each of three spherical Gaussians over $\mathbb{R}^2$.
These points form the columns of a $2\times300$ matrix $P$.
\textbf{(b)}
Compute the $300\times300$ distance-squared matrix $D$ from the data in (a), and solve the $k$-means semidefinite relaxation~\eqref{eq.kmeansSDP} using SDPNAL+v0.3~\cite{yang2015sdpnal+}.
(The computation takes about 16 seconds on a standard MacBook Air laptop.)
Given the optimizer $X$, compute $PX$ and plot the columns.
We interpret this as a denoised version of the original data $P$.
\textbf{(c)}
The points in (b) land in three particular locations with particularly high frequency.
Take these locations to be estimators of the centers of the original Gaussians.
\textbf{(d)}
Use the estimates for the centers in (c) to partition the original data into three subsets, thereby estimating the $k$-means-optimal partition.}
\label{fig.relaxandround}}
\end{figure} 

What follows is a summary of our relax-and-round procedure for (approximately) solving the $k$-means problem~\eqref{kmeans}:

\begin{center}
\fbox{\parbox{0.8\textwidth}{\begin{center}\parbox{0.75\textwidth}{
\textbf{Relax-and-round $k$-means clustering procedure.}\\
Given and $m\times N$ data matrix $P=[x_1\cdots x_N]$, do:
\begin{itemize}
\item[(i)]
Compute distance-squared matrix $D$ defined by $D_{ij}=\|x_i-x_j\|_2^2$.
\item[(ii)]
Solve $k$-means semidefinite program \eqref{eq.kmeansSDP}, resulting in optimizer $X$.
\item[(iii)]
Cluster the columns of the denoised data matrix $PX$.
\end{itemize}
}\end{center}}}
\end{center}

For step (iii), we find there tends to be $k$ vectors that appear as columns in $PX$ with particularly high frequency, and so we are inclined to use these as estimators for the $k$-mean-optimal centroids (see Figure~\ref{fig.relaxandround}, for example).
Running Lloyd's algorithm for step (iii) also works well in practice.
To obtain theoretical guarantees, we instead find the $k$ columns of $PX$ for which the unit balls of a certain radius centered at these points in $\mathbb{R}^m$ contain the most columns of $PX$ (see Theorem~\ref{thm.rounding} for more details).
An implementation of our procedure is available on GitHut~\cite{kmeanssdp2016}.

\textbf{Our contribution.} 
We study performance guarantees for the $k$-means semidefinite relaxation \eqref{eq.kmeansSDP} when the point cloud is drawn from a subgaussian mixture model.
Adapting ideas from Gu\'{e}don and Vershynin \cite{guedon14}, we obtain approximation guarantees comparable with the state of the art for learning mixtures of Gaussians despite the fact that our algorithm is a generic $k$-means solver and uses no model assumptions.
To the best of our knowledge, no convex relaxation has been used before to provide theoretical guarantees for clustering mixtures of Gaussians.
We also provide conditional lower bounds on how well a $k$-means solver can approximate the centers of Gaussians.

\textbf{Organization of this paper.} 
In Section~\ref{sec:summary}, we present a summary of our results and give a high-level explanation of our proof techniques.
We also illustrate the performance of our relax-and-round algorithm on the MNIST dataset of handwritten digits.
Our theoretical results consist of an approximation theorem for the SDP (proved in Section \ref{section.proof of theorem}), a denoising consequence of the approximation (explained in Section~\ref{sec:denoising}), and a rounding step (presented in Section~\ref{sec:rounding}).
We also study the fundamental limits for estimating Gaussian centers by $k$-means clustering (see Section~\ref{sec:lower_bounds}).

\section{Summary of results}\label{sec:summary}

This paper has two main results.
First, we present a relax-and-round algorithm for $k$-means clustering that well-approximates the centers of sufficiently separated subgaussians.
Second, we provide a conditional result on the minimum separation necessary for Gaussian center approximation by $k$-means clustering.
The first result establishes that the $k$-means SDP~\eqref{eq.kmeansSDP} performs well with noisy data (despite not being tight), and the second result helps to illustrate how sharp our analysis is.
This section discusses these results, and then applies our algorithm to the MNIST dataset~\cite{lecun2010mnist}.

\subsection{Performance guarantee for the $k$-means SDP}
Our relax-and-round performance guarantee consists of three steps.

\textbf{Step 1: Approximation.}
We adapt an approach used by Gu\'edon and Vershynin to provide approximation guarantees for a certain semidefinite program under the stochastic block model for graph clustering \cite{guedon14}.

Given the points $x_{t,1},\ldots,x_{t,n_t}$ drawn independently from $\mathcal D_t$, consider the squared-distance matrix $D$ and the corresponding minimizer $X_D$ of the SDP \eqref{eq.kmeansSDP}.
We first construct a ``reference" matrix $R$ such that the SDP \eqref{eq.kmeansSDP} is tight when $D = R$ with optimizer $X_R$. 
To this end, take $\Delta_{ab}:=\|\gamma_a-\gamma_b\|_2$, let $X_D$ denote the minimizer of \eqref{eq.kmeansSDP}, and let $X_R$ denote the minimizer of \eqref{eq.kmeansSDP} when $D$ is replaced by the reference matrix $R$ defined as 
\begin{equation}
\label{eq:reference}
(R_{ab})_{ij} := \xi + \Delta_{ab}^2/2 +  \max\left\{ 0, \Delta_{ab}^2/2 + 2 \langle r_{a,i}-r_{b,j}, \gamma_a-\gamma_b \rangle\right\}
\end{equation}
where $r_{t,i}:=x_{t,i}-\gamma_t$, and $\xi>0$ is a parameter to be chosen later. Indeed, this choice of reference is quite technical, as an artifact of the entries in $D$ being statistically dependent. Despite its lack of beauty, our choice of reference enjoys the following important property:

\begin{lemma}
\label{lem.XR}
Let $1_a\in\mathbb{R}^N$ denote the indicator function for the indices $i$ corresponding to points $x_i$ drawn from the $a$th subgaussian.
If $\gamma_a\neq\gamma_b$ whenever $a\neq b$, then $X_R=\sum_{t=1}^k(1/n_t)1_t1_t^\top$.
\end{lemma}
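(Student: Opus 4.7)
My approach is to decompose $R$ into a piece that is constant on the SDP-feasible set plus two pieces that are entrywise nonnegative and vanish on the cluster-block diagonal. Since $\Delta_{aa}=0$ and $\gamma_a-\gamma_a=0$ automatically kill every contribution to $(R_{aa})_{ij}$ beyond the leading $\xi$, one reads off
$R \;=\; \xi\,1 1^\top + M + H,$
where $M$ is the symmetric matrix with entries $M_{ij} = \Delta_{t(i),t(j)}^2/2$ (writing $t(i)\in[k]$ for the cluster of the $i$th point) and $H$ is the matrix of max-terms in the definition of $R$. By construction $H$ is entrywise nonnegative, and it also vanishes on the block diagonal because when $t(i)=t(j)=a$ one has $\Delta_{aa}^2/2=0$ and $\gamma_a-\gamma_a=0$, so the argument of the max is zero.

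Given this decomposition, optimality of $X^\star := \sum_t (1/n_t)\,1_t 1_t^\top$ is nearly immediate. Feasibility of $X^\star$ (PSD, entrywise nonnegative, row-stochastic, trace $k$) is routine. For any feasible $X$, the constraint $X1=1$ yields $\operatorname{Tr}(\xi\,1 1^\top X) = \xi N$, a constant; and $X,M,H\ge 0$ entrywise give $\operatorname{Tr}(MX),\operatorname{Tr}(HX)\ge 0$. Hence $\operatorname{Tr}(RX)\ge \xi N$. At $X=X^\star$ the entries of $X^\star$ vanish off the block diagonal while $M$ and $H$ vanish on it, so $\operatorname{Tr}(MX^\star)=\operatorname{Tr}(HX^\star)=0$ and thus $\operatorname{Tr}(RX^\star)=\xi N$, proving that $X^\star$ attains the minimum.

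For uniqueness, suppose $X$ is any other minimizer. Then $\operatorname{Tr}(MX)=0$, and entrywise nonnegativity forces $M_{ij}X_{ij}=0$ for every $i,j$. The hypothesis $\gamma_a\ne\gamma_b$ for $a\ne b$ makes $M_{ij}>0$ whenever the indices lie in different cluster blocks, so $X$ must be block-diagonal with respect to the clusters. Each block $X_{tt}$ is then PSD, entrywise nonnegative, and satisfies $X_{tt}1=1$. A short Cauchy--Schwarz computation using $X_{tt,ij}^2\le X_{tt,ii}X_{tt,jj}$ together with the row-sum condition yields the per-block bound $\operatorname{Tr}(X_{tt})\ge 1$; the global constraint $\sum_t \operatorname{Tr}(X_{tt})=k$ forces all these inequalities to be tight, and the equality case in Cauchy--Schwarz pins down $X_{tt,ij}=1/n_t$ for all $i,j$, giving $X=X^\star$.

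I expect the uniqueness step to be the only slightly delicate piece; the optimality is essentially free once the decomposition of $R$ is written down. The real content of the lemma is that the $\Delta_{ab}^2/2$ term sitting outside the max in the definition of $R$ is tuned precisely so that $M$ is strictly positive off the block diagonal --- which is exactly what enforces the block structure of any minimizer.
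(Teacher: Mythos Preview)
Your proposal is correct and follows essentially the same approach as the paper: both use that $R_{ij}\ge \xi$ entrywise with equality on the cluster-block diagonal, so $\operatorname{Tr}(RX)\ge \xi N$ for every feasible $X$, attained at $X^\star$, and the strict inequality $R_{ij}>\xi$ off the block diagonal (equivalently $M_{ij}>0$ there) forces any minimizer to be block-diagonal. The only difference is that the paper simply asserts ``the other constraints on $X$ then force $X_R$ to have the claimed form,'' whereas you spell out the per-block argument; note that a quicker route than your Cauchy--Schwarz computation is to observe that each block $X_{tt}$ is PSD with $X_{tt}1=1$, so $1$ is an eigenvalue and hence $\operatorname{Tr}(X_{tt})\ge 1$, with equality forcing rank one and thus $X_{tt}=(1/n_t)11^\top$.
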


\begin{proof}
Let $X$ be feasible for the the SDP \eqref{eq.kmeansSDP}.  Replacing $D$ with $R$ in \eqref{eq.kmeansSDP}, we may use the SDP constraints $X1=1$ and $X \geq 0$ to obtain the bound
\begin{align*}
\operatorname{Tr}(RX)
&=\sum_{i=1}^N\sum_{j=1}^N R_{ij} X_{ij} \geq \sum_{i=1}^N\sum_{j=1}^N \xi X_{ij} =  \sum_{i=1}^N\xi \sum_{j=1}^N  X_{ij} =  N \xi =\operatorname{Tr}(RX_R)
\end{align*}
Furthermore, since $\gamma_a\neq\gamma_b$ whenever $a\neq b$, and since $X\geq0$, we have that equality occurs precisely for the $X$ such that $(X_{ab})_{ij}$ equals zero whenever $a\neq b$.
The other constraints on $X$ then force $X_R$ to have the claimed form (i.e., $X_R$ is the unique minimizer).
\end{proof}

Now that we know that $X_R$ is the solution we want, it remains to demonstrate regularity in the sense that $X_D \approx X_R$ provided the subgaussian centers are sufficiently separated. For this, we use the following scheme:
\begin{itemize}
\item If $\langle R, X_D \rangle \approx \langle R, X_R \rangle $ then $\|X_D-X_R\|_F^2$ is small (Lemma \ref{lem.bound on distance}).
\item If $D\approx R$ (in some specific sense) then $\langle R, X_D \rangle \approx \langle R, X_R \rangle $ (Lemmas \ref{lem.trace to delta} and \ref{lem.bounds on delta}).
\item If the centers are separated by $O(k\sigma_{\max})$, then $D\approx R$.
\end{itemize}
What follows is the result of this analysis:

\begin{theorem}
\label{thm.distance typically small}
Fix $\epsilon, \eta > 0$.    There exist universal constants $C, c_1,c_2, c_3$ such that if 
$$\alpha = n_{\max}/n_{\min} \lesssim k \lesssim m \quad \text{and} \quad N > \max \{ c_1 m, c_2 \log(2/\eta), \log(c_3/\eta) \},$$
then
$\|X_D-X_R\|^2_\mathrm{F}\leq \epsilon$ with probability $\geq1-2\eta$ provided $$\Delta_\mathrm{min}^2\geq \frac{C}{\epsilon}k^2 \alpha \sigma_{\max}^2$$
where $\Delta_\mathrm{min} = \min_{a \neq b} \| \gamma_a - \gamma_b \|_2$ is the minimal cluster center separation.
\end{theorem}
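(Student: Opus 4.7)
The plan is to follow the three-step roadmap listed just before the theorem. First, Lemma~\ref{lem.bound on distance} should convert the target Frobenius bound into a bound on a linear functional against $R$, via a strong-convexity-style estimate
$$\|X_D - X_R\|_{\mathrm F}^2 \;\lesssim\; \frac{1}{\Delta_{\min}^2}\,\bigl\langle R, X_D - X_R\bigr\rangle.$$
This uses the structure of \eqref{eq:reference}: $R$ is flat at height $\xi$ on each diagonal cluster block and elevated by at least $\Delta_{ab}^2/2 \geq \Delta_{\min}^2/2$ on every off-diagonal block, so any SDP-feasible $X_D$ pays a cost proportional to $\Delta_{\min}^2$ for placing mass off the correct blocks; the row-sum constraint $X_D 1 = 1$ then propagates this off-block control into diagonal-block control, since deviations of $X_D$ from $X_R = \sum_t (1/n_t) 1_t 1_t^\top$ must balance out.

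Second, the optimality of $X_D$ for~\eqref{eq.kmeansSDP} and the feasibility of $X_R$ give $\langle D, X_D - X_R\rangle \leq 0$, so
$$\bigl\langle R, X_D - X_R\bigr\rangle \;\leq\; \bigl\langle R - D, X_D - X_R\bigr\rangle.$$
Lemma~\ref{lem.trace to delta} then bounds this quantity by a norm of $R - D$ that is dual to the Peng--Wei constraint body. In the Gu\'edon--Vershynin spirit, because $X_D, X_R$ are PSD with trace $k$, the nuclear norm of $X_D - X_R$ is at most $2k$; the entrywise constraints $X \geq 0$ and $X 1 = 1$ then let one replace the operator norm by a weaker ``cut-norm''-type quantity that is sharper against the structured noise $D-R$ that actually appears.

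Third, Lemma~\ref{lem.bounds on delta} controls $\|R - D\|$ probabilistically. The asymmetric-looking max-truncation in~\eqref{eq:reference} is engineered precisely so that the entries of $D - R$ are mean-zero polynomial expressions in the subgaussian residuals $r_{t,i} := x_{t,i} - \gamma_t$. An $\varepsilon$-net argument together with Hanson--Wright-type tail bounds for quadratic forms in subgaussians then delivers the needed norm bound with probability at least $1 - 2\eta$, under the sample-size assumption $N \gtrsim m + \log(1/\eta)$. Chaining the three inequalities yields $\|X_D - X_R\|_{\mathrm F}^2 \lesssim k^2 \alpha \sigma_{\max}^2 / \Delta_{\min}^2$, and forcing this to be at most $\epsilon$ recovers the stated threshold.

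The main obstacle I anticipate is the concentration step together with its interaction with the dual norm chosen in Lemma~\ref{lem.trace to delta}. The entries of $D$ are strongly coupled (each $D_{ij}$ is a quadratic form in the randomness from two possibly distinct clusters), so the unusual form of $R$ in~\eqref{eq:reference} is exactly what makes $D - R$ centered and amenable to subgaussian concentration. The delicate point is to keep the final separation requirement scaling as $k^2 \alpha \sigma_{\max}^2$ (so $\Delta_{\min}$ grows only linearly in $k$) rather than picking up an extra $\sqrt m$ or extra power of $k$; this requires exploiting the entrywise-nonnegativity constraint to swap operator norm for the weaker Grothendieck-type dual norm, which is the heart of the Gu\'edon--Vershynin adaptation to this setting.
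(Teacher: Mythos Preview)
Your first two steps track the paper's Lemmas~\ref{lem.bound on distance} and~\ref{lem.trace to delta} reasonably well (though Lemma~\ref{lem.bound on distance} carries a $1/n_{\min}$ factor you dropped, which is what later produces the $\alpha=n_{\max}/n_{\min}$ dependence). The real gap is in your third step: you have misread the role of the max-truncation in~\eqref{eq:reference}, and the concentration strategy you propose would not deliver the stated $k^2$ scaling.

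The truncation is \emph{not} there to center $D-R$; the entries of $D-R$ are not mean zero. On diagonal blocks $(D-R)_{aa,ij}=\|r_{a,i}-r_{a,j}\|_2^2-\xi$, which has strictly positive mean of order $m\sigma_{\max}^2$. The paper's Lemma~\ref{lem.trace to delta} first passes to the projection $\tilde D-\tilde R=P_{1^\perp}(D-R)P_{1^\perp}$, which kills the rank-one pieces $\mu 1^\top+1\mu^\top$ arising from $\|r_{a,i}\|_2^2+\|r_{b,j}\|_2^2$. What remains then \emph{splits} as $P_{1^\perp}G^\top G P_{1^\perp}+P_{1^\perp}FP_{1^\perp}$, where $G$ is the matrix of residuals and $(F_{ab})_{ij}=\min\{0,\Delta_{ab}^2/2+2\langle r_{a,i}-r_{b,j},\gamma_a-\gamma_b\rangle\}$. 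The two pieces are controlled by \emph{different} dual bounds from Lemma~\ref{lem.bounds on delta}: the Gram piece via $\min\{k,m\}\|G^\top G\|_{2\to2}$ (using Vershynin's operator-norm theorem for matrices with independent subgaussian rows, not Hanson--Wright or an $\varepsilon$-net), and the $F$ piece via $\|F\|_{1,\infty}$, which is small precisely because the truncation event $\{2\langle r_{a,i}-r_{b,j},\gamma_a-\gamma_b\rangle<-\Delta_{ab}^2/2\}$ is subgaussian-rare once $\Delta_{\min}^2\gtrsim k\sigma_{\max}^2$. So the purpose of the truncation is to make $F$ \emph{typically zero}, not to make $D-R$ centered. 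A single Hanson--Wright/$\varepsilon$-net pass on $D-R$ without this decomposition would be dominated by the uncentered $\|r\|^2$ terms (contributing $m\sigma_{\max}^2$) and would not recover the $k^2\alpha\sigma_{\max}^2$ threshold.
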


See Section~\ref{section.proof of theorem} for the proof. Note that if we remove the assumption $\alpha \lesssim k \lesssim m$, we obtain the result $\Delta_\mathrm{min}^2\geq \frac{C}{\epsilon}(\min\{k,m\} + \alpha) k \alpha \sigma_{\max}^2$.

\textbf{Step 2: Denoising.} Suppose we solve the SDP \eqref{eq.kmeansSDP} for an instance of the subgaussian mixture model where $\Delta_\mathrm{min}$ is sufficiently large.
Then Theorem~\ref{thm.distance typically small} ensures that the solution $X_D$ is close to the ground truth.
At this point, it remains to convert $X_D$ into an estimate for the centers $\{\gamma_t\}_{t\in[k]}$.
Let $P$ denote the $m\times N$ matrix whose $(a,i)$th column is $x_{a,i}$.
Then $PX_R$ is an $m\times N$ matrix whose $(a,i)$th column is $\tilde \gamma_a$, the centroid of the $a$th cluster, which converges to $\gamma_a$ as $N\rightarrow\infty$, (and does not change when $i$ varies, for a fixed $a$), and so one might expect $PX_D$ to have its columns be close to the $\gamma_t$'s.
In fact, we can interpret the columns of  $PX_D$ as a denoised version of the points (see Figure~\ref{fig.relaxandround}).

To illustrate this idea, assume the points $\{x_{a,i}\}_{i\in[n]}$ come from $\mathcal N(\gamma_a, \sigma^2 I_m)$ in $\mathbb R^m$ for each $a\in[k]$. 
Then we have \begin{equation}
\mathbb{E}\bigg[\frac{1}{N}\sum_{a=1}^k\sum_{i=1}^n\|x_{a,i}-\gamma_a\|_2^2\bigg]
=m\sigma^2.
\end{equation}

Letting $c_{a,i}$ denote the $(a,i)$th column of $PX_D$ (i.e., the $i$th estimate of $\gamma_a$), in Section \ref{sec:denoising} we obtain the following denoising result:

\begin{corollary} \label{cor:denoising}
If $k\sigma\lesssim\Delta_\mathrm{min}\leq\Delta_\mathrm{max}\lesssim K\sigma$, then
\[
\displaystyle{\frac{1}{N}\sum_{a=1}^k\sum_{i=1}^n\|c_{a,i}-\tilde\gamma_a\|_2^2\lesssim K^2\sigma^2}
\]
with high probability as $n\rightarrow\infty$.
\end{corollary}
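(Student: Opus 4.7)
The key reformulation is that
$$\sum_{a=1}^{k}\sum_{i=1}^{n}\|c_{a,i}-\tilde\gamma_a\|_2^2 \;=\; \|P(X_D-X_R)\|_\mathrm{F}^2,$$
since by Lemma~\ref{lem.XR} the matrix $PX_R$ has $\tilde\gamma_a$ in its $(a,i)$th column, while $PX_D$ has $c_{a,i}$ there by definition. The task thus reduces to bounding the Frobenius norm of $P(X_D-X_R)$ from above.

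The plan is to pass to the operator norm after recentering. Since both $X_D\mathbf{1}=\mathbf{1}$ and $X_R\mathbf{1}=\mathbf{1}$, their difference satisfies $(X_D-X_R)\mathbf{1}=0$, and hence for \emph{any} $\mu\in\mathbb{R}^m$,
$$P(X_D-X_R)=(P-\mu\mathbf{1}^\top)(X_D-X_R).$$
Taking $\mu=\gamma_1$ and writing $P-\gamma_1\mathbf{1}^\top=\Gamma+E$, where $\Gamma$ has $(a,i)$th column $\gamma_a-\gamma_1$ and $E$ has $(a,i)$th column $x_{a,i}-\gamma_a\sim\mathcal N(0,\sigma^2 I_m)$, submultiplicativity yields
$$\|P(X_D-X_R)\|_\mathrm{F}\;\le\;\bigl(\|\Gamma\|_\mathrm{op}+\|E\|_\mathrm{op}\bigr)\,\|X_D-X_R\|_\mathrm{F}.$$

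Each of the three factors is then controlled separately. Using the upper-separation hypothesis $\Delta_\mathrm{max}\lesssim K\sigma$,
$$\|\Gamma\|_\mathrm{op}\le\|\Gamma\|_\mathrm{F}=\sqrt{n\textstyle\sum_{a=1}^{k}\|\gamma_a-\gamma_1\|_2^2}\le\sqrt{nk}\,\Delta_\mathrm{max}\lesssim\sqrt{N}\,K\sigma.$$
Standard operator-norm concentration for a Gaussian matrix with i.i.d.\ $\mathcal N(0,\sigma^2)$ entries gives $\|E\|_\mathrm{op}\lesssim\sigma(\sqrt{m}+\sqrt{N})\lesssim\sigma\sqrt{N}$ with probability $1-o(1)$ as $n\to\infty$, using the regime $m\lesssim k\lesssim n$ from Theorem~\ref{thm.distance typically small}. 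Finally, since $\alpha=n_\mathrm{max}/n_\mathrm{min}=1$ and $\Delta_\mathrm{min}\gtrsim k\sigma$, we apply Theorem~\ref{thm.distance typically small} with $\epsilon$ equal to a fixed absolute constant to conclude $\|X_D-X_R\|_\mathrm{F}^2\lesssim 1$ with high probability.

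Multiplying these three bounds and dividing by $N$ yields
$$\frac{1}{N}\sum_{a=1}^{k}\sum_{i=1}^{n}\|c_{a,i}-\tilde\gamma_a\|_2^2 \;\lesssim\; \frac{1}{N}\bigl(\sqrt{N}\,K\sigma+\sqrt{N}\,\sigma\bigr)^2\cdot 1 \;\lesssim\; K^2\sigma^2,$$
absorbing the lower-order $\sigma^2$ into $K^2\sigma^2$ since $K\gtrsim k\gtrsim 1$. The main obstacle is the operator-norm bound on the \emph{recentered} data: the algebraic identity $(X_D-X_R)\mathbf{1}=0$ is essential, because without it the bound would feature $\|P\|_\mathrm{op}$ and thereby depend on the (potentially huge) norms of the centers $\gamma_a$ themselves rather than on the separations between them. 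After this observation, everything else is routine, consisting of a Gaussian matrix tail bound together with a union bound pairing it with the conclusion of Theorem~\ref{thm.distance typically small}.
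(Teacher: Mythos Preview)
Your argument is correct and follows essentially the same route as the paper. The paper first proves the intermediate Theorem~\ref{thm.denoise} (bounding the mean squared deviation by $\|\Gamma\|_{2\to2}^2\Delta_\mathrm{min}^{-2}\cdot k\sigma^2$) and then specializes via $\|\Gamma\|_{2\to2}^2\leq\|\Gamma\|_\mathrm{F}^2\leq k\Delta_\mathrm{max}^2$; you simply inline that intermediate step. Both arguments start from $\sum_{a,i}\|c_{a,i}-\tilde\gamma_a\|_2^2=\|P(X_D-X_R)\|_\mathrm{F}^2$, recenter $P$, split into a deterministic ``centers'' part plus Gaussian noise, bound each operator norm, and invoke Theorem~\ref{thm.distance typically small}. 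The only cosmetic differences are that the paper recenters by assuming (WLOG by translation) $\sum_a\tilde\gamma_a=0$ and bounds $\|P\|_{2\to2}$ directly, whereas you use the cleaner algebraic identity $(X_D-X_R)\mathbf{1}=0$ and recenter at $\gamma_1$; and the paper keeps track of the sharper bound $\|X_D-X_R\|_\mathrm{F}^2\lesssim k^2\sigma^2/\Delta_\mathrm{min}^2$, which is why Theorem~\ref{thm.denoise} only needs $\sigma\lesssim\Delta_\mathrm{min}/\sqrt{k}$ rather than $\sigma\lesssim\Delta_\mathrm{min}/k$.

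One small slip: you write ``the regime $m\lesssim k\lesssim n$ from Theorem~\ref{thm.distance typically small},'' but that theorem actually assumes $k\lesssim m$. The bound $\sqrt{m}\lesssim\sqrt{N}$ you need still follows, from the hypothesis $N>c_1 m$ in the same theorem.
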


Note that Corollary \ref{cor:denoising} guarantees denoising in the regime $K\ll \sqrt m$.
This is a corollary of a more technical result (Theorem~\ref{thm.denoise}), which guarantees denoising for certain configurations of subgaussians (e.g., when the $\gamma_t$'s are vertices of a regular simplex) in the regime $k\ll m$.

At this point, we comment that one might expect this level of denoising from principal component analysis (PCA) when the mixture of subgaussians is sufficiently nice.
To see this, suppose we have spherical Gaussians of equal entrywise variance $\sigma^2$ centered at vertices of a regular simplex.
Then in the large-sample limit, we expect PCA to approach the $(k-1)$-dimensional affine subspace that contains the $k$ centers.
Projecting onto this affine subspace will not change the variance of any Gaussian in any of the principal components, and so one expects the mean squared deviation of the projected points from their respective Gaussian centers to be $(k-1)\sigma^2$.

By contrast, we find that in practice, the SDP denoises substantially more than PCA does.
For example, Figures~\ref{fig.relaxandround} and~\ref{figure.mnist} illustrate cases in which PCA would not change the data, since the data already lies in $(k-1)$-dimensional space, and yet the SDP considerably enhances the signal.
In fact, we observe empirically that the matrix $X_D$ has low rank and that $PX_D$ has repeated columns.  
This doesn't come as a complete surprise, considering SDP optimizers are known to exhibit low rank~\cite{shapiro1982rank,barvinok1995problems,pataki1998rank}.
Still, we observe that the optimizer tends to have rank $O(k)$ when clustering points from the mixture model.
This is not predicted by existing bounds, and we did not leverage this feature in our analysis, though it certainly warrants further investigation.

\textbf{Step 3: Rounding.}
In Section \ref{sec:rounding}, we present a rounding scheme that provides a clustering of the original data from the denoised results of the SDP (Theorem \ref{thm.rounding}).  In general, the cost of rounding is a factor of $k$ in the average squared deviation of our estimates. Under the same hypothesis as Corollary~\ref{cor:denoising}, we have that there exists a permutation $\pi$ on $\{1,\ldots, k\}$ such that
\begin{equation}
\label{eq.rounding bound}
\displaystyle{\frac{1}{k}\sum_{i=1}^k\| v_i -\tilde\gamma_{\pi(i)}\|_2^2\lesssim k K^2\sigma^2},
\end{equation}
where $v_i$ is what our algorithm chooses as the $i$th center estimate. 
Much like the denoising portion, we also have a more technical result that allows one to replace the right-hand side of \eqref{eq.rounding bound} with $k^2\sigma^2$ for sufficiently nice configurations of subgaussians.
As such, we can estimate Gaussian centers with mean squared error $O(k^2\sigma^2)$ provided the centers have pairwise distance $\Omega(k\sigma)$.
This contrasts with the seminal work of Dasgupta~\cite{Dasgupta99}, which gives mean squared error $O(m\sigma^2)$ when the pairwise distances are $\Omega(\sqrt{m}\sigma)$.
As such, our guarantee replaces dimensionality dependence with model order--dependence, which is an improvement to the state of the art in the regime $k\ll \sqrt{m}$.
In the next section, we indicate that model order--dependence cannot be completely removed when using $k$-means to estimate the centers.

Before concluding this section, we want to clarify the nature of our approximation guarantee \eqref{eq.rounding bound}.
Since centroids correspond to a partition of Euclidean space, our guarantee says something about how ``close'' our $k$-means partition is to the ``true'' partition.
By contrast, the usual approximation guarantees for relax-and-round algorithms compare values of the objective function (e.g., the $k$-means value of the algorithm's output is within a factor of 2 of minimum).
Also, the latter sort of optimal value--based approximation guarantee cannot be used to produce the sort of optimizer-based guarantee we want.
To illustrate this, imagine a relax-and-round algorithm for $k$-means that produces a near-optimal partition with $k=2$ for data coming from a single spherical Gaussian.
We expect every subspace of co-dimension 1 to separate the data into a near-optimal partition, but the partitions are very different from each other when the dimension $m\geq2$, and so a guarantee of the form \eqref{eq.rounding bound} will not hold.

\subsection{Fundamental limits of $k$-means clustering} \label{sec:lower_bounds}

In Section \ref{sec:rounding}, we provide a rounding scheme that, when applied to the output of the $k$-means SDP, produces estimates of the subgaussian centers.
But how good is our guarantee? 
Observe the following two issues:
(i) The amount of tolerable noise $\sigma$ and our bound on the error $\max_i\|v_i-\tilde\gamma_{\pi(i)}\|_2$ both depend on $k$.
(ii) Our bound on the error does not vanish with $N$.

In this section, we give a conditional result that these issues are actually artifacts of $k$-means; that is, both of these would arise if one were to estimate the Gaussian centers with the $k$-means-optimal centroids (though these centroids might be computationally difficult to obtain).
The main trick in our argument is that, in some cases, so-called ``Voronoi means'' appear to serve as a good a proxy for the $k$-means-optimal centroids.
This trick is useful because the Voronoi means are much easier to analyze.
We start by providing some motivation for the Voronoi means.

Given $\mathcal{X}=\{x_i\}_{i=1}^N\subseteq\mathbb{R}^m$, let $A^{(\mathcal{X})}_1\sqcup\cdots\sqcup A^{(\mathcal{X})}_k=\{1,\ldots,N\}$ denote any minimizer of the $k$-means objective
\[
\sum_{t=1}^k\sum_{i\in A_t}\bigg\|x_i-\frac{1}{|A_t|}\sum_{j\in A_t}x_j\bigg\|_2^2,
\]
and define the \textbf{$k$-means-optimal centroids} by
\[
c^{(\mathcal{X})}_t:=\frac{1}{|A^{(\mathcal{X})}_t|}\sum_{j\in A^{(\mathcal{X})}_t}x_j.
\]
(Note that the $k$-means minimizer is unique for generic $\mathcal{X}$.)
Given $\Gamma=\{\gamma_t\}_{t=1}^k$, then for each $\gamma_a$, consider the Voronoi cell
\[
V_a^{(\Gamma)}:=\Big\{x\in\mathbb{R}^m:\|x-\gamma_a\|_2<\|x-\gamma_b\|_2~\forall b\neq a\Big\}.
\]
Given a probability distribution $\mathcal{D}$ over $\mathbb{R}^m$, define the \textbf{Voronoi means} by
\[
\mu_t^{(\Gamma,\mathcal{D})}:=\mathop{\mathbb{E}}_{X\sim\mathcal{D}}\big[X\big|X\in V_t^{(\Gamma)}\big].
\]
Finally, we say $\Gamma\subseteq\mathbb{R}^m$ is a \textbf{stable isogon} if
\begin{itemize}
\item[(si1)]
$|\Gamma|>1$,
\item[(si2)]
the symmetry group $G\leq O(m)$ of $\Gamma$ acts transitively on $\Gamma$, and
\item[(si3)]
for each $\gamma\in\Gamma$, the stabilizer $G_\gamma$ has the property that
\[
\{x\in\mathbb{R}^m: Qx=x ~~\forall Q\in G_\gamma\}
=\operatorname{span}\{\gamma\}.
\]
\end{itemize}
(Examples of stable isogons include regular and quasi-regular polyhedra, as well as highly symmetric frames~\cite{BroomeW:13}.)
With this background, we formulate the following conjecture:

\begin{conjecture}[Voronoi Means Conjecture]
\label{conj.voronoi}
Draw $N$ points $\mathcal{X}$ independently from a mixture $\mathcal{D}$ of equally weighted spherical Gaussians of equal variance centered at the points in a stable isogon $\Gamma=\{\gamma_t\}_{t=1}^k$.
Then 
\[
\min_{\substack{\pi\colon[k]\rightarrow[k]\\\mathrm{permutation}}}\max_{t\in\{1,\ldots,k\}} \big\|c^{(\mathcal{X})}_t-\mu_{\pi(t)}^{(\Gamma,\mathcal{D})}\big\|_2
\]
converges to zero in probability as $N\rightarrow\infty$, i.e., the $k$-means-optimal centroids converge to the Voronoi means.
\end{conjecture}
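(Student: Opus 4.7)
The plan is to reduce, via classical $k$-means consistency theory, to a statement about the population objective, and then exploit the stable-isogon symmetry to identify its unique global minimizer. Write
\[
\Phi_N(c) := \frac{1}{N}\sum_{i=1}^N \min_{t\in[k]} \|x_i - c_t\|_2^2, \qquad \Phi(c) := \mathbb{E}_{X\sim\mathcal{D}}\bigl[\min_{t\in[k]} \|X - c_t\|_2^2\bigr]
\]
for $c = (c_1,\ldots,c_k) \in (\mathbb{R}^m)^k$. Since $\mathcal{D}$ has subgaussian tails and finite variance, a standard Glivenko--Cantelli / Pollard argument yields uniform convergence $\Phi_N \to \Phi$ on compact sets together with a deterministic bound confining any empirical minimizer to a fixed ball. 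Hence if the global minimizers of $\Phi$ form a single permutation-orbit $\mathcal{M}$, the empirical $k$-means centroids converge to $\mathcal{M}$ in probability. The conjecture then reduces to showing that the Voronoi means $\{\mu_t^{(\Gamma,\mathcal{D})}\}$ form the unique (up to relabeling) global minimum of $\Phi$.

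Next I would verify that the Voronoi means are a Lloyd-fixed configuration, hence a critical point of $\Phi$. The cell $V_t^{(\Gamma)}$ and the restriction $\mathcal{D}|_{V_t^{(\Gamma)}}$ are both $G_{\gamma_t}$-invariant, so $\mu_t^{(\Gamma,\mathcal{D})}$ is $G_{\gamma_t}$-fixed; axiom (si3) then forces it into $\mathrm{span}\{\gamma_t\}$, and transitivity (si2) yields a common scalar $\lambda > 0$ with $\mu_t = \lambda \gamma_t$. Consequently the Voronoi diagram of $(\mu_t)$ coincides with that of $\Gamma$, the configuration is Lloyd-self-consistent, and a second-variation computation (positivity of the Gaussian densities within each cell) identifies it as a strict local minimum.

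Global uniqueness is the crux. I would attempt it in two stages. First, among $G$-symmetric configurations---those which, as unordered $k$-tuples, are $G$-fixed---axioms (si2)--(si3) force every member to take the form $c_t = \lambda\gamma_t$, and a direct first-order analysis picks out the $\lambda$ corresponding to $(\mu_t)$ as the unique Lloyd-fixed value. Second, one must rule out that any asymmetric Lloyd-fixed configuration $c^*$ beats the symmetric one. I envision a case split on the assignment map $\tau : \Gamma \to [k]$, $\tau(\gamma_a) := \operatorname{argmin}_t \|\gamma_a - c^*_t\|_2$. If $\tau$ is a bijection, a symmetrization/averaging argument over the $G$-orbit of $c^*$ should promote $c^*$ to a $G$-symmetric configuration of no larger $\Phi$-value, returning to the first stage. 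If $\tau$ fails to be injective, two isogon vertices $\gamma_a,\gamma_b$ share a Voronoi cell of $c^*$, and a direct second-moment estimate comparing the merged-cell variance to the optimal two-Gaussian variance should yield $\Phi(c^*) \geq \Phi(\mu) + c\,\Delta_{\min}^2$ for some absolute $c > 0$.

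The principal obstacle is the asymmetric case. The symmetrization step is delicate because averaging $c^*$ over its $G$-orbit does \emph{not} automatically decrease the nonconvex $\Phi$ unless the underlying cell geometry is itself $G$-invariant, which need not hold \emph{a priori}. Moreover, making the merger lower bound uniform in $\sigma$ is hard in the regime where $\sigma$ is comparable to or larger than $\Delta_{\min}$: the mixture then looks nearly like a single Gaussian, and the optimal $k$-means partition may genuinely cease to track the isogon, so one must argue that even in that regime the ``collapse'' of two Gaussians into one cell costs strictly more than maintaining symmetric centers. Resolving these two difficulties---a symmetrization respecting the nonconvex cell geometry, and a tight merger cost uniform in $\sigma$---is, I expect, the main reason Conjecture~\ref{conj.voronoi} remains open.
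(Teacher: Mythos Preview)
The paper does not prove this statement: it is explicitly presented as a conjecture, supported only by the numerical experiments in Figure~\ref{figure.1} and by the heuristic in Theorem~\ref{thm.significance of stable isogons} that the Voronoi means are a Lloyd fixed point. There is therefore no ``paper's own proof'' to compare against, and you yourself correctly flag in your final paragraph that the argument is incomplete and that the conjecture remains open.

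That said, a few remarks on your outline. Your second paragraph---deducing $\mu_t = \lambda\gamma_t$ from $G_{\gamma_t}$-invariance and (si3), and hence that the Voronoi means are Lloyd-self-consistent---is exactly the content of Theorem~\ref{thm.significance of stable isogons}, so that portion is on solid ground and matches the paper. Your reduction via Pollard-type consistency to uniqueness of the population minimizer is also the natural framing.

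One genuine gap beyond the two you already name: the first stage of your global-uniqueness plan asserts that any $G$-fixed unordered $k$-tuple must be of the form $\{\lambda\gamma_t\}$. This is false in general. For the square $\Gamma = \{(\pm1,\pm1)\}$ (which satisfies (si1)--(si3)), the edge midpoints $\{(\pm1,0),(0,\pm1)\}$ are a $G$-fixed $4$-tuple not of the form $\lambda\Gamma$. So even within the symmetric class you would need to compare $\Phi$ across distinct $G$-orbits of configurations, not just optimize over a scalar $\lambda$. This does not invalidate the overall strategy, but it means the ``symmetric'' stage is already more than a one-parameter calculation, and the symmetrization you propose in the asymmetric stage would, at best, land you in this larger class rather than directly at $\lambda\Gamma$.
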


Our conditional result will only require the Voronoi Means Conjecture in the special case where the $\gamma_t$'s form an orthoplex (see Lemma~\ref{lem.orthoplex}).
Figure~\ref{figure.1} provides some numerical evidence in favor of this conjecture (specifically in the orthoplex case).
We note that the hypothesis that $\Gamma$ be a stable isogon appears somewhat necessary.
For example, simulations suggest that the conjecture does not hold when $\Gamma$ is three equally spaced points on a line (in this case, the $k$-means-optimal centroids appear to be more separated than the Voronoi means).
The following theorem provides some insight as to why the stable isogon hypothesis is reasonable:

\begin{figure}[t]
\begin{center}
\includegraphics[width=0.3\textwidth]{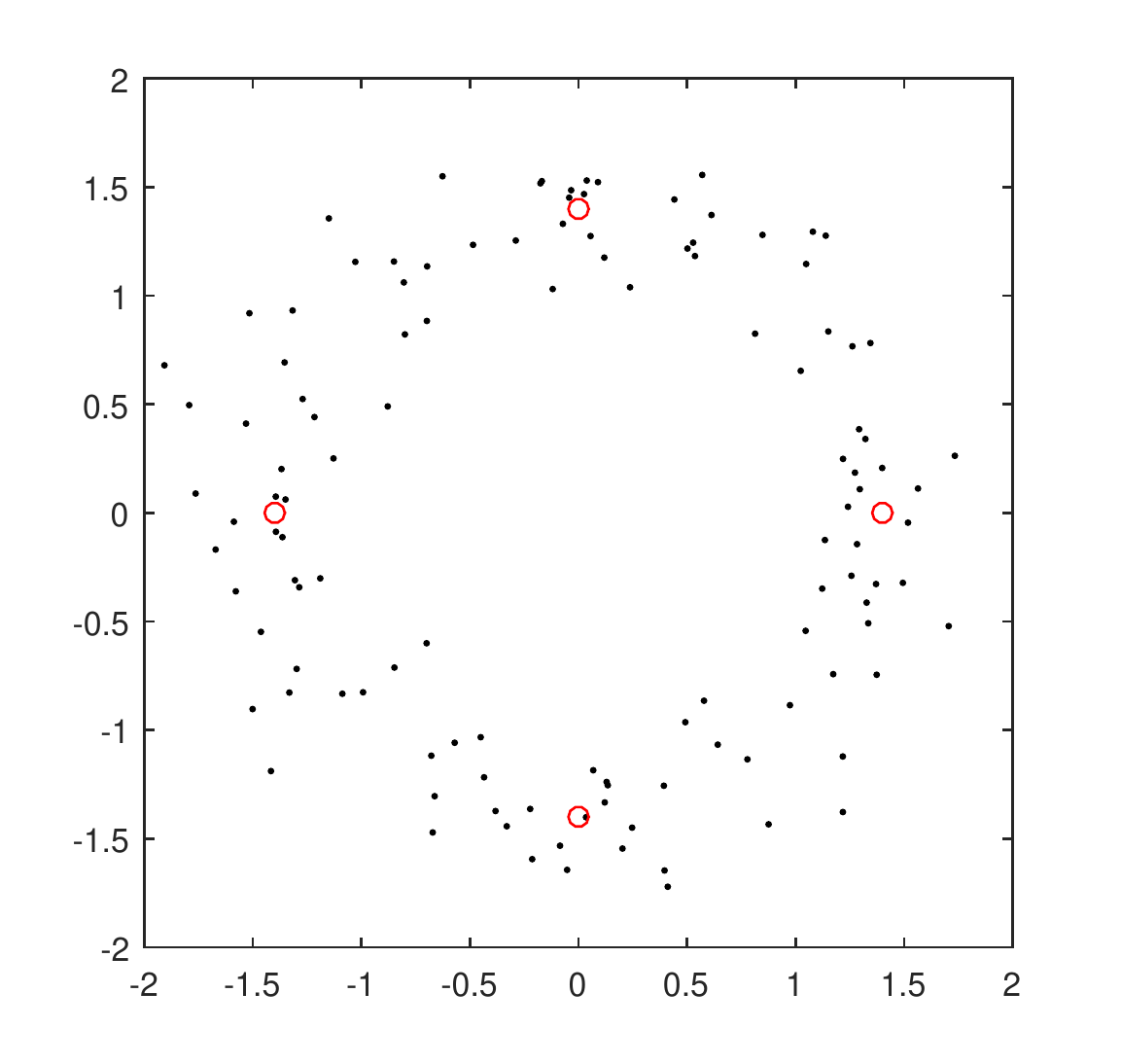}
\includegraphics[width=0.3\textwidth]{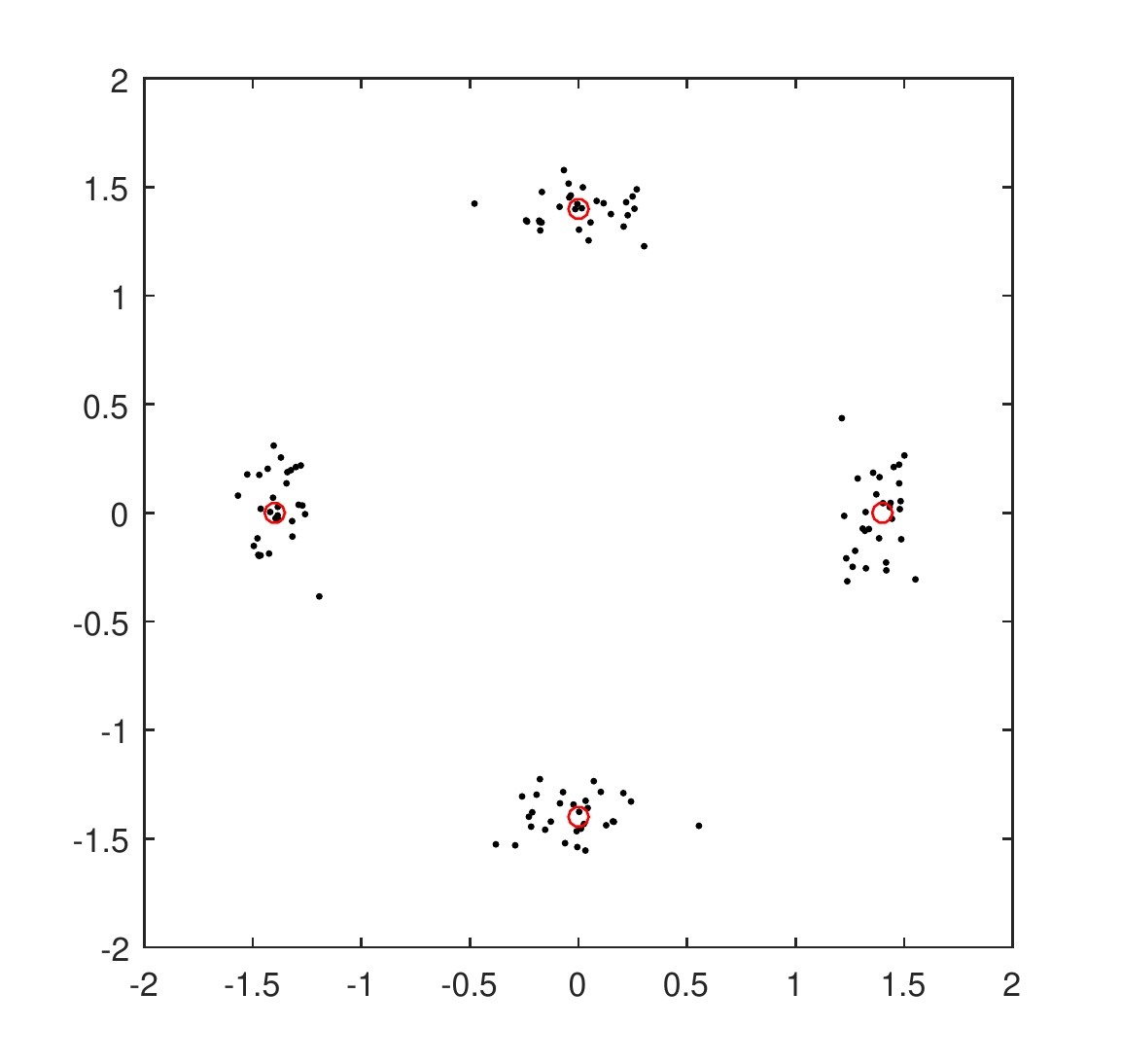}
\includegraphics[width=0.3\textwidth]{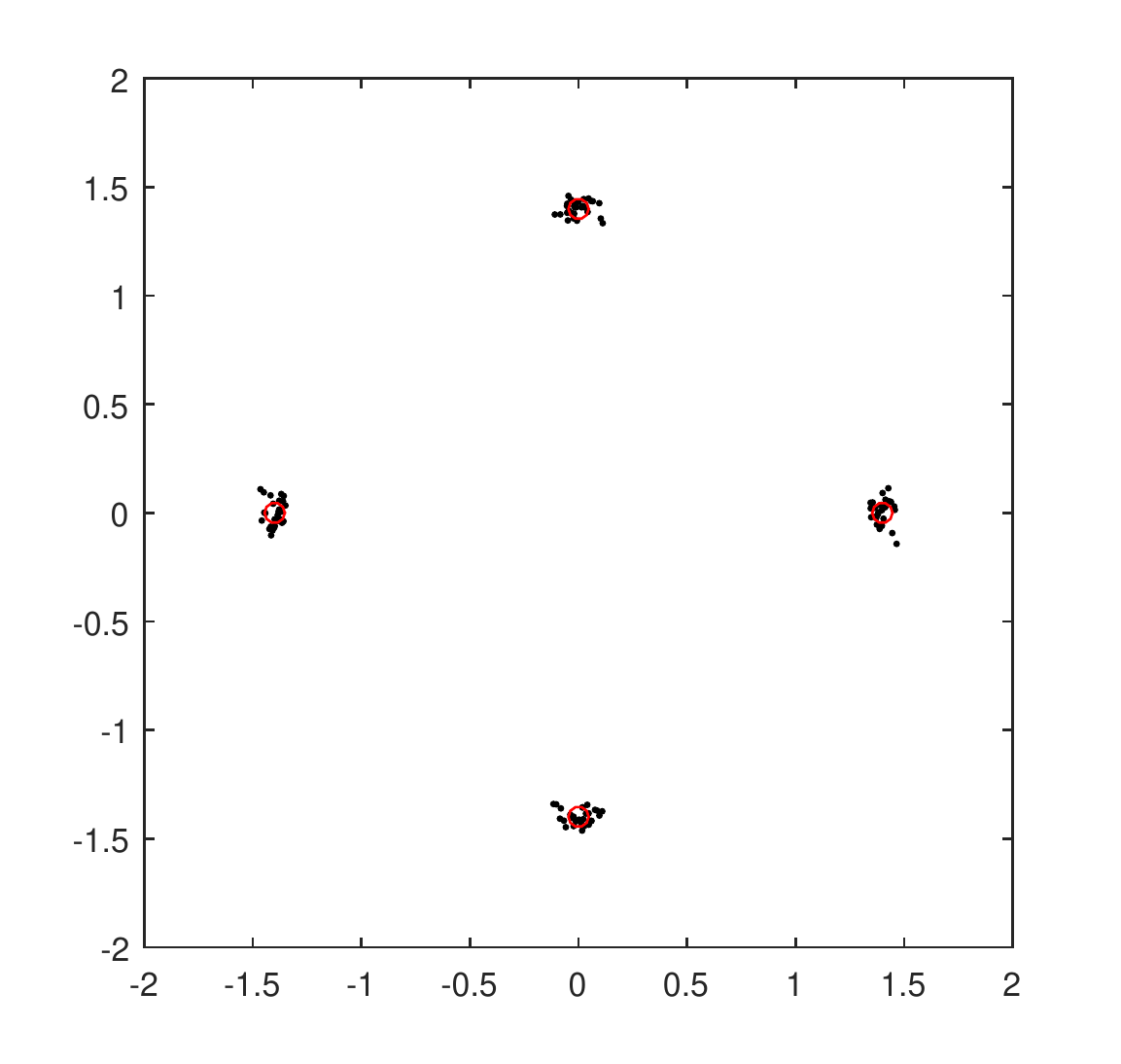}\\
\footnotesize{
\hspace{6pt}(a)\hspace{132pt}(b)\hspace{132pt}(c)}
\end{center}
\caption{
\label{figure.1}
{\footnotesize 
Evidence in favor of the Voronoi Means Conjecture.
For each $n\in\{10^2,10^3,10^4\}$, the following experiment produces (a), (b) and (c), respectively.  Perform $30$ trials of the following:
For each $\gamma\in\{(\pm1,0),(0,\pm1)\}$, draw $n$ points independently from $\mathcal{N}(\gamma,I)$, and then for these $N=4n$ points, run MATLAB's built-in implementation of $k$-means++ with $k=4$ for $10$ independent initializations; of the resulting $10$ choices of centroids, store the ones of minimal $k$-means value (this serves as our proxy for the $k$-means-optimal centroids).
Plot these 30 collections of centroids in black dots, along with the Voronoi means in red circles.
The Voronoi means $\{(\pm\alpha,0),(0,\pm\alpha)\}$ were computed numerically in Mathematica as $\alpha\approx
1.39928$.
Importantly, the $k$-means-optimal centroids appear to converge toward the Voronoi means, not the Gaussian centers, as $N\rightarrow\infty$.}
\normalsize}
\end{figure}

\begin{theorem}
\label{thm.significance of stable isogons}
Let $\mathcal{D}$ be a mixture of equally weighted spherical Gaussians of equal variance centered at the points of a stable isogon $\Gamma=\{\gamma_t\}_{t=1}^k$.
Then there exists $\alpha>0$ such that $\mu_{t}^{(\Gamma,\mathcal{D})}=\alpha\gamma_t$ for each $t\in\{1,\ldots,k\}$.
\end{theorem}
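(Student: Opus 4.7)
The plan is to exploit the symmetry of the setup head-on. The mixture $\mathcal{D}$ is $G$-invariant, since its components have equal weights and equal spherical covariances and $G$ permutes the centers; and because Euclidean distance is $O(m)$-invariant, $G$ also permutes the Voronoi cells in the same way it permutes the centers, i.e.\ $QV_t^{(\Gamma)} = V_s^{(\Gamma)}$ whenever $Q\gamma_t = \gamma_s$.

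First, fix $t$ and show $\mu_t^{(\Gamma,\mathcal{D})}\in\operatorname{span}\{\gamma_t\}$. For any $Q\in G_{\gamma_t}$, we have $Q\gamma_t=\gamma_t$, $QV_t^{(\Gamma)}=V_t^{(\Gamma)}$, and $Q_*\mathcal{D}=\mathcal{D}$; changing variables $x\mapsto Qx$ in the defining conditional expectation gives $Q\mu_t^{(\Gamma,\mathcal{D})} = \mu_t^{(\Gamma,\mathcal{D})}$, so by (si3) we may write $\mu_t^{(\Gamma,\mathcal{D})} = \alpha_t\gamma_t$ for some scalar $\alpha_t$. Second, transitivity (si2) gives, for each pair $(s,t)$, a $Q\in G$ with $Q\gamma_s=\gamma_t$; the same change-of-variables argument then yields $\alpha_s\gamma_t = Q\mu_s^{(\Gamma,\mathcal{D})} = \mu_t^{(\Gamma,\mathcal{D})} = \alpha_t\gamma_t$, so all $\alpha_t$ coincide. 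Call their common value $\alpha$.

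The main obstacle is verifying the sign, $\alpha>0$. The key subclaim is that $\sum_s\gamma_s=0$: the centroid $\bar\gamma$ is $G$-fixed, hence $G_\gamma$-fixed for every $\gamma\in\Gamma$, so by (si3) it lies in $\bigcap_{\gamma\in\Gamma}\operatorname{span}\{\gamma\}$. Since $G\leq O(m)$ acts transitively, all $\gamma_s$ share a common norm $r>0$ (the case $r=0$ forces $\Gamma=\{0\}$, contradicting (si1)); so either $\Gamma$ spans more than one dimension and the intersection is $\{0\}$, or $\Gamma=\{\gamma,-\gamma\}$, whose centroid is also $0$. Either way $\bar\gamma=0$. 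Now, for $x\in V_t^{(\Gamma)}$, expanding $\|x-\gamma_t\|\leq\|x-\gamma_s\|$ and using $\|\gamma_t\|=\|\gamma_s\|$ yields $\langle x,\gamma_t\rangle\geq\langle x,\gamma_s\rangle$; summing over $s$ gives $k\langle x,\gamma_t\rangle\geq\langle x,\sum_s\gamma_s\rangle = 0$. Averaging this pointwise inequality against the conditional distribution of $\mathcal{D}$ on $V_t^{(\Gamma)}$ gives $\alpha r^2 = \langle\mu_t^{(\Gamma,\mathcal{D})},\gamma_t\rangle\geq 0$.

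For strict positivity I would do a local argument at $\gamma_t$. Strict Cauchy--Schwarz (using $\|\gamma_t\|=\|\gamma_s\|=r$ and $\gamma_s\neq\gamma_t$) shows $\langle\gamma_t,\gamma_t-\gamma_s\rangle = r^2 - \langle\gamma_t,\gamma_s\rangle > 0$, so $\gamma_t$ lies in the interior of $V_t^{(\Gamma)}$. Thus there is a small ball $B(\gamma_t,\epsilon)\subset V_t^{(\Gamma)}$ on which $\langle x,\gamma_t\rangle\geq r^2-\epsilon r>0$, and this ball has positive $\mathcal{D}$-measure because the $\mathcal{N}(\gamma_t,\sigma^2 I)$ component of the mixture already contributes positive mass there. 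Hence the conditional integral is strictly positive, giving $\alpha>0$ as required.
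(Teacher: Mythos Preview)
Your proof is correct and follows the same overall architecture as the paper's: use the stabilizer $G_{\gamma_t}$ together with (si3) to force $\mu_t\in\operatorname{span}\{\gamma_t\}$, use transitivity (si2) to make the scalar $\alpha_t$ independent of $t$, and use the centering identity $\sum_s\gamma_s=0$ to show $\alpha>0$.

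Where you differ is in the two supporting lemmas. For the centering identity $\sum_s\gamma_s=0$, the paper (its Lemma~\ref{lem.stable isogons are centered}) first proves a structural decomposition $G=H\oplus K$ of the symmetry group (Lemma~\ref{lem.decomposition of G}) and then runs an orbit-sum argument through $H$. Your route---observe that the centroid $\bar\gamma$ is $G$-fixed, hence $G_\gamma$-fixed for every $\gamma$, and invoke (si3) directly to place $\bar\gamma$ in $\bigcap_\gamma\operatorname{span}\{\gamma\}$---is more elementary and bypasses the group decomposition entirely. For strict positivity of $\alpha$, the paper exploits the strict inequalities in the definition of $V_t^{(\Gamma)}$ to conclude $\langle x,\gamma_t\rangle>0$ pointwise on the whole cell; you instead get only $\langle x,\gamma_t\rangle\ge0$ globally and then upgrade to a strict integral via a local argument near $\gamma_t$. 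Both are valid; the paper's version is a touch cleaner here, while your centering argument is a genuine simplification.
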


See Section~\ref{sec.proof of thm.significance of stable isogons} for the proof.
To interpret Theorem~\ref{thm.significance of stable isogons}, consider $k$-means optimization over the distribution $\mathcal{D}$ instead of a large sample $\mathcal{X}$ drawn from $\mathcal{D}$.
This optimization amounts to finding $k$ points $C=\{c_t\}_{t=1}^k$ in $\mathbb{R}^m$ that minimize
\begin{equation}
\label{eq.continuum k means}
\sum_{t=1}^k\mathop{\mathbb{E}}_{X\sim\mathcal{D}}\big[\|X-c_t\|_2^2\big|X\in V_t^{(C)}\big]\mathop{\mathbb{P}}_{X\sim\mathcal{D}}\big(X\in V_t^{(C)}\big)
\end{equation}
Intuitively, the optimal $C$ is a good proxy for the $k$-means-optimal centroids when $N$ is large (and one might make this rigorous using the plug-in principle with the Glivenko--Cantelli Theorem).
What Theorem~\ref{thm.significance of stable isogons} provides is that, when $\Gamma$ is a stable isogon, the Voronoi means have the same Voronoi cells as do $\Gamma$.
As such, if one were to initialize Lloyd's algorithm at the Gaussian centers to solve \eqref{eq.continuum k means}, the algorithm converges to the Voronoi means in one step.
Overall, one should interpret Theorem~\ref{thm.significance of stable isogons} as a statement about how the Voronoi means locally minimize \eqref{eq.continuum k means}, whereas the Voronoi Means Conjecture is a statement about global minimization.

As indicated earlier, we will use the Voronoi Means Conjecture in the special case where $\Gamma$ is an orthoplex:

\begin{lemma}
\label{lem.orthoplex}
The standard orthoplex of dimension $d$, given by the first $d$ columns of $I$ and of $-I$, is a stable isogon.
\end{lemma}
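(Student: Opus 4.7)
The plan is to verify the three defining properties (si1)--(si3) of a stable isogon for $\Gamma = \{\pm e_1, \ldots, \pm e_d\} \subset \mathbb{R}^d$. Condition (si1) is immediate since $|\Gamma| = 2d \geq 2$. The remaining work is to produce enough orthogonal symmetries of $\Gamma$ to certify transitivity and to pin down the fixed subspaces of point stabilizers; in both cases the hyperoctahedral group of signed permutation matrices will serve as the natural tool.

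For (si2), I would observe that every signed permutation matrix lies in $O(d)$ and carries $\Gamma$ to itself, since it sends each standard basis vector to $\pm$ another standard basis vector. Given $\epsilon e_i, \epsilon' e_j \in \Gamma$ with $\epsilon,\epsilon' \in \{\pm 1\}$, an explicit signed permutation (swap coordinates $i$ and $j$ and then adjust the $j$-th sign) carries one to the other, so the symmetry group $G$ acts transitively on $\Gamma$.

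For (si3), which I expect to be the most substantive step, it suffices by (si2) to check the condition at $\gamma = e_1$. The inclusion $\operatorname{span}\{e_1\} \subseteq \{x : Qx = x \;\forall Q \in G_{e_1}\}$ holds because any $Q$ fixing $e_1$ also fixes every scalar multiple of $e_1$. For the reverse inclusion, the key observation is that for each $j \in \{2, \ldots, d\}$ the signed permutation matrix that negates the $j$-th coordinate and fixes every other basis vector belongs to $G_{e_1}$; any $x$ fixed by this element must satisfy $x_j = 0$, and varying $j$ over $\{2,\ldots,d\}$ forces $x \in \operatorname{span}\{e_1\}$. The only mild subtlety is that one need not determine the \emph{full} symmetry group $G$ (which in fact coincides with the hyperoctahedral group $B_d$); it is enough that $G_{e_1}$ contains the individual coordinate sign-flips, and those already annihilate every component of $x$ outside $\operatorname{span}\{e_1\}$.
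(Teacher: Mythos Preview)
Your proposal is correct and follows essentially the same approach as the paper: both verify (si1) by counting, (si2) via signed permutations, and (si3) by exhibiting sign-flip matrices in the stabilizer. The only cosmetic difference is that for (si3) the paper uses the single reflection $Q = 2e_ie_i^\top - I$ (which negates every coordinate except the $i$th, so its $+1$-eigenspace is exactly $\operatorname{span}\{e_i\}$), whereas you use the $d-1$ individual coordinate sign-flips; since $Q$ is just the product of your sign-flips, the two arguments are equivalent.
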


\begin{proof}
First, we have that $|\Gamma|=2d>1$, implying (si1).
Next, every $\gamma'\in\Gamma$ can be reached from any $\gamma\in\Gamma$ with the appropriate signed transposition, which permutes $\Gamma$, and is therefore in the symmetry group $G$; as such, we conclude (si2).
For (si3), pick $\gamma\in \Gamma$ and let $i$ denote its nonzero entry.
Consider the matrix $Q=2e_ie_i^\top-I$, where $e_i$ denotes the $i$th identity basis element.
Then $Q\in G_\gamma$, and the eigenspace of $Q$ with eigenvalue $1$ is $\operatorname{span}\{\gamma\}$, and so we are done.
\end{proof}
What follows is the main result of this subsection:
\begin{theorem}
\label{thm.main lower bound}
Let $k\leq 2m$ be even, and let $\Gamma=\{\gamma_t\}_{t=1}^k\subseteq\mathbb{R}^m$ denote the standard orthoplex of dimension $k/2$.
Then for every $\sigma>0$, either
\[
\sigma
\lesssim\Delta_\mathrm{min}/\sqrt{\log k}
\qquad
\text{or}
\qquad
\min_{t\in\{1,\ldots,k\}}\|\mu_t^{(\Gamma,\mathcal{D})}-\gamma_t\|_2
\gtrsim \sigma\sqrt{\log k},
\]
where $\mathcal{D}$ denotes the mixture of equally weighted spherical Gaussians of entrywise variance $\sigma^2$ centered at the members of $\Gamma$. 
\end{theorem}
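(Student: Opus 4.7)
The plan is to reduce the whole statement to a question about a single scalar by invoking the stable-isogon symmetry, and then bound that scalar by a straightforward Gaussian-maximum calculation. Concretely, Lemma~\ref{lem.orthoplex} says the standard orthoplex is a stable isogon, so Theorem~\ref{thm.significance of stable isogons} furnishes an $\alpha>0$ with $\mu_t^{(\Gamma,\mathcal{D})} = \alpha \gamma_t$ for every $t$. Because the orthoplex vertices are unit vectors,
\[
\min_{t} \|\mu_t^{(\Gamma,\mathcal{D})} - \gamma_t\|_2 \;=\; |1 - \alpha|,
\]
and $\Delta_{\min} = \sqrt{2}$ is a fixed constant, so it suffices to prove the dichotomy: either $\sigma \lesssim 1/\sqrt{\log k}$ or $|1 - \alpha| \gtrsim \sigma \sqrt{\log k}$.

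Taking $t = 1$, the Voronoi cell becomes $V_{e_1} = \{x \in \mathbb{R}^m : x_1 > |x_j| \text{ for } j = 2,\dots,k/2\}$, on which $X_1 > 0$ and hence $X_1 = \max_{j \in [k/2]} |X_j|$. Therefore
\[
\alpha \;=\; \mathbb{E}\bigl[\,X_1 \,\big|\, X \in V_{e_1}\bigr] \;=\; \mathbb{E}\bigl[\,\max_{j \in [k/2]} |X_j| \,\big|\, X \in V_{e_1}\bigr].
\]
The crucial observation is that the mixture $\mathcal{D}$ and the function $x \mapsto \max_{j \in [k/2]} |x_j|$ are both invariant under the signed permutation group $G$ acting on the first $k/2$ coordinates, while $\{V_{\pm e_i}\}_{i=1}^{k/2}$ is a single $G$-orbit that partitions $\mathbb{R}^m$ into $k$ pieces of equal probability. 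Averaging the $k$ equal conditional expectations removes the conditioning altogether: $\alpha = \mathbb{E}[\max_{j \in [k/2]} |X_j|]$.

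For the remaining unconditional bound, I would write $X = S + \sigma Z$, with $S$ uniform on $\Gamma$ and $Z \sim \mathcal{N}(0, I_m)$ independent. For every realization of $S$, exactly one coordinate $j^*(S) \in [k/2]$ equals $\pm 1$ and the other $k/2 - 1$ coordinates of $S$ vanish, so $|X_j| = \sigma|Z_j|$ for $j \neq j^*(S)$. Dropping the signal coordinate yields
\[
\max_{j \in [k/2]} |X_j| \;\geq\; \sigma \max_{j \in [k/2]\setminus\{j^*(S)\}} |Z_j|,
\]
and taking expectations, the right-hand side is $\sigma$ times the expected maximum of $k/2 - 1$ i.i.d.\ standard half-normals, which is at least a universal constant times $\sqrt{\log k}$ once $k$ exceeds a fixed constant. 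This gives $\alpha \geq c\sigma\sqrt{\log k}$ for some universal $c > 0$.

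The dichotomy now follows mechanically: if $c\sigma\sqrt{\log k} \geq 2$ then $\alpha \geq 2$ and $|1-\alpha| = \alpha - 1 \geq (c/2)\sigma\sqrt{\log k}$, which is the second alternative; otherwise $\sigma \leq (2/c)/\sqrt{\log k} \asymp \Delta_{\min}/\sqrt{\log k}$, which is the first. I expect the main subtlety to be the symmetry argument that eliminates the conditioning on $V_{e_1}$: it is essential that the quantity being averaged, $\max_{j \in [k/2]} |X_j|$, is $G$-invariant (not merely that the marginal distribution of $X$ is), and writing this out carefully while tracking which index set the max runs over is the only non-routine bookkeeping in the proof.
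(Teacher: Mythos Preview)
Your proof is correct and in fact somewhat cleaner than the paper's. Both arguments start the same way: invoke Theorem~\ref{thm.significance of stable isogons} to write $\mu_t^{(\Gamma,\mathcal{D})}=\alpha\gamma_t$, reduce to the scalar problem $\|\mu_t-\gamma_t\|_2=|1-\alpha|$, and then establish the key lower bound $\alpha\gtrsim\sigma\sqrt{\log k}$ before splitting into two cases. The difference lies entirely in how that lower bound on $\alpha$ is obtained.

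The paper first proves a monotonicity lemma (Lemma~\ref{lem.monotonicity}) showing $\alpha_d(c)\geq\alpha_d(0)$ for all $c\geq0$; this is a calculus argument involving differentiation under the integral sign and an integration by parts. It then applies the signed-permutation symmetry only in the centered case $c=0$ to identify $\alpha_d(0)=\mathbb{E}\|g\|_\infty$ for $g\sim\mathcal{N}(0,I)$, and finally invokes Lemma~\ref{lem.gaussian bound}. You instead apply the same symmetry argument \emph{directly to the mixture} $\mathcal{D}$, obtaining the unconditional identity $\alpha=\mathbb{E}_{X\sim\mathcal{D}}\bigl[\max_{j\in[k/2]}|X_j|\bigr]$, and then use the coupling $X=S+\sigma Z$ and drop the single signal coordinate to bound this below by $\sigma$ times the expected maximum of $k/2-1$ half-normals. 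This bypasses the monotonicity lemma entirely. What the paper's route buys is the standalone fact that $\alpha_d$ is nondecreasing in the center separation, which is of some independent interest; what your route buys is a shorter, calculus-free argument. Your flagged subtlety---that the $G$-invariance of $\max_j|X_j|$ (not merely of the law of $X$) is what makes the conditioning disappear---is exactly the point, and the paper uses precisely this observation in its proof of Lemma~\ref{lem.bound on alpha zero}, just restricted to the $c=0$ case.
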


See Section~\ref{sec.proof of thm.main lower bound} for the proof.
In words, Theorem~\ref{thm.main lower bound} establishes that one must accept $k$-dependence in either the data's noise or the estimate's error.
It would be interesting to investigate whether other choices of stable isogons lead to stronger $k$-dependence.

\subsection{Numerical example: Clustering the MNIST dataset}

In this section, we apply our clustering algorithm to the NMIST handwritten digits dataset~\cite{lecun2010mnist}.
This dataset consists of 70,000 different $28\times28$ grayscale images, reshaped as $784\times1$ vectors; 55,000 of them are considered training set, 10,000 are test set, and the remaining 5,000 are validation set.

Clustering the raw data gives poor results (due to 4's and 9's being similar, for example), so we first learn meaningful features, and then cluster the data in feature space.
To simplify feature extraction, we used the first example from the TensorFlow tutorial~\cite{tensorflow2015}.
This consists of a one-layer neural network $y(x) = \sigma(Wx+b)$, where $\sigma$ is the softmax function, $W$ is a $784\times10$ matrix to learn, and $b$ is a $10\times1$ vector to learn.
As the tutorial shows, the neural network is trained for 1,000 iterations, each iteration using batches of 100 random points from the training set.

Training the neural network amounts to finding $W$ and $b$ that fit the training set well.
After selecting these parameters, we run the trained neural network on the first 1,000 elements of the test set, obtaining $\{y(x_i)\}_{i=1}^{1000}$, where each $y(x_i)$ is a $10\times1$ vector representing the probabilities of being each digit.
Since $y(x_i)$ is a probability vector, its entries sum to $1$, and so the feature space is actually $9$-dimensional.

For this experiment, we cluster $\{y(x_i)\}_{i=1}^{1000}$ with two different algorithms:\ (i) MATLAB's built-in implementation of $k$-means++, and (ii) our relax-and-round algorithm based on the $k$-means semidefinite relaxation~\eqref{eq.kmeansSDP}.
(The results of the latter alternative are illustrated in Figure~\ref{figure.mnist}.)

\begin{figure}[t]
\begin{center}
\includegraphics[width=0.37\textwidth]{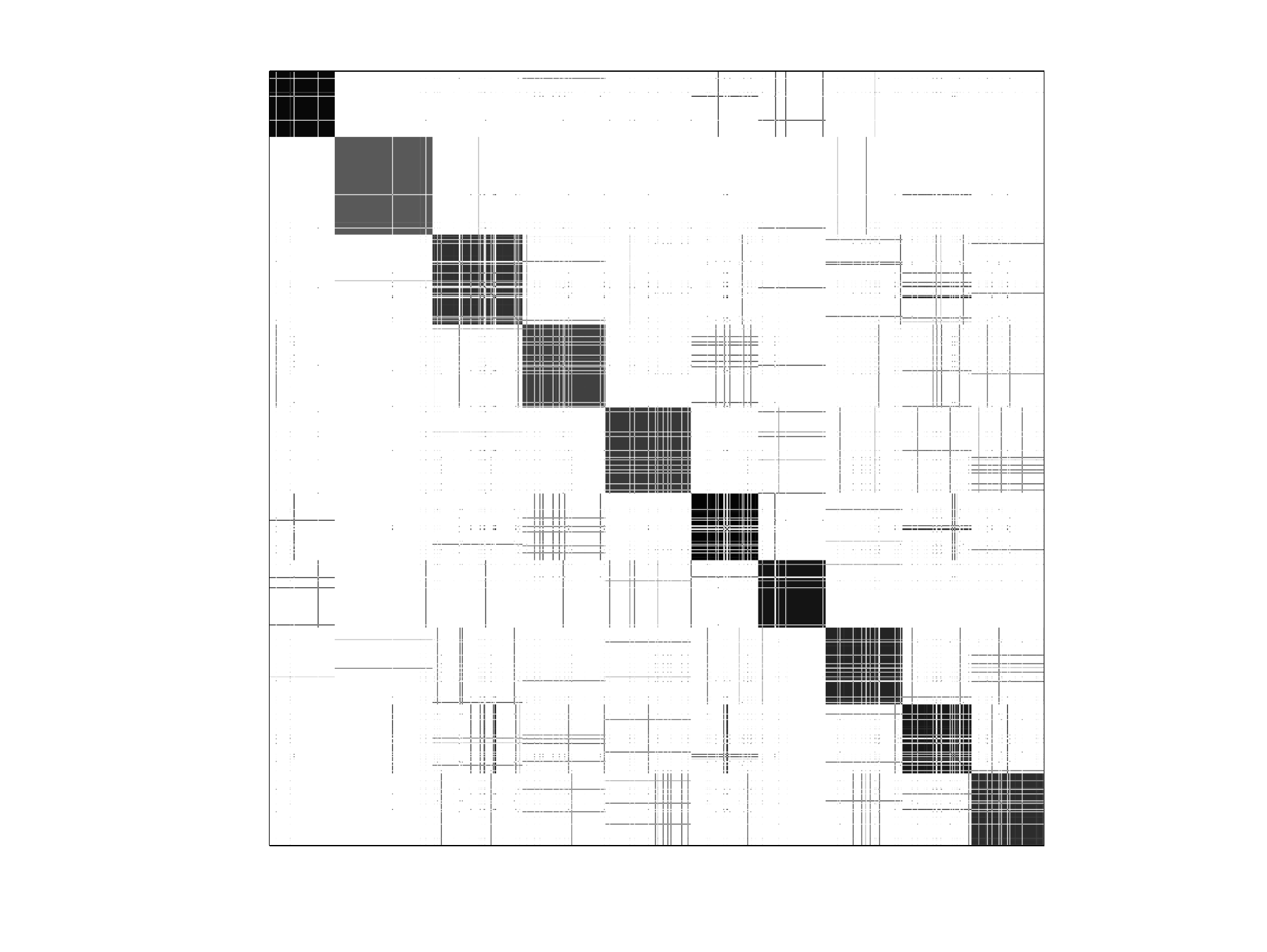}
\hspace{-37pt}
\includegraphics[width=0.37\textwidth]{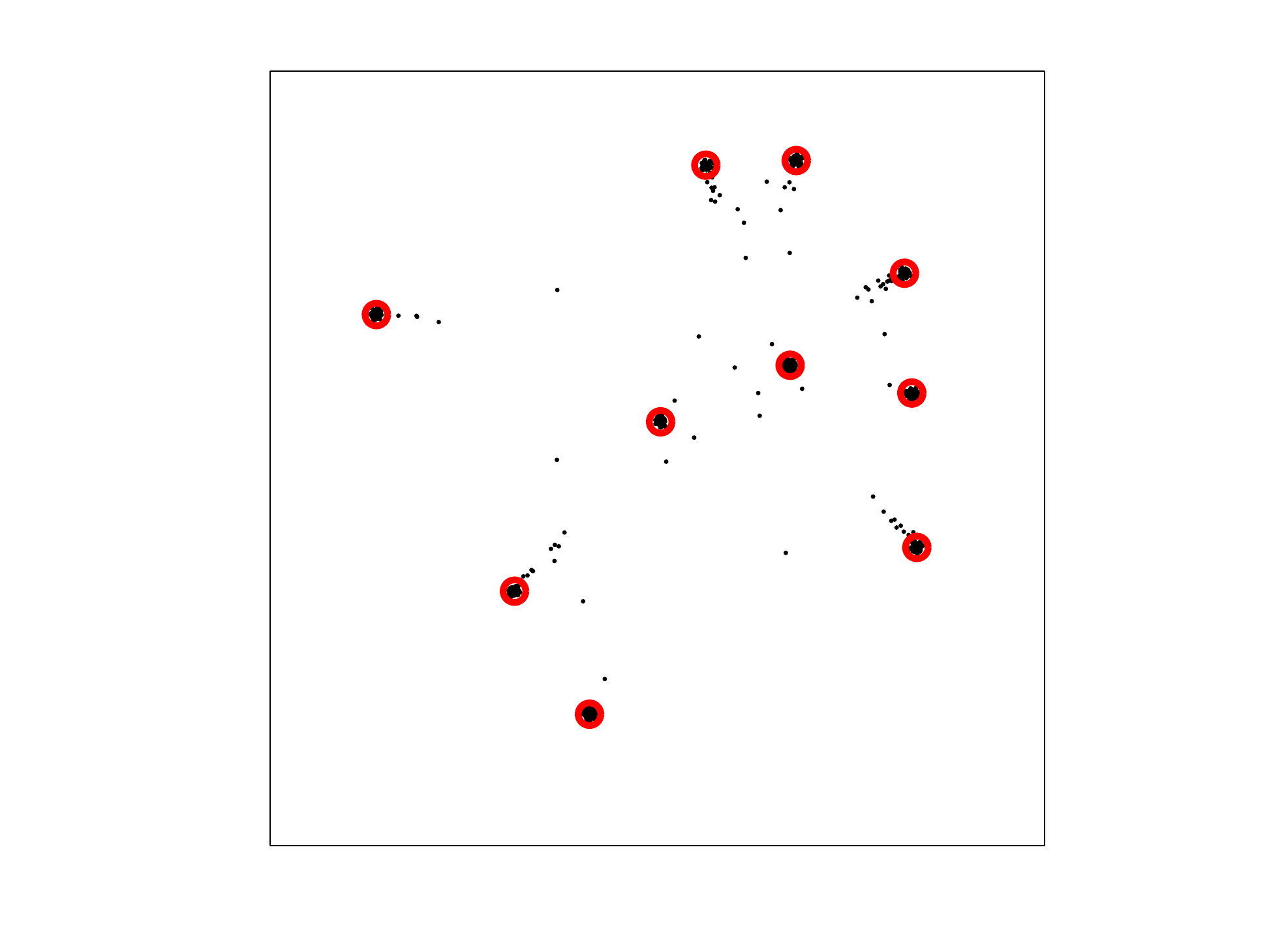}
\hspace{-37pt}
\includegraphics[width=0.37\textwidth]{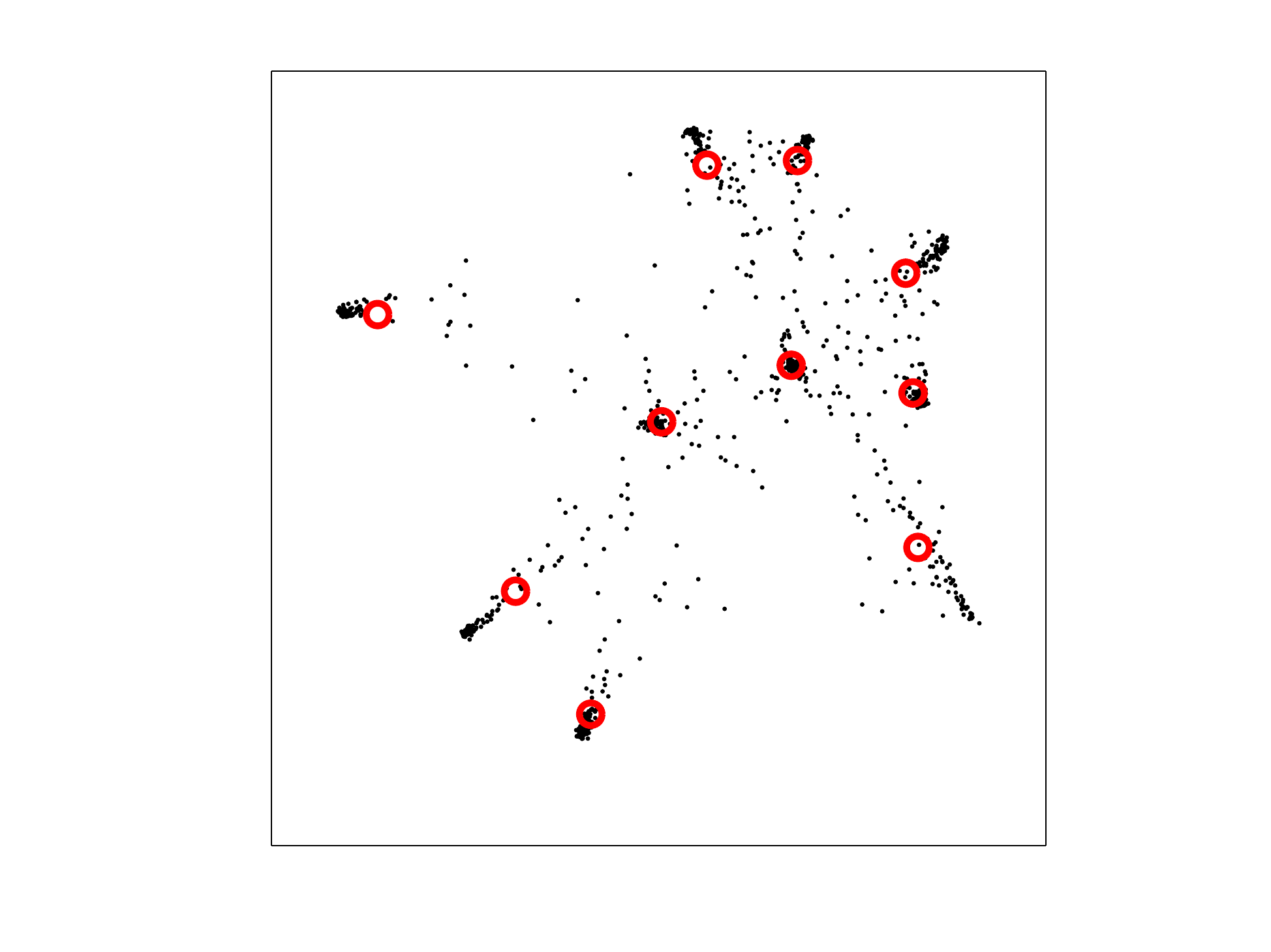}\\
\footnotesize{
\hspace{6pt}(a)\hspace{132pt}(b)\hspace{132pt}(c)}
\end{center}
\caption{\label{figure.mnist}
\footnotesize{
\textbf{(a)}
After applying TensorFlow~\cite{tensorflow2015} to learn a $9$-dimensional feature space of MNIST digits~\cite{lecun2010mnist}, determine the features of the first 1,000 images in the MNIST test set, compute the $1000\times1000$ matrix $D$ of squared distances in feature space, and then solve the $k$-means semidefinite relaxation~\eqref{eq.kmeansSDP} using SDPNAL+v0.3~\cite{yang2015sdpnal+}.
(The computation takes about 6 minutes on a standard MacBook Air laptop.)
Convert the SDP-optimizer $X$ to a grayscale image such that white pixels denote zero entries.
By inspection, this matrix is not exactly of the form~\eqref{eq.lifted kmeans}, but it looks close, and it certainly appears to have low rank.
\textbf{(b)}
Letting $P$ denote the $9\times1000$ matrix whose columns are the feature vectors to cluster, compute the denoised data $PX$ and identify the $10$ most popular locations in $\mathbb{R}^9$ (denoted by red circles) among the columns of $PX$ (denoted by black dots).
For the plot, we project the $9$-dimensional data onto a random $2$-dimensional subspace.
\textbf{(c)}
The $10$ most popular locations form our estimates for the centers of digits in feature space.
We plot these locations relative to the original data, projected in the same $2$-dimensional subspace as (b).
}}
\end{figure}

Since each run of $k$-means++ uses a random initialization that impacts the partition, we ran this algorithm 100 times.
In fact, the $k$-means value of the output varied quite a bit: the all-time low was $39.1371$, the all-time high was $280.4174$, and the median was $108.2358$; the all-time low was reached in $34$ out of the $100$ trials.
Since our relax-and-round alternative has no randomness, the outcome is deterministic, and its $k$-means value was $39.1371$, i.e., identical to the all-time low from $k$-means++.
By comparison, the $k$-means value of the planted solution (i.e., clustering according to the hidden digit label) was 103.5430, and the value of the SDP (which serves as a lower bound on the optimal $k$-means value) was $38.5891$.
As such, not only did our relax-and-round alternative produce the best clustering that $k$-means++ could find, it also provided a certificate that no clustering has a $k$-means value that is 1.5\% better.

Recalling the nature of our approximation guarantees, we also want to know well the relax-and-round algorithm's clustering captures the ground truth.
To evaluate this, we determined a labeling of the clusters for which the resulting classification exhibited a minimal misclassification rate.
(This amounts to minimizing a linear objective over all permutation matrices, which can be relaxed to a generically tight linear program over doubly stochastic matrices.)
For $k$-means++, the all-time low misclassification rate was $0.0971$ (again, accomplished by $34$ of the $100$ trials), the all-time high was $0.4070$, and the median was $0.2083$.
As one might expect, the relax-and-round output had a misclassification rate of $0.0971$.

\section{Proof of Theorem~\ref{thm.distance typically small}}
\label{section.proof of theorem}

By the following lemma, it suffices to bound $\operatorname{Tr}(R(X_D-X_R))$:

\begin{lemma}
\label{lem.bound on distance}
$\displaystyle\|X_D-X_R\|_\mathrm{F}^2\leq \frac{5}{n_{\min}\Delta_\mathrm{min}^2}\operatorname{Tr}(R(X_D-X_R))$.
\end{lemma}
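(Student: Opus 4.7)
The plan is to lower-bound $\tr(R(X_D-X_R))$ and upper-bound $\|X_D-X_R\|_\mathrm{F}^2$ by a common auxiliary quantity: the total mass that $X_D$ places on the off-diagonal blocks,
\[
S := \sum_{a\neq b}\,\sum_{\substack{i\in A_a\\ j\in A_b}}(X_D)_{(a,i),(b,j)}.
\]

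For the trace lower bound, I would argue in the style of Lemma~\ref{lem.XR}. The diagonal blocks of $R$ equal $\xi$ identically (since $\Delta_{aa}=0$ and $\gamma_a-\gamma_a=0$ kill both the $\Delta^2/2$ term and the max term), while on off-diagonal blocks $(R_{ab})_{ij}\geq \xi+\Delta_{ab}^2/2\geq \xi+\Delta_\mathrm{min}^2/2$. Using $\tr(RX_R)=N\xi$ and decomposing by blocks---with $\sum_{i,j\in A_a}(X_D)_{ij}=n_a-S_a$ from $X_D\mathbf{1}=\mathbf{1}$, where $S_a$ is the off-diagonal-block mass emanating from cluster $a$ (so $\sum_a S_a = S$), and $(X_D)_{ij}\geq 0$ on the off-diagonal blocks---a direct computation yields
\[
\tr\bigl(R(X_D-X_R)\bigr) \;\geq\; -\xi S + \bigl(\xi+\tfrac{1}{2}\Delta_\mathrm{min}^2\bigr)S \;=\; \tfrac{1}{2}\Delta_\mathrm{min}^2\,S.
\]

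For the Frobenius norm, the key observation is that $X_R$ is an orthogonal projection: the blocks $\tfrac{1}{n_t}1_t1_t^\top$ are mutually orthogonal idempotents, so $X_R^2=X_R$ and $\tr(X_R^2)=k$. An analogous computation gives $\tr(X_DX_R)=\sum_t \tfrac{1}{n_t}\sum_{i,j\in A_t}(X_D)_{ij}=\sum_t\tfrac{n_t-S_t}{n_t}=k-T$, where $T:=\sum_t S_t/n_t$. Hence
\[
\|X_D-X_R\|_\mathrm{F}^2 \;=\; \tr(X_D^2)-2\tr(X_DX_R)+\tr(X_R^2) \;=\; \tr(X_D^2)+2T-k.
\]
The crucial structural fact is that $X_D$ is simultaneously symmetric, entrywise nonnegative with $X_D\mathbf{1}=\mathbf{1}$, and positive semidefinite; the first two conditions force $X_D$ to be doubly stochastic with spectral radius $1$, so all eigenvalues lie in $[0,1]$ and $\tr(X_D^2)=\sum_i\lambda_i^2\leq\sum_i\lambda_i=\tr(X_D)=k$. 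Thus $\|X_D-X_R\|_\mathrm{F}^2\leq 2T$, and combining with the trivial estimate $T\leq S/n_\mathrm{min}$ and the trace bound delivers
\[
\|X_D-X_R\|_\mathrm{F}^2 \;\leq\; \frac{2S}{n_\mathrm{min}} \;\leq\; \frac{4}{n_\mathrm{min}\Delta_\mathrm{min}^2}\,\tr\bigl(R(X_D-X_R)\bigr),
\]
which is even sharper than the claimed constant of $5$; the slack presumably reflects a simpler piece of book-keeping in the paper's write-up.

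The main obstacle is not a hard estimate but rather spotting the right algebraic decomposition. A naive attempt to bound the within-cluster deviation $\sum_{i,j\in A_a}((X_D)_{ij}-1/n_a)^2$ from only the row-sum constraint scales like $n_a$ and is hopelessly weak. What rescues the argument is the \emph{global} handle $\tr(X_D^2)\leq k$ arising from the joint PSD + doubly-stochastic structure of $X_D$; the remaining quantity $2T$ then encodes precisely the price $X_D$ must pay for placing mass $S$ off the diagonal, and pairs correctly with the $\Delta_\mathrm{min}^2 S/2$ cost from the trace.
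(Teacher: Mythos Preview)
Your argument is correct and shares the paper's two key ingredients: the eigenvalue bound $\operatorname{Tr}(X_D^2)\leq k$ (from $X_D$ being PSD and doubly stochastic), and the block structure of $R$ (constant $\xi$ on diagonal blocks, at least $\xi+\Delta_\mathrm{min}^2/2$ off them). Where you diverge is in the bookkeeping: you introduce the off-diagonal mass $S$ explicitly, bound $\operatorname{Tr}(R(X_D-X_R))\geq\tfrac12\Delta_\mathrm{min}^2 S$ in one line (the $\xi$'s cancel immediately), and pair it with $\|X_D-X_R\|_\mathrm{F}^2\leq 2T\leq 2S/n_\mathrm{min}$. The paper instead bounds $2\operatorname{Tr}((X_R-X_D)X_R)$ separately against $\operatorname{Tr}((X_D-X_R)R_\Omega)$ and $\operatorname{Tr}((X_D-X_R)R_{\Omega^c})$, then recombines these via a convex-combination trick and sends $\xi\to0^+$ at the end; this detour is what costs the extra unit in the constant. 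Your route is a genuine simplification of the same argument and yields the sharper constant $4$.
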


\begin{proof}
First, by Lemma~\ref{lem.XR}, we have $\|X_R\|_\mathrm{F}^2=k$.
We also claim that $\|X_D\|_\mathrm{F}^2\leq k$.
To see this, first note that $X_D1=1$ and $X_D\geq0$, and so the $i$th entry of $X_Dv$ can be interpreted as a convex combination of the entries of $v$.
Let $v$ be an eigenvector of $X_D$ with eigenvalue $\mu$, and let $i$ index the largest entry of $v$ (this entry is positive without loss of generality).
Then $\mu v_i=(X_Dv)_i\leq v_i$, implying that $\mu\leq1$.
Since the eigenvalues of $X_D$ lie in $[0,1]$, we may conclude that $\|X_D\|_\mathrm{F}^2\leq\operatorname{Tr}(X_D)=k$.
As such,
\begin{align}
\|X_D-X_R\|_\mathrm{F}^2
\nonumber
&=\|X_D\|_\mathrm{F}^2+\|X_R\|_\mathrm{F}^2-2\operatorname{Tr}(X_DX_R)\\
\nonumber
&\leq 2k-2\operatorname{Tr}(X_DX_R)\\
\nonumber
&=2k+2\operatorname{Tr}((X_R-X_D)X_R)-2\|X_R\|_\mathrm{F}^2\\
\label{eq.distance1}
&=2\operatorname{Tr}((X_R-X_D)X_R).
\end{align}
We will bound \eqref{eq.distance1} in two different ways, and a convex combination of these bounds will give the result.
For both bounds, we let $\Omega$ denote the indices in the diagonal blocks, and $\Omega^c$ the indices in the off-diagonal blocks, and $\Omega_t \subset \Omega$ denote the indices in the diagonal block for the cluster $t$.
In particular, $A_\Omega$ denotes the matrix that equals $A$ on the diagonal blocks and is zero on the off-diagonal blocks.
For the first bound, we use that $R_\Omega=\xi (11^\top)_\Omega$, and that $(X_R-X_D)_{\Omega} (11^T)_{\Omega}$ has non-negative entries (since both $X_R$ and $X_D$ have non-negative entries, $X_R1=X_D1=1,$ and $X_R=(X_R)_\Omega$).  Recalling that $R_{\Omega} = \xi$, we have
\begin{align}
\nonumber
2\operatorname{Tr}((X_R-X_D)X_R)
&=
\sum_{t=1}^k 2\operatorname{Tr}\left((X_R-X_D)(11^\top)_{\Omega_t} \frac{1}{n_t}\right)
\\ \nonumber
&\geq
\frac{2}{n_{\max}}\operatorname{Tr}\left((X_R-X_D)(11^\top)_{\Omega}\right)\\
&= -\frac{2}{\xi n_{\max}}\operatorname{Tr}\left((X_D-X_R)R_{\Omega}\right)
\label{eq.bound1}\end{align}
For the second bound, we first write $n_{\min} X_R=11^\top-(11^\top)_{\Omega^c} - \sum_{t=1}^k \left(1-\frac{n_{\min}}{n_t}\right)(11^\top)_{\Omega_t}$.
Since $X_R1=1=X_D1$, we then have
\begin{align*}
2\operatorname{Tr}((X_R-X_D)X_R)
&=\frac{2}{n_{\min}}\operatorname{Tr}\left((X_D-X_R)\left((11^\top)_{\Omega^c} +\sum_{t=1}^k \left(1-\frac{n_{\min}}{n_t}\right)(11^\top)_{\Omega_t}  - 11^\top \right)\right)\\
&=\frac{2}{n_{\min}}\operatorname{Tr}\left((X_D-X_R)\left((11^\top)_{\Omega^c} +\sum_{t=1}^k \left(1-\frac{n_{\min}}{n_t}\right)(11^\top)_{\Omega_t}  \right)\right)\\
&\leq \frac{2}{n_{\min}}\operatorname{Tr}((X_D-X_R)(11^\top)_{\Omega^c})\\
&=\frac{2}{n_{\min}}\operatorname{Tr}(X_D(11^\top)_{\Omega^c}),
\end{align*}
where the last and second-to-last steps use that $(X_R)_{\Omega^c}=0$.
Next, $X_D\geq0$ and $R_{\Omega^c}\geq (\xi + \Delta_{\min}^2/2)(11^\top)_{\Omega^c}$, and so we may continue:
\begin{align}
2\operatorname{Tr}((X_R-X_D)X_R)
\nonumber
&\leq\frac{2}{n_{\min}(\xi + \Delta_{\min}^2/2)}\operatorname{Tr}(X_DR_{\Omega^c})\\
\label{eq.bound2}
&=\frac{2}{n_{\min}(\xi + \Delta_{\min}^2/2)}\operatorname{Tr}((X_D-X_R)R_{\Omega^c}),
\end{align}
where again, the last step uses the fact that $(X_R)_{\Omega^c}=0$.
At this point, we have bounds of the form $x\geq ay_1$ with $a<0$ and $x\leq by_2$ with $b>0$ (explicitly, \eqref{eq.bound1} and \eqref{eq.bound2}), and we seek a bound of the form $x\leq c(y_1+y_2)$.
As such, we take the convex combination for $a,b$ such that $a^{-1}/(a^{-1}+b^{-1})<0$ and $b^{-1}/(a^{-1}+b^{-1})>0$
\[
x
\leq \frac{a^{-1}}{a^{-1}+b^{-1}}ay_1+\frac{b^{-1}}{a^{-1}+b^{-1}}by_2
=\frac{1}{a^{-1}+b^{-1}}(y_1+y_2).
\]
Taking $a=-2/(\xi n_{\max})$ and $b=2/(n_{\min}(\xi+\Delta_\mathrm{min}^2/2))$ and combining with \eqref{eq.distance1} then gives
\[
\|X_D-X_R\|_\mathrm{F}^2
\leq2\operatorname{Tr}((X_R-X_D)X_R)
\leq\Big( \frac{\xi}{2}(n_{\min} -n_{\max})+\frac{n_{\min}}{4}\Delta_\mathrm{min}^2\Big)^{-1}\operatorname{Tr}((X_D-X_R)(R_\Omega+R_{\Omega^c})),
\]
choosing $\xi>0$ sufficiently small and simplifying yields the result.
\end{proof}

We will bound $\operatorname{Tr}(R(X_D-X_R))$ in terms of the following:
For each $N\times N$ real symmetric matrix $M$, let ${\cal F}(M)$ denote the value of the following program:
\begin{alignat}{2}
\label{eq.delta}
{\cal F}(M) = ~& \text{maximum}  &     & |\operatorname{Tr}(MX)| \\
\nonumber
& \text{subject to} & \quad & 
\begin{aligned}[t]
\operatorname{Tr}(X)&=k,~
X1=1,~
X\geq0,~
X\succeq0
\end{aligned}
\end{alignat}

\begin{lemma}
\label{lem.trace to delta}
Put $\widetilde{D}:=P_{1^\perp}DP_{1^\perp}$ and $\tilde{R}:=P_{1^\perp}RP_{1^\perp}$.
Then $\operatorname{Tr}(R(X_D-X_R))
\leq 2{\cal F}(\tilde{D}-\tilde{R})$.
\end{lemma}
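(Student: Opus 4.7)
The plan is to combine the SDP-optimality of $X_D$ with a ``centering'' identity that lets me swap $R-D$ for $\tilde R-\tilde D$ inside the trace pairing against $X_D-X_R$. The key observation, which I would isolate first, is that every symmetric feasible matrix $X$ for \eqref{eq.kmeansSDP} satisfies $X1=1$ and $1^\top X=1^\top$, from which a direct expansion of $P_{1^\perp}=I-\tfrac{1}{N}11^\top$ gives $P_{1^\perp}XP_{1^\perp}=X-\tfrac{1}{N}11^\top$. Applying this to both $X_D$ and $X_R$, the rank-one correction cancels in the difference, so $X_D-X_R=P_{1^\perp}(X_D-X_R)P_{1^\perp}$. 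By cyclicity of the trace this yields the identity $\operatorname{Tr}(M(X_D-X_R))=\operatorname{Tr}(\tilde M(X_D-X_R))$ for every symmetric $M$.

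With that identity available, I would write
\[
\operatorname{Tr}(R(X_D-X_R))=\operatorname{Tr}((R-D)(X_D-X_R))+\operatorname{Tr}(D(X_D-X_R)).
\]
Because $X_D$ minimizes $\operatorname{Tr}(D\,\cdot\,)$ over the SDP-feasible set and $X_R$ is feasible, the second term is at most zero. Thus $\operatorname{Tr}(R(X_D-X_R))\leq\operatorname{Tr}((R-D)(X_D-X_R))$, and the centering identity converts the right-hand side into $\operatorname{Tr}((\tilde R-\tilde D)(X_D-X_R))$.

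To finish, I would split
\[
\operatorname{Tr}((\tilde R-\tilde D)(X_D-X_R))=\operatorname{Tr}((\tilde D-\tilde R)X_R)-\operatorname{Tr}((\tilde D-\tilde R)X_D),
\]
bound each term by its absolute value, and note that $X_D$ and $X_R$ are each feasible for the maximization \eqref{eq.delta} defining ${\cal F}(\tilde D-\tilde R)$, so each absolute value is at most ${\cal F}(\tilde D-\tilde R)$. The triangle inequality then produces the factor of $2$.

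I do not anticipate any real obstacle. The only step requiring any thought is verifying the projection identity, which is precisely where the SDP constraints $X1=1$ and symmetry of the feasible points are used; everything else is a one-line optimality argument plus the triangle inequality.
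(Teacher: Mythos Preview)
Your proof is correct and follows essentially the same approach as the paper's: both combine the centering identity $\operatorname{Tr}(M(X_D-X_R))=\operatorname{Tr}(\tilde M(X_D-X_R))$ (which the paper derives via $X_D=P_{1^\perp}X_DP_{1^\perp}+\tfrac{1}{N}11^\top$), the SDP-optimality of $X_D$ to discard the $D$-contribution, and the triangle inequality over the two feasible points $X_D,X_R$ to produce the factor of~$2$. Your ordering is slightly tidier---you isolate the projection identity once and apply it uniformly, whereas the paper re-derives it separately for the $D$- and $R$-terms---but the substance is identical.
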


\begin{proof}
Since $X_D$ and $X_R$ are both feasible in \eqref{eq.delta}, we have
\begin{align*}
-\operatorname{Tr}(\widetilde{D}X_D)+\operatorname{Tr}(\widetilde{R}X_D)
\leq |\operatorname{Tr}((\widetilde{D}-\widetilde{R})X_D)|
&\leq {\cal F}(\widetilde{D}-\widetilde{R}),\\
\operatorname{Tr}(\widetilde{D}X_R)-\operatorname{Tr}(\widetilde{R}X_R)
\leq |\operatorname{Tr}((\widetilde{D}-\widetilde{R})X_R)|
&\leq {\cal F}(\widetilde{D}-\widetilde{R}),
\end{align*}
and adding followed by reverse triangle inequality gives
\begin{equation}
\label{eq.delta1}
2{\cal F}(\widetilde{D}-\widetilde{R})
\geq \Big(\operatorname{Tr}(\widetilde{D}X_R)-\operatorname{Tr}(\widetilde{D}X_D)\Big)+\Big(\operatorname{Tr}(\widetilde{R}X_D)-\operatorname{Tr}(\widetilde{R}X_R)\Big).
\end{equation}
Write $X_{\widetilde{D}}:=P_{1^\perp}X_DP_{1^\perp}$.
Note that $X_D1=(X_D)^{T}1$ implies $X_D=X_{\widetilde{D}}+(1/N)11^\top$, and so
\[
\operatorname{Tr}(\widetilde{D}X_D)
=\operatorname{Tr}(DX_{\widetilde{D}})
=\operatorname{Tr}\Big(D\big(X_D-(1/N)11^\top\big)\Big)
=\operatorname{Tr}(DX_D)+\frac{1}{N}1^\top D1.
\]
Similarly, $\operatorname{Tr}(\widetilde{D}X_R)=\operatorname{Tr}(DX_R)+\frac{1}{N}1^\top D1$, and so
\[
\operatorname{Tr}(\widetilde{D}X_R)-\operatorname{Tr}(\widetilde{D}X_D)
=\operatorname{Tr}(DX_R)-\operatorname{Tr}(DX_D)
\geq0,
\]
where the last step follows from the optimality of $X_D$. 
Similarly, $\operatorname{Tr}(\widetilde{R}X_D)-\operatorname{Tr}(\widetilde{R}X_R)=\operatorname{Tr}(R(X_D-X_R))$, and so \eqref{eq.delta1} implies the result.
\end{proof}

Now it suffices to bound ${\cal F}(\widetilde{D}-\widetilde{R})$. For an $n_1 \times n_2$ matrix $X$, consider the matrix norm 
$$\| X \|_{1,\infty} := \sum_{i=1}^{n_1} \max_{1 \leq j \leq n_2} | X_{i,j}|  =  \sum_{i=1}^{n_1} \| X_{i,.} \|_{\infty}.$$

The following lemma will be useful:

\begin{lemma}
\label{lem.bounds on delta}
${\cal F}(M)\leq \min\left\{ \|M \|_{1,\infty}, \hspace{.5mm} \min\{k, r\}\|M\|_{2 \rightarrow 2} \right\}$ where $r = \text{rank}(M)$.
\end{lemma}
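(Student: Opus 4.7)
The plan is to bound $|\operatorname{Tr}(MX)|$ uniformly over all $X$ feasible in the program defining ${\cal F}(M)$ by each of the two quantities on the right, and then take the minimum. The two bounds are handled independently by exploiting different subsets of the SDP constraints.

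For the $\|M\|_{1,\infty}$ bound, I would exploit entrywise non-negativity of $X$ together with the row-sum constraint $X1 = 1$. Starting from $|\operatorname{Tr}(MX)| \leq \sum_{i,j} |M_{ij}| X_{ij}$ (valid because $X_{ij}\geq 0$), pulling $\max_j |M_{ij}|$ out of each row and using $\sum_j X_{ij}=1$ collapses the inner sum to $\max_j|M_{ij}|$; summing over $i$ recovers exactly $\|M\|_{1,\infty}$. This step is a routine calculation and uses only the constraints $X\geq 0$ and $X1=1$.

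For the spectral bound, I would invoke the eigendecomposition $M = \sum_{i=1}^r \lambda_i u_i u_i^\top$, which is available because $M$ is symmetric (recall the intended application is $M = \widetilde{D} - \widetilde{R}$, both factors of which are symmetric). Then $\operatorname{Tr}(MX) = \sum_i \lambda_i\, u_i^\top X u_i$, and since $X\succeq 0$ each quadratic form $u_i^\top X u_i$ is non-negative, so pulling $\|M\|_{2\to 2}=\max_i|\lambda_i|$ out of the sum gives
$$|\operatorname{Tr}(MX)| \;\leq\; \|M\|_{2\to 2}\cdot \operatorname{Tr}(XP),$$
where $P=\sum_{i=1}^r u_iu_i^\top$ is the orthogonal projection onto the range of $M$.

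It then suffices to show $\operatorname{Tr}(XP)\leq \min\{k,r\}$. The bound $\operatorname{Tr}(XP)\leq k$ is immediate: since $0\preceq P\preceq I$ and $X\succeq 0$, we get $\operatorname{Tr}(XP)\leq \operatorname{Tr}(X)=k$. The bound $\operatorname{Tr}(XP)\leq r$ uses the observation from the proof of Lemma~\ref{lem.bound on distance} that the eigenvalues of any feasible $X$ lie in $[0,1]$, so $\|X\|_{2\to 2}\leq 1$ and each of the $r$ terms $u_i^\top X u_i$ is at most $1$. I don't expect any substantive obstacle here; the only subtle point is remembering to reuse the spectral bound on $X$ from the earlier lemma to get the $r$ factor rather than settling for $k$.
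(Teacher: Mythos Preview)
Your proof is correct. The $\|M\|_{1,\infty}$ bound is essentially identical to the paper's argument (row-wise H\"older with $X\geq 0$ and $X1=1$).

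For the spectral bound, the paper takes a different route: it invokes Von Neumann's trace inequality $|\operatorname{Tr}(MX)|\leq\sum_i\alpha_i\beta_i$, where $\alpha_i,\beta_i$ are the singular values of $X$ and $M$, and then uses $\alpha_i\leq 1$, $\sum_i\alpha_i\leq k$, and $\beta_i=0$ for $i>r$ to conclude. Your argument instead eigendecomposes $M$ and bounds $\operatorname{Tr}(XP)$ directly using $X\succeq 0$, $\operatorname{Tr}(X)=k$, and $\|X\|_{2\to 2}\leq 1$. Your approach is arguably more elementary, avoiding the appeal to Von Neumann, but it leans on the symmetry of $M$ (which is part of the hypothesis here, so this is fine). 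The Von Neumann route would also cover non-symmetric $M$ without modification. Both arrive at the same bound with comparable effort.
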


\begin{proof}
The first bound follows from the classical version of H\"{o}lder's inequality (recalling that $X_{i, j} \geq 0$ and $X 1 = 1$ by design):
\[
|\operatorname{Tr}(MX)|
\leq \sum_{i=1}^N \sum_{j=1}^N | M_{i,j} X_{i,j} | \\
\leq \sum_{i=1}^N \| M_{i,.} \|_{\infty} \left( \sum_{j=1}^N | X_{i,j} | \right) \\
= \sum_{i=1}^N \| M_{i,.} \|_{\infty} 
\]

The second bound is a consequence of Von Neumann's trace inequality: if the singular values of $X$ and $M$ are respectively $\alpha_1\geq \ldots \geq \alpha_N $ and $\beta_1\geq \ldots \geq \beta_N $ then 

\[
|\operatorname{Tr}(MX)|
\leq\sum_{i=1}^N \alpha_i \beta_i
\]

Since $X$ is feasible in \eqref{eq.delta} we have $\alpha_1\leq 1$ and $\sum_{i=1}^N \alpha_{i}\leq k$. Using that $\operatorname{rank}(M)= r$  we get

\[
|\operatorname{Tr}(MX)|
\leq\sum_{i=1}^k \beta_i \leq \min\{k, r\}\|M\|_{2\rightarrow2}
\qedhere
\]

\end{proof}

\begin{proof}[Proof of Theorem~\ref{thm.distance typically small}]
Write $x_{t,i}=r_{t,i}+\gamma_t$.
Then
\begin{align*}
(D_{ab})_{ij}
&=\|x_{a,i}-x_{b,j}\|_2^2\\
&=\|(r_{a,i}+\gamma_a)-(r_{b,j}+\gamma_b)\|_2^2
=\|r_{a,i}-r_{b,j}\|_2^2+2\langle r_{a,i}-r_{b,j},\gamma_a-\gamma_b\rangle+\|\gamma_a-\gamma_b\|_2^2.
\end{align*}
Furthermore,
\[
\|r_{a,i}-r_{b,j}\|_2^2
=\|r_{a,i}\|_2^2-2\langle r_{a,i},r_{b,j}\rangle+\|r_{b,j}\|_2^2
=((\mu 1^\top + G^\top G + 1\mu^\top)_{ab})_{ij},
\]
where $G$ is the matrix whose $(a,i)$th column is $r_{a,i}$, and $\mu$ is the column vector whose $(a,i)$th entry is $\|r_{a,i}\|_2^2$.
Recall that 
$$(R_{ab})_{ij}= \xi + \Delta_{ab}^2/2 +  
\max\left\{ 0, \Delta_{ab}^2/2 +2 \langle r_{a,i}-r_{b,j}, \gamma_a-\gamma_b \rangle\right\}$$

Then $P_{1^\perp} (D-R) P_{1^\perp} = P_{1^\perp}G^\top G P_{1^\perp} + P_{1^\perp}FP_{1^\perp}$ where 
$$(F_{ab})_{ij}=\left\{ 
\begin{array}{ll} \Delta_{ab}^2/2 + 2\langle r_{a,i}-r_{b,j}, \gamma_a-\gamma_b \rangle
& \text{if }\quad 2\langle r_{a,i}-r_{b,j}, \gamma_a-\gamma_b \rangle \leq -\Delta_{ab}^2/2
\\
0 & \text{otherwise.}
\end{array}\right. $$

Considering Lemma~\ref{lem.bounds on delta} and that $\operatorname{rank}(G^\top G)\leq m$ we will bound 
\begin{equation}{\cal F}(M)\leq \min\{ k, m \}\|P_{1^\perp}G^\top G P_{1^\perp}\|_{2\to2} + \frac{1}{n_{\min}}\|P_{1^\perp}FP_{1^\perp}\|_{1,\infty}. \label{bound.delta}
\end{equation}

For the first term:
\[
\|P_{1^\perp}G^\top GP_{1^\perp}\|_{2\rightarrow2}
\leq\|G^\top G\|_{2\rightarrow2}
=\|G^\top \|_{2\rightarrow2}^2.
\]
Note that if the rows $X^{(t)}_i$, $i=1,\ldots n_t$ of $G^\top$ come from a distribution with second moment matrix $\Sigma_t$, then $X_i^{(t)}$ has the same distribution as $\Sigma_t^{1/2}g$, where $g$ is an isotropic random vector. Then $\|G^\top\|\leq \sigma_{\max} \|\tilde G^\top \|$ where the rows of $\tilde G^\top$ are isotropic random vectors. 

By Theorem~5.39 in~\cite{Vershynin:12}, we have that there exist $c_1$ and $c_2$ constants depending only on the subgaussian norm of the rows of $G$ such that with probability $\geq1-\eta$:
\[
\|G^\top \|_{2\rightarrow2}
\leq\sigma_{\max}\Big(\sqrt{N}+c_1\sqrt{m}+\sqrt{c_2\log(2/\eta)}\Big).
\]

Note that by Corollary~3.35, when the rows of $G^\top$ are Gaussian random vectors we have the result for $c_1=1$ and $c_2=2$.

For bounding the second term in \eqref{bound.delta}, the triangle inequality gives $ \|P_{1^\perp}F P_{1^\perp}\|_{1,\infty} \leq 4 \|F\|_{1,\infty}$.
In order to get a handle on $\|F\|_{1,\infty}$ we first compute the expected value of its entries using that $|2 \langle r_{a,i}-r_{b,j}, \gamma_a-\gamma_b \rangle|$ obeys a folded subgaussian distribution, coming from a subgaussian with variance at most $8\sigma_{\max}^2\Delta_{ab}^2$:
\begin{align*}
\mathbb E |(F_{ab})_{ij}| 
&\leq \left(\Delta_{ab}^2/2 + \mathbb E |2 \langle r_{a,i}-r_{b,j}, \gamma_a-\gamma_b \rangle| \right)\,\mathbb P\left(2\langle r_{a,i}-r_{b,j}, \gamma_a-\gamma_b \rangle < -\Delta_{ab}^2/2 \right) \\
&\leq \left(\frac{\Delta_{ab}^2}{2} + \frac{4\sigma_{\max}\Delta_{ab}}{\sqrt\pi} \right)\exp\left(-\frac{\Delta_{ab}^2}{64\sigma_{\max}^2}\right)
\\
&\leq \Delta_{ab}^2 \exp \left(-\frac{\Delta_{ab}^2}{64\sigma_{\max}^2}\right) \text{ assuming } \Delta_{\min}^2>16k\sigma^2_{\max},\quad k\geq 2
\\
&\leq \Delta_{ab}^2 \frac{64^2\sigma_{\max}^4}{\Delta_{ab}^4} \text{ using } e^{-x}\leq\frac{1}{x^2} 
\text{ for }x>0.
\\
&\leq -\frac{256\sigma_{\max}^2}{k} \text{ using again } \Delta_{\min}^2>16k\sigma^2_{\max},\quad k\geq 2
\\
&=O(\sigma_{\max}^2/k)
\end{align*}
Now we can write $F=2(L-L^\top)$ where $L_{a,i}:=(L_{ab})_{ij}\in \{\langle r_{a,i}, \gamma_a-\gamma_b \rangle, 0\}$ has independent rows, and $\mathbb E |(L_{ab})_{ij}| \leq \mathbb E |(F_{ab})_{ij}| =O(\sigma_{\max}^2/k)$. We can then bound 
$$
\|F\|_{1,\infty} \leq 4 \| L \|_{1, \infty} \leq \| L^{small} \|_{1,1}
$$ 
where $L^{small} \in \mathbb{R}^{N \times k }$ is a submatrix of distinct columns.

Then we have a high-probability estimate:
\begin{align*}
\mathbb P (\|L^{small} \|_{1,1}>t)
&\leq  \mathbb P \left( 2k\sum_{a=1}^k \sum_{i=1}^{n_a} |L_{a,i}|>t\right) 
\leq  \mathbb P \left( \sum_{a=1}^k \sum_{i=1}^{n_a} \left(|L_{a,i}| - \mathbb E |L_{a,i}|\right) >\frac{t}{2k}-c_3\sigma_{\max}^2n_{\max} \right) 
\end{align*}

 Using that $L_{a,i}$ are independent subgaussian random variables, we know there exist constants $c_4,c_5\geq 0$ such that 

$$\mathbb P \left( \sum_{a=1}^k \sum_{i=1}^{n_a} \left(|L_{a,i}| - \mathbb E |L_{a,i}|\right) >  u \right) \leq c_4\exp\left(-c_5\frac{u^2}{N}\right)$$

So, choosing $t=2 c_3 k n_{\max} \sigma_{\max}^2 + \sqrt{\frac{N}{c_5}\log{\frac{c_4}{\eta}}}$, we get that with probability at least $1-\eta$

$$\|P_{1^\perp}FP_{1^\perp}\|_{1,\infty} \leq 8 c_3 k n_{\max} \sigma_{\max}^2 + 4\sqrt{\frac{N}{c_5}\log{\frac{c_4}{\eta}}}$$

Putting everything together, we get that there exist constants $C_1,C_2,C_3$ such that with probability at least $1-2\eta$
\begin{align*}
\|X_D-X_R\|_\mathrm{F}^2
&\leq\frac{5}{n_{\min}\Delta_\mathrm{min}^2}\operatorname{Tr}(R(X_D-X_R))\\
&\leq\frac{10}{n_{\min}\Delta_\mathrm{min}^2}{\cal F}(\widetilde{D}-\widetilde{R})\\
&\leq C_1 \frac{ \min\{k, m\} \left( \sqrt N+ c_1 \sqrt m+\sqrt{ c_2\log(2/\eta)} \right)^2 \sigma_{\max}^2}{n_{\min}\Delta_\mathrm{min}^2} +C_2 \frac{k n_{\max}\sigma_{\max}^2}{n_{\min}\Delta_{\min}^2}+ C_3 \frac{\sqrt{ N \log{c_4/\eta }} }{n_{\min} \Delta_{\min}^2 }.
\end{align*}

If additionally we require $N > \max \{ c_1 m, c_2 \log(2/\eta), \log(c_4/\eta) \},$ we get 

$$\|X_D-X_R\|_\mathrm{F}^2 \leq C \frac{k \alpha\sigma_{\max}^2 (\alpha + \min\{k, m\})}{\Delta_{\min}^2 }$$
 
Rearranging gives the result.
\end{proof}

\section{Denoising} \label{sec:denoising}

In the special case where each Gaussian is spherical with the same entrywise variance $\sigma^2$ and the same number $n$ of samples, the main result is says:
\[
\|X_D-X_R\|_\mathrm{F}^2
\lesssim \frac{k^2 \sigma^2}{\Delta_\mathrm{min}^2}
\]
with high probability as $n\rightarrow\infty$.

Let $P$ denote the $m\times N$ matrix whose $(a,i)$th column is $x_{a,i}$.
Then $PX_R$ is an $m\times N$ matrix whose $(a,i)$th column is $\tilde \gamma_a $, a good estimate of $\gamma_a$, and so one might expect $PX_D$ to have its columns be close to the $\tilde\gamma_a$'s.
This is precisely what the following theorem gives:

\begin{theorem}
\label{thm.denoise}
Suppose $\sigma\lesssim \Delta_\mathrm{min}/\sqrt k$.
Let $P$ denote the $m\times N$ matrix whose $(a,i)$th column is $x_{a,i}$, and let $c_{a,i}$ denote the $(a,i)$th column of $PX_D$.
Then
\[
\frac{1}{N}\sum_{a=1}^k\sum_{i=1}^n\|c_{a,i}-\tilde\gamma_a\|_2^2
\lesssim \frac{\|\Gamma\|_{2\rightarrow2}^2}{\Delta_\mathrm{min}^2}\cdot k\sigma^2
\]
with high probability as $n\rightarrow\infty$.
Here, the $a$th column of $\Gamma$ is $\tilde \gamma_a-\frac{1}{k}\sum_{b=1}^k\tilde\gamma_b$.
\end{theorem}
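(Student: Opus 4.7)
The plan is to bound $\tfrac{1}{N}\sum_{a,i}\|c_{a,i}-\tilde\gamma_a\|_2^2 = \tfrac{1}{N}\|P(X_D-X_R)\|_\mathrm{F}^2$, using that $PX_R$ has constant column $\tilde\gamma_a$ within each cluster. Decompose $P = \bar\Gamma + E$ with $\bar\Gamma := PX_R$ (the ``signal'' part) and $E := P(I - X_R)$ (the ``noise'' part), so that $P(X_D - X_R) = \bar\Gamma(X_D - X_R) + E(X_D - X_R)$, and bound each term separately.

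For the signal term, the critical observation is that $(X_D - X_R)\mathbf{1} = 0$, since both matrices are SDP-feasible and satisfy $X\mathbf{1} = \mathbf{1}$. Hence for any $u \in \RR^m$, $\bar\Gamma(X_D - X_R) = (\bar\Gamma - u\mathbf{1}^\top)(X_D - X_R)$. Choose $u := \tfrac{1}{k}\sum_b \tilde\gamma_b$; in the balanced case $n_t = n$, the matrix $\bar\Gamma - u\mathbf{1}^\top$ factors as $\Gamma M$, where $M \in \RR^{k\times N}$ satisfies $M_{a,(b,i)} = \delta_{ab}$ and therefore $MM^\top = n I_k$. This gives $\|\bar\Gamma - u\mathbf{1}^\top\|_{2\to 2}^2 = n\|\Gamma\|_{2\to 2}^2$. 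Combining with the submultiplicativity of the spectral-Frobenius product and Theorem~\ref{thm.distance typically small}, which yields $\|X_D - X_R\|_\mathrm{F}^2 \lesssim k^2\sigma^2/\Delta_\mathrm{min}^2$ under the hypothesis $\sigma \lesssim \Delta_\mathrm{min}/\sqrt{k}$, and using $N = kn$, we obtain $\tfrac{1}{N}\|\bar\Gamma(X_D - X_R)\|_\mathrm{F}^2 \lesssim k\|\Gamma\|_{2\to 2}^2\sigma^2/\Delta_\mathrm{min}^2$, which is exactly the claimed bound.

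For the noise term, write $P = \Gamma_\mathrm{cen} M + G$, where $\Gamma_\mathrm{cen}$ is the $m\times k$ matrix of true centers and $G$ is the residual matrix with columns $r_{a,i}$. Since the columns of $\Gamma_\mathrm{cen} M$ are constant within clusters and $X_R$ is the orthogonal projection onto the span of cluster indicators, $(\Gamma_\mathrm{cen} M)(I - X_R) = 0$, giving $E = G(I - X_R)$ and hence $\|E\|_{2\to 2} \leq \|G\|_{2\to 2}$. Reusing the Vershynin subgaussian bound from the proof of Theorem~\ref{thm.distance typically small} (Theorem~5.39 of \cite{Vershynin:12}) yields $\|G\|_{2\to 2} \lesssim \sigma\sqrt{N}$ with high probability as $n \to \infty$ with $m$ fixed, so that $\tfrac{1}{N}\|E(X_D - X_R)\|_\mathrm{F}^2 \lesssim k^2\sigma^4/\Delta_\mathrm{min}^2$. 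Testing the definition of the spectral norm on $(e_a - e_b)/\sqrt{2}$ for indices $a,b$ realizing $\Delta_\mathrm{min}$ gives $\|\Gamma\|_{2\to 2}^2 \geq \tfrac{1}{2}\Delta_\mathrm{min}^2$ (asymptotically), so the noise bound is absorbed into the signal bound thanks to $k\sigma^2 \lesssim \Delta_\mathrm{min}^2$.

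The main obstacle is the invariance trick $\bar\Gamma(X_D - X_R) = (\bar\Gamma - u\mathbf{1}^\top)(X_D - X_R)$: directly bounding $\|\bar\Gamma\|_{2\to 2}$ would introduce an unwanted $\sqrt{N}\|u\|$ term depending on the absolute locations of the centers rather than their spread, ruining the dimension-free scaling in $\|\Gamma\|_{2\to 2}$. Once this is in place, everything reduces to Theorem~\ref{thm.distance typically small} together with standard subgaussian spectral concentration. The one mildly delicate point is verifying that the empirical centroid fluctuations $\tilde\gamma_a - \gamma_a$ are negligible at the relevant scale (of order $\sigma/\sqrt{n}$), which is where the ``as $n \to \infty$'' hypothesis enters.
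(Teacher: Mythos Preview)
Your proposal is correct and follows essentially the same route as the paper's proof: the paper writes $P=\Gamma\otimes 1^\top + G$ after assuming without loss of generality that $\sum_a\tilde\gamma_a=0$ (which is equivalent to your explicit $(X_D-X_R)\mathbf{1}=0$ trick), bounds $\|P\|_{2\to2}$ by the triangle inequality using $\|\Gamma\otimes 1^\top\|_{2\to2}^2=n\|\Gamma\|_{2\to2}^2$ and $\|G\|_{2\to2}\lesssim\sqrt{N}\sigma$, and then applies Theorem~\ref{thm.distance typically small}. Your organization is slightly more careful in places (e.g., you make explicit the absorption of the noise term via $\|\Gamma\|_{2\to2}^2\geq\Delta_\mathrm{min}^2/2$ and the distinction between $\gamma_a$ and $\tilde\gamma_a$), but the ingredients are identical.
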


The proof can be found at the end of this section.
For comparison,
\begin{equation}
\label{eq.comparison}
\mathbb{E}\bigg[\frac{1}{N}\sum_{a=1}^k\sum_{i=1}^n\|x_{a,i}-\gamma_a\|_2^2\bigg]
=m\sigma^2,
\end{equation}
meaning the $c_{a,i}$'s serve as ``denoised'' versions of the $x_{a,i}$'s provided $\|\Gamma\|_{2\rightarrow2}$ is not too large compared to $\Delta_\mathrm{min}$.
The following lemma investigates this provision:

\begin{lemma}
\label{lem.shape}
For every choice of $\{\tilde \gamma_a\}_{a=1}^k$, we have
\[
\frac{\|\Gamma\|_{2\rightarrow2}^2}{\Delta_\mathrm{min}^2}
\geq \frac{1}{2},
\]
with equality if $\{\tilde\gamma_a\}_{a=1}^k$ is a simplex.
More generally, if the following are satisfied simultaneously:
\begin{itemize}
\item[(i)] $\sum_{a=1}^k\tilde\gamma_a=0$,
\item[(ii)] $\|\tilde\gamma_a\|_2\asymp 1$ for every $a\in\{1,\ldots,k\}$, and
\item[(iii)] $|\langle \tilde\gamma_a,\tilde\gamma_b\rangle|\lesssim 1/k$ for every $a,b\in\{1,\ldots,k\}$ with $a\neq b$,
\end{itemize}
then
\[
\frac{\|\Gamma\|_{2\rightarrow2}^2}{\Delta_\mathrm{min}^2}
\lesssim 1.
\]
\end{lemma}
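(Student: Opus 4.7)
The plan is to treat Lemma~\ref{lem.shape} as a purely linear-algebraic statement about the matrix $\Gamma$ whose columns are the centered vectors $\tilde\gamma_a - \bar\gamma$, where $\bar\gamma := \frac{1}{k}\sum_b \tilde\gamma_b$. The key observation driving both parts is that centering cancels on differences: for any $a\neq b$ one has $\Gamma e_a - \Gamma e_b = \tilde\gamma_a - \tilde\gamma_b$.

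For the universal lower bound $\|\Gamma\|_{2\to 2}^2 \geq \Delta_{\min}^2/2$, I would simply test the operator norm against the unit vector $v := (e_a-e_b)/\sqrt{2}$, giving $\|\Gamma v\|_2^2 = \tfrac{1}{2}\|\tilde\gamma_a-\tilde\gamma_b\|_2^2$; minimizing over $a\neq b$ yields the claim. To verify equality for a regular simplex, I would translate so that the simplex is centered at the origin (so $\Gamma$ has columns $\tilde\gamma_a$) and compute the Gram matrix directly: from $\|\tilde\gamma_a\|_2^2 = r^2$ and $\langle\tilde\gamma_a,\tilde\gamma_b\rangle = -r^2/(k-1)$ for $a\neq b$, one obtains $\Gamma^\top\Gamma = \frac{r^2 k}{k-1}(I-\tfrac{1}{k}\mathbf{1}\mathbf{1}^\top)$, whose largest eigenvalue is $r^2 k/(k-1)$, while $\Delta_{\min}^2 = 2r^2 + 2r^2/(k-1) = 2r^2 k/(k-1)$; the ratio is exactly $1/2$.

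For the general upper bound under (i)--(iii), hypothesis (i) again reduces $\Gamma$ to the matrix with columns $\tilde\gamma_a$. I would split the Gram matrix as $\Gamma^\top\Gamma = D + E$, where $D$ is the diagonal part and $E$ is off-diagonal. Hypothesis (ii) gives $\|D\|_{2\to 2}\asymp 1$, while hypothesis (iii) bounds each entry of $E$ by a constant multiple of $1/k$, so either a Frobenius bound ($\|E\|_F^2 \lesssim k^2\cdot(1/k)^2 = 1$) or a row-sum Schur bound gives $\|E\|_{2\to 2}\lesssim 1$. Hence $\|\Gamma\|_{2\to 2}^2 = \|\Gamma^\top\Gamma\|_{2\to 2}\lesssim 1$. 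On the denominator side, the pair achieving $\Delta_{\min}$ satisfies $\Delta_{\min}^2 = \|\tilde\gamma_a\|_2^2+\|\tilde\gamma_b\|_2^2-2\langle\tilde\gamma_a,\tilde\gamma_b\rangle \asymp 1$ by (ii) and (iii), so the ratio is $O(1)$.

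I do not expect a serious obstacle: the whole argument reduces to a clever choice of test vector for the lower bound and an entrywise decomposition of the Gram matrix for the upper bound. The only thing that requires care is tracking how the implicit constants interact with $k$, since (ii), (iii), and the derived lower bound on $\Delta_{\min}^2$ all involve $k$, and one should check that the $1/k$ off-diagonal decay exactly compensates for having $k^2$ such entries so that no hidden $k$-dependence leaks into the final ratio.
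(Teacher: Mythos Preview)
Your proposal is correct and follows essentially the same route as the paper: the lower bound via the test vector $(e_a-e_b)/\sqrt{2}$ is exactly the paper's argument (the paper phrases it as $\|\Gamma\|_{2\to2}^2\geq\|\tilde\gamma_a-\tilde\gamma_b\|_2^2/2$ and passes through $\Delta_{\max}^2/\Delta_{\min}^2\geq1$), the simplex computation is identical, and for the upper bound the paper uses Gershgorin on $\Gamma^\top\Gamma$, which is precisely your row-sum/Schur option on the $D+E$ split.
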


See the end of the section for the proof.
Plugging these estimates for $\|\Gamma\|_{2\rightarrow2}^2/\Delta_\mathrm{min}^2$ into Theorem~\ref{thm.denoise} shows that the $c_{a,i}$'s in this case exhibit denoising to an extent that the $m$ in \eqref{eq.comparison} can be replaced with $k$:
\[
\frac{1}{N}\sum_{a=1}^k\sum_{i=1}^n\|c_{a,i}-\tilde\gamma_a\|_2^2
\lesssim k\sigma^2.
\]
For more general choices of $\{\tilde\gamma_a\}_{a=1}^k$, one may attempt to estimate $\|\Gamma\|_{2\rightarrow2}$ in terms of $\Delta_\mathrm{max}$, but this comes with a bit of loss in the denoising estimate:

\begin{corollary}
If $k\sigma\lesssim\Delta_\mathrm{min}\leq\Delta_\mathrm{max}\lesssim K\sigma$, then
\[
\displaystyle{\frac{1}{N}\sum_{a=1}^k\sum_{i=1}^n\|c_{a,i}-\tilde\gamma_a\|_2^2\lesssim K^2\sigma^2}
\]
with high probability as $n\rightarrow\infty$.
\end{corollary}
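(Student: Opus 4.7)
The plan is to deduce this corollary directly from Theorem~\ref{thm.denoise} by obtaining a crude bound on $\|\Gamma\|_{2\rightarrow 2}^2/\Delta_\mathrm{min}^2$ using only $\Delta_\mathrm{max}$ (instead of the sharper structural estimates in Lemma~\ref{lem.shape}).

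First, I would verify the hypothesis of Theorem~\ref{thm.denoise}. The assumption $k\sigma\lesssim\Delta_\mathrm{min}$ implies $\sqrt{k}\sigma\lesssim\Delta_\mathrm{min}$ (for $k\geq 1$), which is exactly what Theorem~\ref{thm.denoise} requires. Thus with high probability as $n\to\infty$,
\[
\frac{1}{N}\sum_{a=1}^k\sum_{i=1}^n \|c_{a,i}-\tilde\gamma_a\|_2^2 \lesssim \frac{\|\Gamma\|_{2\rightarrow 2}^2}{\Delta_\mathrm{min}^2}\cdot k\sigma^2.
\]

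Next, I would bound $\|\Gamma\|_{2\rightarrow 2}$. Writing $\bar\gamma:=\frac{1}{k}\sum_{b=1}^k\tilde\gamma_b$, the $a$th column of $\Gamma$ is $\tilde\gamma_a-\bar\gamma = \frac{1}{k}\sum_{b=1}^k(\tilde\gamma_a-\tilde\gamma_b)$, so by the triangle inequality $\|\tilde\gamma_a-\bar\gamma\|_2\leq \Delta_\mathrm{max}$ (where $\tilde\gamma_a$ is close to $\gamma_a$ for large $n$, so the pairwise distances between centroids are close to those between the true centers). Passing to the Frobenius norm then gives
\[
\|\Gamma\|_{2\rightarrow 2}^2 \leq \|\Gamma\|_\mathrm{F}^2 = \sum_{a=1}^k\|\tilde\gamma_a-\bar\gamma\|_2^2 \leq k\Delta_\mathrm{max}^2.
\]

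Finally, I would combine these estimates with the hypothesis. Substituting into the denoising bound gives
\[
\frac{1}{N}\sum_{a=1}^k\sum_{i=1}^n \|c_{a,i}-\tilde\gamma_a\|_2^2 \lesssim \frac{k\Delta_\mathrm{max}^2}{\Delta_\mathrm{min}^2}\cdot k\sigma^2 = k^2\sigma^2\cdot\frac{\Delta_\mathrm{max}^2}{\Delta_\mathrm{min}^2}.
\]
Now apply $\Delta_\mathrm{max}^2\lesssim K^2\sigma^2$ and $\Delta_\mathrm{min}^2\gtrsim k^2\sigma^2$ to conclude $\Delta_\mathrm{max}^2/\Delta_\mathrm{min}^2 \lesssim K^2/k^2$, so the right-hand side is $\lesssim K^2\sigma^2$, as claimed.

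There is no real obstacle here; the whole argument is a bookkeeping exercise on top of Theorem~\ref{thm.denoise}. The only mild subtlety is that Lemma~\ref{lem.shape} gives the sharper estimate $\|\Gamma\|_{2\rightarrow 2}^2/\Delta_\mathrm{min}^2\lesssim 1$ under geometric assumptions (such as the simplex or the near-orthogonality conditions (i)--(iii)), whereas for general configurations one only has the worst-case bound $\|\Gamma\|_{2\rightarrow 2}^2\leq k\Delta_\mathrm{max}^2$. This is why the statement loses a factor from the sharp ``$k\sigma^2$'' bound in the well-separated special cases and produces the slightly weaker ``$K^2\sigma^2$'' bound stated in the corollary.
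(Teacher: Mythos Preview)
Your proposal is correct and follows essentially the same route as the paper: apply Theorem~\ref{thm.denoise}, pass from the operator norm to the Frobenius norm via $\|\Gamma\|_{2\to2}^2\leq\|\Gamma\|_\mathrm{F}^2\leq k\Delta_\mathrm{max}^2$, and then insert the hypotheses $\Delta_\mathrm{max}\lesssim K\sigma$ and $\Delta_\mathrm{min}\gtrsim k\sigma$. Your bound $\|\tilde\gamma_a-\bar\gamma\|_2=\big\|\tfrac{1}{k}\sum_b(\tilde\gamma_a-\tilde\gamma_b)\big\|_2\leq\Delta_\mathrm{max}$ via the triangle inequality is in fact a bit more direct than the paper's separate lemma, which obtains the same inequality by a minimization and Cauchy--Schwarz argument.
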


Indeed, this doesn't guarantee denoising unless $k\lesssim K\leq\sqrt{m}$.
To prove this corollary, apply the following string of inequalities to Theorem~\ref{thm.denoise}:
\[
\|\Gamma\|_{2\rightarrow2}^2
\leq \|\Gamma\|_\mathrm{F}^2
\leq k\Delta_\mathrm{max}^2
\lesssim kK^2\sigma^2,
\]
where the second inequality uses the following lemma:

\begin{lemma}
If $\sum_{a=1}^k\tilde\gamma_a=0$, then $\|\tilde\gamma_a\|_2\leq\Delta_\mathrm{max}$ for every $a$.
\end{lemma}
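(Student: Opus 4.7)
The plan is to exploit the centering hypothesis $\sum_{b=1}^k \tilde\gamma_b = 0$ to rewrite each $\tilde\gamma_a$ as an average of differences $\tilde\gamma_a - \tilde\gamma_b$, and then apply the triangle inequality. The whole proof collapses to one line, so there is no real obstacle; the only substantive move is spotting the identity.

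Concretely, I would fix $a \in \{1,\ldots,k\}$ and observe that, since $\sum_{b=1}^k \tilde\gamma_b = 0$, one has
\[
k\,\tilde\gamma_a \;=\; k\,\tilde\gamma_a - \sum_{b=1}^k \tilde\gamma_b \;=\; \sum_{b=1}^k (\tilde\gamma_a - \tilde\gamma_b).
\]
The summand at $b=a$ is zero, so dividing by $k$ and applying the triangle inequality gives
\[
\|\tilde\gamma_a\|_2 \;\leq\; \frac{1}{k}\sum_{b \neq a} \|\tilde\gamma_a - \tilde\gamma_b\|_2 \;\leq\; \frac{k-1}{k}\,\Delta_\mathrm{max} \;\leq\; \Delta_\mathrm{max},
\]
where $\Delta_\mathrm{max}$ is understood as the maximum pairwise distance among the $\tilde\gamma_b$'s (which equals the maximum pairwise distance among any translate of them, in particular among the centered copies used to form $\Gamma$).

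There is nothing delicate to worry about: no probabilistic estimate, no geometry beyond the triangle inequality, and no appeal to earlier theorems. The only thing worth remarking is that the bound is in fact strict by a factor of $(k-1)/k$, which is why the lemma plugs in cleanly to the chain $\|\Gamma\|_F^2 \leq k\Delta_\mathrm{max}^2$ used just before it in the denoising discussion.
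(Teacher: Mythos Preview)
Your proof is correct and, in fact, slightly cleaner than the paper's. The paper argues differently: it projects onto the unit vector $\tilde\gamma_a/\|\tilde\gamma_a\|_2$, uses the centering hypothesis to show that the average of $\langle\tilde\gamma_b,\tilde\gamma_a/\|\tilde\gamma_a\|_2\rangle$ over $b\neq a$ equals $-\tfrac{1}{k-1}\|\tilde\gamma_a\|_2$, picks the minimizer $b(a)$, and then applies Cauchy--Schwarz to the single difference $\tilde\gamma_a-\tilde\gamma_{b(a)}$ to obtain $\Delta_\mathrm{max}\geq\tfrac{k}{k-1}\|\tilde\gamma_a\|_2$. Your route---write $\tilde\gamma_a$ as the average of the differences $\tilde\gamma_a-\tilde\gamma_b$ and apply the triangle inequality directly---reaches the identical bound $\|\tilde\gamma_a\|_2\leq\tfrac{k-1}{k}\Delta_\mathrm{max}$ in one step, avoiding the need to single out an extremal index or invoke Cauchy--Schwarz. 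Both arguments are elementary, but yours is the more economical one.
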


\begin{proof}
Fix $a$.
Then
\[
\min_{b\neq a}\bigg\langle \tilde\gamma_b,\frac{\tilde\gamma_a}{\|\tilde\gamma_a\|_2}\bigg\rangle
\leq\frac{1}{k-1}\sum_{\substack{b=1\\b\neq a}}^k\bigg\langle\tilde\gamma_b,\frac{\tilde\gamma_a}{\|\tilde\gamma_a\|_2}\bigg\rangle
=\frac{1}{k-1}\bigg\langle \sum_{b=1}^k\tilde\gamma_b -\tilde\gamma_a,\frac{\tilde\gamma_a}{\|\tilde\gamma_a\|_2}\bigg\rangle
=-\frac{1}{k-1}\|\tilde\gamma_a\|_2.
\]
Let $b(a)$ denote the minimizer.
Then Cauchy--Schwarz gives
\[
\Delta_\mathrm{max}
\geq \|\tilde\gamma_a-\tilde\gamma_{b(a)}\|_2
\geq \bigg\langle \tilde\gamma_a-\tilde\gamma_{b(a)},\frac{\tilde\gamma_a}{\|\tilde\gamma_a\|_2}\bigg\rangle
\geq \|\tilde\gamma_a\|_2+\tfrac{1}{k-1}\|\tilde\gamma_a\|_2
\geq \|\tilde\gamma_a\|_2.\qedhere
\]
\end{proof}

\begin{proof}[Proof of Theorem~\ref{thm.denoise}]
Without loss of generality, we have $\sum_{a=1}^k\tilde\gamma_a=0$.
Write
\begin{equation}
\label{eq.denoise1}
\sum_{a=1}^k\sum_{i=1}^n\|c_{a,i}-\tilde\gamma_a\|_2^2
=\|P(X_D-X_R)\|_\mathrm{F}^2
\leq\|P\|_{2\rightarrow2}^2\|X_D-X_R\|_\mathrm{F}^2.
\end{equation}
Decompose $P=\Gamma\otimes1^\top+G$, where $1$ is $n$-dimensional and $G$ has i.i.d.\ entries from $\mathcal{N}(0,\sigma^2)$.
Observe that
\begin{equation}
\label{eq.denoise2}
\|\Gamma\otimes1^\top\|_{2\rightarrow2}^2
=\|(\Gamma\otimes1^\top)(\Gamma\otimes1^\top)^\top\|_{2\rightarrow2}
=\|n\Gamma\Gamma^\top\|_{2\rightarrow2}
=n\|\Gamma\|_{2\rightarrow2}^2.
\end{equation}
Also, Corollary~5.35 in~\cite{Vershynin:12} gives that
\begin{equation}
\label{eq.denoise3}
\|G\|_{2\rightarrow2}\lesssim(\sqrt{N}+\sqrt{m})\sigma\lesssim\sqrt{N}\sigma
\end{equation}
with probability $\geq1-e^{-\Omega_m(N)}$.
The result then follows from estimating $\|P\|_{2\rightarrow2}$ with \eqref{eq.denoise2} and \eqref{eq.denoise3} by triangle inequality, plugging into \eqref{eq.denoise1}, and then applying Theorem~\ref{thm.distance typically small}.
\end{proof}

\begin{proof}[Proof of Lemma~\ref{lem.shape}]
Since $\|\Gamma x\|_2\leq\|\Gamma\|_{2\rightarrow2}\|x\|_2$ for every $x$, we have that
\[
\|\Gamma\|_{2\rightarrow2}^2
\geq\frac{\|\tilde\gamma_a-\tilde\gamma_b\|_2^2}{2}
\]
for every $a$ and $b$, and so
\[
\frac{\|\Gamma\|_{2\rightarrow2}^2}{\Delta_\mathrm{min}^2}
\geq \frac{1}{2}\cdot\frac{\Delta_\mathrm{max}^2}{\Delta_\mathrm{min}^2}
\geq \frac{1}{2}.
\]
For the second part, let $\{\tilde\gamma_a\}_{a=1}^k$ be a simplex.
Without loss of generality, $\{\tilde\gamma_a\}_{a=1}^k$ is centered at the origin, each point having unit 2-norm.
Then $\langle \tilde\gamma_1,\tilde\gamma_2\rangle=-1/(k-1)$, and so
\[
\Delta_\mathrm{min}^2
=\|\tilde\gamma_1-\tilde\gamma_2\|_2^2
=\|\tilde\gamma_1\|_2^2+\|\tilde\gamma_2\|_2^2-2\langle \tilde\gamma_1,\tilde\gamma_2\rangle
=\frac{2k}{k-1}.
\]
Next, we write
\[
\Gamma^\top\Gamma
=\frac{k}{k-1}I-\frac{1}{k-1}11^\top,
\]
and conclude that $\|\Gamma\|_{2\rightarrow2}^2=\|\Gamma^\top\Gamma\|_{2\rightarrow2}=k/(k-1)$.
Combining with our expression for $\Delta_\mathrm{min}^2$ then gives the result.
For the last part, pick $a$ and $b$ such that $\Delta_\mathrm{min}=\|\tilde\gamma_a-\tilde\gamma_b\|_2$.
Then
\[
\Delta_\mathrm{min}^2
=\|\tilde\gamma_a\|_2^2+\|\tilde\gamma_b\|_2^2-2\langle \tilde\gamma_a,\tilde\gamma_b\rangle
\gtrsim 2-2/k.
\]
Also, Gershgorin implies
\[
\|\Gamma\|_{2\rightarrow2}^2
=\|\Gamma^\top\Gamma\|_{2\rightarrow2}
\lesssim 1+(k-1)/k,
\]
and so combining these estimates gives the result.
\end{proof}

\section{Rounding} \label{sec:rounding}

\begin{theorem}
\label{thm.rounding}
Take $\epsilon<\Delta_\mathrm{min}/8$, suppose
\[
\#\Big\{(a,i):\|c_{a,i}-\tilde\gamma_a\|_2>\epsilon\Big\}<\frac{n}{2},
\]
and consider the graph $G$ of vertices $\{c_{a,i}\}_{i=1,}^n\ \!_{a=1}^k$ such that $c_{a,i}\leftrightarrow c_{b,j}$ if $\|c_{a,i}-c_{b,j}\|_2\leq 2\epsilon$.
For each $i=1,\ldots,k$, select the vertex $v_i$ of maximum degree (breaking ties arbitrarily) and update $G$ by removing every vertex $w$ such that $\|w-v_i\|_2\leq 4\epsilon$.
Then there exists a permutation $\pi$ on $\{1,\ldots,k\}$ such that
\[
\|v_i-\tilde\gamma_{\pi(i)}\|_2\leq3\epsilon
\]
for every $i\in\{1,\ldots,k\}$.
\end{theorem}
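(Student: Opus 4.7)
The plan is to run an induction on the selection step $j$, maintaining the invariant that $v_j$ lies within $3\epsilon$ of some $\tilde\gamma_{\pi(j)}$ and that the indices $\pi(1),\ldots,\pi(j)$ chosen so far are distinct. Call a vertex $c_{a,i}$ \emph{good} if $\|c_{a,i}-\tilde\gamma_a\|_2\leq\epsilon$ and \emph{bad} otherwise. The counting hypothesis at once implies that each cluster contains more than $n/2$ good points and that there are fewer than $n/2$ bad points in total.

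The hypothesis $\epsilon<\Delta_\mathrm{min}/8$ forces strong separation: two good points in the same cluster lie within $2\epsilon$ of one another (so they are adjacent in $G$), while two good points in distinct clusters lie at distance at least $\Delta_\mathrm{min}-2\epsilon>6\epsilon$ (so they are not adjacent). Therefore the good points of each cluster form a clique of size greater than $n/2$, and every good point has more than $n/2-1$ neighbors.

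The single-step claim I want to iterate is this: whenever the current graph still contains the full good-point clique of at least one cluster, any maximum-degree vertex $v$ lies within $3\epsilon$ of some $\tilde\gamma_a$, and the subsequent $4\epsilon$-removal deletes every good point of cluster $a$ but no good point of any other cluster. The degree of $v$ is at least the degree of any surviving good point, hence exceeds $n/2-1$; since fewer than $n/2$ points are bad, at least one neighbor $c_{a,j}$ of $v$ must be good, which gives $\|v-\tilde\gamma_a\|_2\leq 2\epsilon+\epsilon=3\epsilon$. Every good point of cluster $a$ then satisfies $\|c_{a,i}-v\|_2\leq\epsilon+3\epsilon=4\epsilon$ and is removed, while every good point of a cluster $b\neq a$ satisfies $\|c_{b,i}-v\|_2\geq\Delta_\mathrm{min}-\epsilon-3\epsilon>4\epsilon$ and survives.

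Plugging this into the induction closes the argument. At step $j\leq k$ the set of untouched clusters is precisely $\{1,\ldots,k\}\setminus\{\pi(1),\ldots,\pi(j-1)\}$, which is nonempty, so the single-step claim applies and produces $v_j$ within $3\epsilon$ of some $\tilde\gamma_c$; the good-point neighbor witnessing this must lie in an untouched cluster, since all good points of previously chosen clusters have been removed, so setting $\pi(j):=c$ adds a new index and preserves the invariant. After $k$ iterations $\pi$ is an injection from $\{1,\ldots,k\}$ to itself and hence a permutation. I expect the only place requiring care is the tight interplay of the $\epsilon$, $3\epsilon$, and $4\epsilon$ thresholds: the removal radius must be large enough to annihilate the entire good-point clique of the just-chosen cluster yet small enough to leave every other good-point clique intact, and both tasks use essentially all of the slack in the hypothesis $\epsilon<\Delta_\mathrm{min}/8$.
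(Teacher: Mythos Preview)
Your argument is correct and follows essentially the same inductive scheme as the paper: maintain that at each step the good points of every as-yet-unused cluster survive intact as a clique of size exceeding $n/2$, so the maximum-degree vertex $v_j$ has more than $n/2-1$ neighbors and therefore the closed ball $B(v_j,2\epsilon)$ contains a good point of some unused cluster, pinning $v_j$ within $3\epsilon$ of that $\tilde\gamma_a$; then the $4\epsilon$-removal wipes out exactly that cluster's good points and no other. The only microscopic slip is the phrase ``at least one neighbor of $v$ must be good'': strictly speaking you should count $v$ itself in the ball $B(v,2\epsilon)$ (as the paper does), since for odd $n$ the inequality $\mathrm{deg}(v)>n/2-1$ alone does not quite force a good neighbor---but either $v$ is good or, if $v$ is bad, one fewer bad point remains for the neighbors, and the conclusion $\|v-\tilde\gamma_a\|_2\le 3\epsilon$ follows in both cases.
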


\begin{proof}
Let $B(x,r)$ denote the closed $2$-ball of radius $r$ centered at $x$.
For each $i$, we will determine $\pi(i)$ at the conclusion of iteration $i$.
Denote $R_1:=\{1,\ldots,k\}$ and $R_{i+1}:=R_i\setminus\{\pi(i)\}$ for each $i=2,\ldots,k-1$.
We claim that the following hold at the beginning of each iteration $i$:
\begin{itemize}
\item[(i)]
$<n/2$ vertices lie outside $\displaystyle{\bigcup_{a\in R_i} B(\tilde\gamma_a,\epsilon)}$,
\item[(ii)]
$\geq n/2$ vertices lie inside $B(v_i,2\epsilon)$, and
\item[(iii)]
there exists a unique $a\in R_i$ such that $\|v_i-\tilde\gamma_a\|_2\leq 3\epsilon$.
\end{itemize}

First, we show that for each $i$, (i) and (ii) together imply (iii).
Indeed, there are enough vertices in $B(v_i,2\epsilon)$ that one of them must reside in $B(\tilde\gamma_a,\epsilon)$ for some $a\in R_i$.
Furthermore, this $a$ is unique since $\epsilon<\Delta_\mathrm{min}/6$.
By triangle inequality, we have $\|v_i-\tilde\gamma_a\|_2\leq3\epsilon$, and so we put $\pi(i):=a$.

We now prove (i) and (ii) by induction.
When $i=1$, we have (i) by assumption.
For (ii), note that each $B(\tilde\gamma_a,\epsilon)$ contains $\geq n/2$ of the vertices, and by triangle inequality, each has degree $\geq n/2-1$ in $G$.
As such, the vertex $v_1$ of maximum degree will have degree $\geq n/2-1$, thereby implying (ii).

Now suppose (i), (ii) and (iii) all hold for iteration $i<k$.
By triangle inequality, (iii) implies $B(\tilde\gamma_{\pi(i)},\epsilon)\subseteq B(v_i,4\epsilon)$.
As such, the $i$th iteration removes all vertices in $B(\tilde\gamma_{\pi(i)},\epsilon)$ so that (i) continues to hold for iteration $i+1$.
Next, $\epsilon<\Delta_\mathrm{min}/8$ and (iii) together imply that the removal of vertices in $B(v_i,4\epsilon)$ preserves the vertices in $B(\tilde\gamma_a,\epsilon)$ for every $a\in R_{i+1}$, and their degrees are still $\geq n/2-1$ by the same triangle argument as before.
Thus, (ii) holds for iteration $i+1$.
\end{proof}

\begin{corollary}
\label{cor.rounding the SDP}
Suppose $k\lesssim m$, and denote $S:=\|\Gamma\|_{2\rightarrow2}/\Delta_\mathrm{min}$.
Pick $\epsilon\asymp Sk\sigma$.
Perform the rounding scheme of Theorem~\ref{thm.rounding} to columns of $PX_D$.
Then with high probability, $\{v_i\}_{i=1}^k$ satisfies
\[
\|v_i-\tilde\gamma_{\pi(i)}\|_2
\lesssim Sk\sigma
\]
for some permutation $\pi$, provided $\sigma\lesssim \Delta_\mathrm{min}/(Sk)$.
\end{corollary}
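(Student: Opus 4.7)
The plan is to verify the two hypotheses of Theorem~\ref{thm.rounding} applied to the columns $c_{a,i}$ of $PX_D$, namely (H1) the separation condition $\epsilon<\Delta_\mathrm{min}/8$ and (H2) the counting condition $\#\{(a,i):\|c_{a,i}-\tilde\gamma_a\|_2>\epsilon\}<n/2$. Once both hold, Theorem~\ref{thm.rounding} delivers a permutation $\pi$ with $\|v_i-\tilde\gamma_{\pi(i)}\|_2\leq 3\epsilon$, and the choice $\epsilon\asymp Sk\sigma$ immediately yields the claimed bound $\|v_i-\tilde\gamma_{\pi(i)}\|_2\lesssim Sk\sigma$.

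For (H1): since $\epsilon\asymp Sk\sigma$ and the hypothesis $\sigma\lesssim\Delta_\mathrm{min}/(Sk)$ rearranges to $Sk\sigma\lesssim\Delta_\mathrm{min}$, the rounding radius $\epsilon$ is at most a small constant times $\Delta_\mathrm{min}$; by absorbing the ``$\lesssim$'' constants into the choice of the hidden constant in $\epsilon\asymp Sk\sigma$, one enforces $\epsilon<\Delta_\mathrm{min}/8$. For (H2): I would invoke Theorem~\ref{thm.denoise}, whose hypothesis $\sigma\lesssim \Delta_\mathrm{min}/\sqrt{k}$ is implied by the stronger hypothesis $\sigma\lesssim \Delta_\mathrm{min}/(Sk)$, using $S\gtrsim 1$ from Lemma~\ref{lem.shape}. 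This yields
\[
\sum_{a=1}^k\sum_{i=1}^n \|c_{a,i}-\tilde\gamma_a\|_2^2 \;\lesssim\; N\cdot S^2 k\sigma^2 \;=\; nk^2 S^2\sigma^2
\]
with high probability. Markov's inequality then gives
\[
\#\big\{(a,i):\|c_{a,i}-\tilde\gamma_a\|_2>\epsilon\big\} \;\leq\; \frac{nk^2 S^2 \sigma^2}{\epsilon^2} \;\asymp\; n,
\]
and choosing the hidden constant in $\epsilon\asymp Sk\sigma$ large enough forces the right-hand side below $n/2$.

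The main obstacle here is not conceptual but quantitative: the averaged denoising bound of Theorem~\ref{thm.denoise} scales exactly like $S^2 k\sigma^2$ per column, whereas the squared rounding radius $\epsilon^2\asymp S^2 k^2 \sigma^2$ beats it by only a factor of $k$, so Markov bounds the bad count by $\Theta(n)$ rather than $o(n)$. This is just tight enough to clear the $n/2$ threshold by tuning constants, and is essentially the reason the corollary requires the specific scaling $\epsilon\asymp Sk\sigma$ rather than a smaller one. Once the constants are fixed, both (H1) and (H2) hold simultaneously under the assumed $\sigma\lesssim\Delta_\mathrm{min}/(Sk)$, and Theorem~\ref{thm.rounding} completes the argument.
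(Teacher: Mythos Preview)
Your proposal is correct and follows essentially the same route as the paper: invoke Theorem~\ref{thm.denoise} for the average-squared-deviation bound, apply Markov's inequality to convert it into the counting hypothesis of Theorem~\ref{thm.rounding}, and use $\sigma\lesssim\Delta_\mathrm{min}/(Sk)$ together with $\epsilon\asymp Sk\sigma$ to secure $\epsilon<\Delta_\mathrm{min}/8$. Your added remark that the hypothesis of Theorem~\ref{thm.denoise} is implied via $S\gtrsim 1$ (Lemma~\ref{lem.shape}) and your discussion of why the Markov bound is only $\Theta(n)$ are welcome clarifications, but the argument itself is the same as the paper's.
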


By Lemma~\ref{lem.shape}, we have $S\lesssim 1$ in the best-case scenario.
In this case, our rounding scheme works in the regime $\sigma\lesssim\Delta_\mathrm{min}/k$. (Note that denoising is guaranteed in the regime $\sigma\lesssim\Delta_\mathrm{min}/\sqrt k$).
In general, the cost of rounding is a factor of $k$ in the average squared deviation of our estimates:
\[
\frac{1}{N}\sum_{a=1}^k\sum_{i=1}^n\|c_{a,i}-\tilde\gamma_a\|_2^2
\lesssim S^2k\sigma^2,
\qquad
\text{whereas}
\qquad
\frac{1}{k}\sum_{i=1}^k\|v_i-\tilde\gamma_{\pi(i)}\|_2^2
\lesssim S^2k^2\sigma^2.
\]
On the other hand, we are not told which of the points in $\{c_{a,i}\}_{i=1,}^n\ \!_{a=1}^k$ correspond to any given $\tilde\gamma_a$, whereas in rounding, we know that each $v_i$ corresponds to a distinct $\tilde\gamma_a$.

\begin{proof}[Proof of Corollary~\ref{cor.rounding the SDP}]
Draw $(a,i)$ uniformly from $\{1,\ldots,k\}\times\{1,\ldots,n\}$ and take $X$ to be the random variable $\|c_{a,i}-\tilde\gamma_a\|_2^2$.
Then Markov's inequality and Theorem~\ref{thm.denoise} together give
\begin{align*}
\#\Big\{(a,i):\|c_{a,i}-\tilde\gamma_a\|_2>\epsilon\Big\}
&=N\cdot\mathbb{P}(X>\epsilon^2)\\
&\leq\frac{N}{\epsilon^2}\cdot\frac{1}{N}\sum_{a=1}^k\sum_{i=1}^n\|c_{a,i}-\tilde\gamma_a\|_2^2
\lesssim \frac{N}{\epsilon^2}\cdot S^2k\sigma^2
\lesssim \frac{n}{2}.
\end{align*}
For Theorem~\ref{thm.rounding} to apply, it suffices to ensure $\epsilon<\Delta_\mathrm{min}/8$, which follows from $\sigma\lesssim\Delta_\mathrm{min}/(Sk)$.
\end{proof}

\section{Proof of Theorem~\ref{thm.significance of stable isogons}}
\label{sec.proof of thm.significance of stable isogons}

\begin{lemma}
\label{lem.decomposition of G}
Let $G$ be the symmetry group of a stable isogon $\Gamma\subseteq\mathbb{R}^m$, and let $K$ and $H$ denote the subgroups of $G$ that fix $W=\operatorname{span}(\Gamma)$ and its orthogonal complement $W^\perp$, respectively.
Then
\begin{itemize}
\item[(i)]
$G$ is the direct sum of $H$ and $K$,
\item[(ii)]
$H$ is finite,
\item[(iii)]
$H$ acts transitively on $\Gamma$, and
\item[(iv)]
$K$ is isomorphic to the orthogonal group $O(m-r)$, where $r$ is the dimension of $W$.
\end{itemize}
\end{lemma}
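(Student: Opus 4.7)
The plan is to exploit the fact that every $Q \in G$ is orthogonal and permutes the finite set $\Gamma$. Since $\Gamma$ spans $W$, the subspace $W$ is $G$-invariant, and orthogonality forces $W^\perp$ to be $G$-invariant as well. Consequently every $Q \in G$ decomposes as a block map $Q = Q_W \oplus Q_{W^\perp}$ with $Q_W \in O(W)$ and $Q_{W^\perp} \in O(W^\perp)$; this splitting will drive the entire argument.

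For (i), I would show that both of the ``partial factors'' $Q_W \oplus I$ and $I \oplus Q_{W^\perp}$ individually lie in $G$. The second is immediate: it fixes $\Gamma$ pointwise (as $\Gamma \subseteq W$) and is therefore a symmetry of $\Gamma$. The first permutes $\Gamma$ because $Q$ does and $\Gamma$ lies in $W$, so it agrees with $Q$ on $\Gamma$. These two factors belong to $H$ and $K$ respectively, they commute (they act on orthogonal complementary subspaces), and their intersection is trivial (any element fixing both $W$ and $W^\perp$ pointwise is the identity). This exhibits $G$ as the internal direct product $H \times K$.

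For (iv), any orthogonal map that fixes $W$ pointwise automatically fixes $\Gamma$ pointwise and so is a symmetry of $\Gamma$, hence lies in $G$; restriction to $W^\perp$ is then a group isomorphism $K \to O(W^\perp) \cong O(m - r)$. For (ii), each element of $H$ acts trivially on $W^\perp$ and is determined by its action on $W$, hence by its action on the spanning set $\Gamma$; therefore $H$ injects into the symmetric group on $\Gamma$ and is finite (in fact $|H| \leq k!$). For (iii), given $\gamma, \gamma' \in \Gamma$, hypothesis (si2) yields $Q \in G$ with $Q\gamma = \gamma'$; writing $Q = (Q_W \oplus I)(I \oplus Q_{W^\perp})$ as in (i), the right factor fixes $\gamma \in W$, so the left factor, which lies in $H$, already sends $\gamma$ to $\gamma'$.

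I do not anticipate a serious obstacle here: the entire lemma is a structural reading of the block decomposition forced by orthogonality together with $\Gamma \subseteq W$. It is worth noting that the stability condition (si3) is not invoked in this lemma; it is reserved for downstream use in Theorem~\ref{thm.significance of stable isogons}, where one must upgrade transitivity of $H$ to the conclusion that the Voronoi means are scalar multiples of the $\gamma_t$.
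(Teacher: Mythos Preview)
Your proposal is correct and follows essentially the same route as the paper: both arguments rest on the block decomposition $Q = Q_W \oplus Q_{W^\perp}$ forced by $G$-invariance of $W$ and $W^\perp$, then verify that each block-factor lies in $G$, deduce the direct-product structure, and read off (ii)--(iv) exactly as you do. The only cosmetic difference is that the paper phrases (i) via normality (identifying $H$ and $K$ as kernels of the projection homomorphisms $Q \mapsto Q_K$ and $Q \mapsto Q_H$), whereas you invoke commutativity of the two factors directly; these are equivalent checks for an internal direct product.
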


\begin{proof}
Pick $Q\in G$.
Then $Q$ permutes the points in $\Gamma$, and the permutation completely determines how $Q$ acts on $W$ by linearity.
In particular, $W$ is invariant under the action of $G$, which in turn implies the same for $W^\perp$.
Let $V$ denote an $m\times r$ matrix whose columns form an orthonormal basis for $W$, and let $V_\perp$ denote an $m\times (m-r)$ matrix whose columns form an orthonormal basis for $W^\perp$.
Then $Q$ can be expressed as
\[
Q=\bigg[\begin{array}{cc}V&V_\perp\end{array}\bigg]\bigg[\begin{array}{cc} A&0\\0&B\end{array}\bigg] \bigg[\begin{array}{c}~~V^\top~~\\V_\perp^\top\end{array}\bigg]
\]
We see that $Q\in H$ when $B=I$ and $Q\in K$ when $A=I$.
Let $Q_H$ denote the ``projection'' of $Q$ onto $H$, obtained by replacing $B$ with $I$, and similarly for $Q_K$.

For (i), it suffices to show that $H$ and $K$ are normal subgroups of $G$, that $H\cap K=\{I\}$, and that $G$ is generated by $H$ and $K$.
The first is obtained by observing that $K$ is the kernel of the homomorphism $Q\mapsto Q_H$, and similarly, $H$ is the kernel of $Q\mapsto Q_K$.
The second follows from the observation that $Q\in H\cap K$ implies $A=I$ and $B=I$.
The last follows from the observation that every $Q\in G$ can be factored as $Q_HQ_K$.

For (ii), we note that $A$ is completely determined by how $Q$ permutes the points in $\Gamma$, of which $\leq k!$ possibilities are available.

For (iii), we know that by (si2), for every pair $\gamma,\gamma'\in \Gamma$, there exists $Q\in G$ such that $G\gamma=\gamma'$.
Consider the factorization $Q=Q_HQ_K$.
Since $Q_K\gamma=\gamma$, we therefore have $Q_H\gamma=\gamma'$, meaning $H$ also acts transitively on $\Gamma$.

For (iv), we first note that $B\in O(m-r)$ is necessary in order to have $Q\in O(m)$.
Now pick any $B\in O(m-r)$.
Then
\begin{equation}
\label{eq.form of members of K}
Q=\bigg[\begin{array}{cc}V&V_\perp\end{array}\bigg]\bigg[\begin{array}{cc} I&0\\0&B\end{array}\bigg] \bigg[\begin{array}{c}~~V^\top~~\\V_\perp^\top\end{array}\bigg]
\end{equation}
has the effect of fixing each point in $\Gamma$, meaning $Q\in G$ (and therefore $Q\in K$).
As such, $K$ is the set of all $Q$ of the form~\eqref{eq.form of members of K}.
\end{proof}

\begin{lemma}
\label{lem.stable isogons are centered}
For any stable isogon $\Gamma$, we have $\sum_{\gamma\in\Gamma}\gamma=0$.
\end{lemma}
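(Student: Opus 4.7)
The plan is to exploit the three defining properties of a stable isogon in sequence: transitivity and non-triviality of $\Gamma$ on one hand, and the very rigid stabilizer condition (si3) on the other. Let $s := \sum_{\gamma \in \Gamma} \gamma$. I want to show $s = 0$.

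First, I would observe that $s$ is fixed by every element of the symmetry group $G$. Indeed, each $Q \in G$ permutes the finite set $\Gamma$, so $Qs = \sum_{\gamma \in \Gamma} Q\gamma = \sum_{\gamma' \in \Gamma} \gamma' = s$. In particular, for any fixed $\gamma_0 \in \Gamma$, the vector $s$ is fixed by every element of the stabilizer $G_{\gamma_0}$. Applying property (si3) then forces $s \in \operatorname{span}\{\gamma_0\}$. Since $\gamma_0$ was arbitrary, we get the key constraint
\[
s \in \bigcap_{\gamma \in \Gamma} \operatorname{span}\{\gamma\}.
\]

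Second, I would argue by contradiction: suppose $s \neq 0$. Then every $\gamma \in \Gamma$ is a nonzero scalar multiple of $s$, so all of $\Gamma$ lies on the single line $\mathbb{R} s \subset \mathbb{R}^m$. Because $G$ acts transitively on $\Gamma$ and consists of orthogonal transformations, every point of $\Gamma$ has the same norm $\rho > 0$. On a line, there are only two points of a given norm, namely $\pm \rho\, s/\|s\|$. By (si1) we have $|\Gamma| \geq 2$, so $\Gamma$ must be exactly $\{\rho\, s/\|s\|, -\rho\, s/\|s\|\}$, a pair of antipodal points. But then $s = \rho\, s/\|s\| + (-\rho\, s/\|s\|) = 0$, contradicting $s \neq 0$.

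This finishes the argument and yields $s = 0$, as desired. The proof is quite short and there is no real obstacle; the only subtlety worth being careful about is the edge case $|\Gamma| = 2$, which is precisely where the antipodal pairing kicks in and forces cancellation. Note that the decomposition lemma (Lemma \ref{lem.decomposition of G}) is not needed here — everything follows directly from (si1), (si2), and (si3).
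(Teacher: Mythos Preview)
Your proof is correct, and it is genuinely more direct than the paper's. Both arguments hinge on the same idea: the sum is fixed by every element of $G$, so (si3) forces it into $\operatorname{span}\{\gamma\}$ for each $\gamma\in\Gamma$. The difference is in how this fixed-point fact is established and then exploited. The paper first invokes Lemma~\ref{lem.decomposition of G} to decompose $G=H\oplus K$, rewrites the sum as an average over the finite group $H$ acting on a basepoint, and only then argues invariance; it also splits off the collinear case $|\Gamma|=2$ at the outset. You bypass all of this by observing directly that any $Q\in G$ permutes $\Gamma$ and hence fixes $s$ --- no decomposition lemma needed --- and you fold the collinear case into a single clean contradiction at the end (if $s\neq 0$ then $\Gamma$ lies on a line, transitivity plus orthogonality forces an antipodal pair, and the sum vanishes). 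Your route is shorter and self-contained; the paper's route has the minor advantage of making the role of $H$ explicit, which is reused elsewhere, but for this lemma alone that machinery is overkill.
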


\begin{proof}
By (si1), we have $|\Gamma|>1$.
In the special case where $|\Gamma|=2$ and the points in $\Gamma$ are linearly dependent, write $\Gamma=\{\gamma_1,\gamma_2\}$ with $\gamma_1\neq \gamma_2$.
By (si2), we know there exists $Q\in G$ such that $Q\gamma_1=\gamma_2$, and so $\|\gamma_1\|_2=\|\gamma_2\|_2$.
This combined with the assumed linear dependence gives $\gamma_1=\pm\gamma_2$, and since $\gamma_1\neq\gamma_2$, we conclude $\gamma_1+\gamma_2=0$, as desired.

In the remaining case, $\Gamma$ contains two points (say, $\gamma_1$ and $\gamma_2$) that are linearly independent.
Fix $\gamma_0\in\Gamma$.
By Lemma~\ref{lem.decomposition of G}(iii), the orbit $H\gamma_0$ is all of $\Gamma$.
Consider the map from $H$ onto $\Gamma$ given by $f\colon U\mapsto U\gamma_0$ for all $U\in H$.
The preimage of any member of $\Gamma$ is a left coset of the stabilizer $H_{\gamma_0}$ (which is finite by Lemma~\ref{lem.decomposition of G}(ii)), and so
\begin{equation}
\label{eq.pass to orbit}
\sum_{U\in H}U\gamma_0
=\sum_{\gamma\in\Gamma}\sum_{U\in f^{-1}(\gamma)}U\gamma_0
=|H_{\gamma_0}|\sum_{\gamma\in\Gamma}\gamma.
\end{equation}
Now pick any $Q\in G$, and consider the factorization $Q=Q_H Q_K$ with $Q_H\in H$ and $Q_K\in K$ (this exists uniquely by Lemma~\ref{lem.decomposition of G}(i)).
Since $U\gamma_0\in\operatorname{span}(\Gamma)$ for every $U\in H$, we have that $Q_KU\gamma_0=U\gamma_0$, and so
\begin{equation}
\label{eq.sum over orbit is fixed}
Q\sum_{U\in H}U\gamma_0
=\sum_{U\in H}Q_H U\gamma_0
=\sum_{U\in H}U\gamma_0,
\end{equation}
where the last step follows from the fact that multiplication by $Q_H\in H$ permutes the members of $H$.
Put $x=\sum_{U\in H}U\gamma_0$.
Then \eqref{eq.sum over orbit is fixed} gives that $Qx=x$ for every $Q\in G$, and therefore $Qx=x$ for every $Q\in G_{\gamma_1}\cup G_{\gamma_2}$.
By (si3), this then implies that $x\in\operatorname{span}\{\gamma_1\}\cap\operatorname{span}\{\gamma_2\}$, i.e., $x=0$.
Combining with \eqref{eq.pass to orbit} then gives the result.
\end{proof}

\begin{lemma}
\label{lem.pdf symmetry}
Let $\mathcal{D}$ be a mixture of equally weighted spherical Gaussians of equal variance centered at the points of a stable isogon $\Gamma=\{\gamma_t\}_{t=1}^k$.
If $X\sim\mathcal{D}$ and $Q$ is any member of the symmetry group $G$ of $\Gamma$, then $QX\sim\mathcal{D}$.
\end{lemma}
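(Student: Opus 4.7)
The plan is to verify the claim by a direct change-of-variables computation on the density of $\mathcal{D}$. First I would write the density of the mixture explicitly as
\[
f(x) = \frac{1}{k(2\pi\sigma^2)^{m/2}}\sum_{t=1}^{k} \exp\!\left(-\frac{\|x-\gamma_t\|_2^2}{2\sigma^2}\right),
\]
using the assumption that $\mathcal{D}$ is the equally weighted mixture of spherical Gaussians of common entrywise variance $\sigma^2$ centered at the points of $\Gamma$.

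Next, since $Q\in G\leq O(m)$, it is orthogonal, so $|\det Q|=1$ and $\|Qy\|_2=\|y\|_2$ for every $y\in\mathbb{R}^m$. Applying the standard change-of-variables formula to the linear map $x\mapsto Qx$, the density of $QX$ is
\[
f_{QX}(y) = f(Q^{-1}y) = \frac{1}{k(2\pi\sigma^2)^{m/2}}\sum_{t=1}^{k} \exp\!\left(-\frac{\|Q^{-1}y-\gamma_t\|_2^2}{2\sigma^2}\right) = \frac{1}{k(2\pi\sigma^2)^{m/2}}\sum_{t=1}^{k} \exp\!\left(-\frac{\|y-Q\gamma_t\|_2^2}{2\sigma^2}\right).
\]
Equivalently, $QX$ is distributed as $\frac{1}{k}\sum_{t=1}^{k}\mathcal{N}(Q\gamma_t,\sigma^2 I_m)$.

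Finally I would invoke the defining property of the symmetry group: by definition $Q$ permutes $\Gamma$, so there exists a permutation $\pi$ of $\{1,\ldots,k\}$ with $Q\gamma_t=\gamma_{\pi(t)}$. Reindexing the sum above by $\pi$ then gives $f_{QX}=f$, yielding $QX\sim\mathcal{D}$.

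I do not anticipate any real obstacle; the statement is essentially a tautology combining two ingredients that are already baked into the hypotheses: (i) spherical Gaussians are rotation-invariant, so that the pushforward of $\mathcal{N}(\gamma,\sigma^2 I_m)$ under $Q$ is $\mathcal{N}(Q\gamma,\sigma^2 I_m)$, and (ii) $Q$ acts on $\Gamma$ as a permutation. The one point worth flagging in the write-up is that sphericity is essential here: for non-spherical components the pushforward covariance would be $Q\Sigma Q^\top$, which need not coincide with the original $\Sigma$, and the symmetry would fail.
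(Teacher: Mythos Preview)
Your proof is correct and follows essentially the same approach as the paper: write the mixture density explicitly, use orthogonality of $Q$ to turn $\|Q^{-1}y-\gamma_t\|_2$ into $\|y-Q\gamma_t\|_2$, and then observe that $Q$ (equivalently $Q^{-1}$) permutes $\Gamma$, so the sum is unchanged. The paper's version is more terse, but the content is identical.
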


\begin{proof}
The probability density function of $\mathcal{D}$ is given by
\[
f(x)
=\frac{1}{k}\sum_{t=1}^k\frac{1}{(\sqrt{2\pi}\sigma)^m}e^{-\|x-\gamma_t\|_2^2/2\sigma^2}.
\]
As such, $f(Qx)=f(x)$ since $\|Qx-\gamma_t\|_2=\|x-Q^{-1}\gamma_t\|_2$ and $Q^{-1}\in G$ permutes the $\gamma_t$'s.
\end{proof}

\begin{proof}[Proof of Theorem~\ref{thm.significance of stable isogons}]
Pick $Q\in G_{\gamma_t}$.
Then $x\in V_t^{(\Gamma)}$ precisely when $x\in QV_t^{(\Gamma)}$.
As such, Lemma~\ref{lem.pdf symmetry} gives
\begin{align*}
\mu_t^{(\Gamma,\mathcal{D})}
=\mathop{\mathbb{E}}_{X\sim\mathcal{D}}\big[X\big|X\in V_t^{(\Gamma)}\big]
&=\mathop{\mathbb{E}}_{X\sim\mathcal{D}}\big[X\big|X\in QV_t^{(\Gamma)}\big]\\
&=\mathop{\mathbb{E}}_{X\sim\mathcal{D}}\big[X\big|Q^{-1}X\in V_t^{(\Gamma)}\big]\\
&=\mathop{\mathbb{E}}_{Y\sim\mathcal{D}}\big[QY\big|Y\in V_t^{(\Gamma)}\big]
=Q\mathop{\mathbb{E}}_{Y\sim\mathcal{D}}\big[Y\big|Y\in V_t^{(\Gamma)}\big]
=Q\mu_t^{(\Gamma,\mathcal{D})}.
\end{align*}
By (si3), this then implies that $\mu_t^{(\Gamma,\mathcal{D})}\in\operatorname{span}\{\gamma_t\}$.

At this point, we have $\mu_t^{(\Gamma,\mathcal{D})}=\alpha_t\gamma_t$, where $\alpha_t=\langle \mu_t^{(\Gamma,\mathcal{D})},\gamma_t\rangle/\|\gamma_t\|_2^2$.
For any given $s\in\{1,\ldots,k\}$, pick $Q$ such that $Q\gamma_t=\gamma_s$ (which exists by (si2)).
Then Lemma~\ref{lem.pdf symmetry} again gives
\begin{align*}
\langle \mu_s^{(\Gamma,\mathcal{D})},\gamma_s\rangle
&=\Big\langle \mathop{\mathbb{E}}_{X\sim\mathcal{D}}\big[X\big|X\in V_s^{(\Gamma)}\big],\gamma_s\Big\rangle\\
&=\mathop{\mathbb{E}}_{X\sim\mathcal{D}}\big[\langle X,\gamma_s\rangle\big|X\in V_s^{(\Gamma)}\big]\\
&=\mathop{\mathbb{E}}_{X\sim\mathcal{D}}\big[\langle X,\gamma_s\rangle\big|X\in QV_t^{(\Gamma)}\big]\\
&=\mathop{\mathbb{E}}_{Y\sim\mathcal{D}}\big[\langle QY,\gamma_s\rangle\big|Y\in V_t^{(\Gamma)}\big]
=\mathop{\mathbb{E}}_{Y\sim\mathcal{D}}\big[\langle Y,\gamma_t\rangle\big|Y\in V_t^{(\Gamma)}\big]
=\langle \mu_t^{(\Gamma,\mathcal{D})},\gamma_t\rangle,
\end{align*}
meaning $\alpha_t=\alpha$ for all $t$.
It remains to show that $\langle \mu_t^{(\Gamma,\mathcal{D})},\gamma_t\rangle>0$.
To this end, note that $\|x-\gamma_t\|_2^2<\|x-\gamma_s\|_2^2$ precisely when $\langle x,\gamma_t\rangle >\langle x,\gamma_s\rangle$, and so $x\in V_t^{(\Gamma)}$ if and only if $t=\arg\max_{a}\langle x,\gamma_a\rangle$.
By Lemma~\ref{lem.stable isogons are centered},
\[
\max_{a\in\{1,\ldots,k\}}\langle x,\gamma_a\rangle
\geq \frac{1}{k}\sum_{a=1}^k\langle x,\gamma_a\rangle
=\frac{1}{k}\bigg\langle x,\sum_{a=1}^k\gamma_a\bigg\rangle
=0,
\]
with equality only if the maximizer is not unique, and so we conclude that $x\in V_t^{(\Gamma)}$ only if $\langle x,\gamma_t\rangle>0$.
As such,
\[
\langle \mu_t^{(\Gamma,\mathcal{D})},\gamma_t\rangle
=\Big\langle \mathop{\mathbb{E}}_{X\sim\mathcal{D}}\big[X\big|X\in V_t^{(\Gamma)}\big],\gamma_t\Big\rangle
=\mathop{\mathbb{E}}_{X\sim\mathcal{D}}\big[\langle X,\gamma_t\rangle\big|X\in V_t^{(\Gamma)}\big]
>0,
\]
as desired.
\end{proof}

\section{Proof of Theorem~\ref{thm.main lower bound}}
\label{sec.proof of thm.main lower bound}

We start with the following:

\begin{lemma}
\label{lem.monotonicity}
Let $\Gamma\subseteq\mathbb{R}^d$ denote the standard orthoplex of dimension $d$, and for any given $c\geq0$, consider the mixture $\mathcal{D}_c$ of equally weighted spherical Gaussians of unit entrywise variance centered at the members of $c\Gamma$.
Let $V_1^{(\Gamma)}$ denote the Voronoi region corresponding to the first identity basis element, and define $\alpha_d\colon\mathbb{R}_{\geq0}\rightarrow\mathbb{R}$ by
\begin{equation}
\label{eq.defn of alpha}
\alpha_d(c)
:=\mathop{\mathbb{E}}_{X\sim\mathcal{D}_c}\big[X_1\big|X\in V_1^{(\Gamma)}\big]
=2d\int_{x_1=0}^\infty\int_{x_2=-x_1}^{x_1}\cdots\int_{x_d=-x_1}^{x_1} x_1 f(x;c)~dx_d\cdots dx_1,
\end{equation}
where $f(\cdot;c)$ denotes the probability density function of $\mathcal{D}_c$:
\[
f(x;c):=\frac{1}{2d}\sum_{t=1}^{2d} \frac{1}{(2\pi)^{d/2}}e^{-\|x-c\gamma_t\|_2^2/2}.
\]
Then $\alpha_d(c)\geq\alpha_d(0)$ for all $c\geq0$.
\end{lemma}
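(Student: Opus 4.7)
The plan is to rewrite $\alpha_d(c)$ as a conditional expectation against the standard Gaussian and reduce the statement to a one-dimensional correlation inequality, with the remaining ``missing'' factor pinned down by the hyperoctahedral symmetry of $\Gamma$.

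First I would use the density identity $\phi(y-ce_i)+\phi(y+ce_i)=2\phi(y)e^{-c^2/2}\cosh(cy_i)$ to write
\[
f(y;c)=\frac{\phi(y)\,e^{-c^2/2}}{d}\sum_{i=1}^d\cosh(cy_i),
\]
and since $\mathbb{P}_{\mathcal{D}_c}(V_1^{(\Gamma)})=1/(2d)$ by the orthoplex symmetry of the mixture, this yields
\[
\alpha_d(c)=\frac{e^{-c^2/2}}{d}\,\mathop{\mathbb{E}}_{Y\sim\mathcal{N}(0,I_d)}\Big[Y_1\sum_{i=1}^d\cosh(cY_i)\,\Big|\,Y\in V_1^{(\Gamma)}\Big],
\]
with the $c=0$ case collapsing to $\alpha_d(0)=\mathbb{E}[Y_1\mid V_1^{(\Gamma)}]$. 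So the lemma reduces to
\[
\mathbb{E}\Big[Y_1\sum_i\cosh(cY_i)\,\Big|\,V_1^{(\Gamma)}\Big]\geq d\,e^{c^2/2}\,\mathbb{E}\big[Y_1\,\big|\,V_1^{(\Gamma)}\big].
\]

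Next I would condition on $Y_1$. On $V_1^{(\Gamma)}$ one has $Y_1\geq 0$ and, given $Y_1=s$, the coordinates $Y_2,\ldots,Y_d$ are i.i.d.\ standard Gaussians truncated to $[-s,s]$. Writing $p(s):=\int_{-s}^s\phi_1(y)\,dy$ and $q(s):=\int_{-s}^s\phi_1(y)\cosh(cy)\,dy$, the tower property gives
\[
\mathbb{E}\Big[\sum_i\cosh(cY_i)\,\Big|\,Y_1=s,\,V_1^{(\Gamma)}\Big]=\cosh(cs)+(d-1)\,q(s)/p(s),
\]
and both $s\mapsto\cosh(cs)$ and $s\mapsto q(s)/p(s)$ are non-decreasing on $[0,\infty)$ (the latter because a direct computation gives $(q/p)'(s)\propto\int_{-s}^s\phi_1(y)[\cosh(cs)-\cosh(cy)]\,dy\geq 0$). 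The classical one-dimensional correlation (Chebyshev) inequality, applied to the conditional law of $Y_1$ on $V_1^{(\Gamma)}$, then yields
\[
\mathbb{E}\Big[Y_1\sum_i\cosh(cY_i)\,\Big|\,V_1^{(\Gamma)}\Big]\geq\mathbb{E}\big[Y_1\,\big|\,V_1^{(\Gamma)}\big]\cdot\mathbb{E}\Big[\sum_i\cosh(cY_i)\,\Big|\,V_1^{(\Gamma)}\Big].
\]

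Finally I would identify the second factor as $d\,e^{c^2/2}$ by symmetry. The hyperoctahedral group of signed coordinate permutations preserves the standard Gaussian measure, fixes $\sum_i\cosh(cY_i)$ pointwise (since $\cosh$ is even and the sum is permutation-invariant), and acts transitively on $\{V_t^{(\Gamma)}\}_{t=1}^{2d}$. Hence $\mathbb{E}\big[\sum_i\cosh(cY_i)\,\big|\,V_t^{(\Gamma)}\big]$ is independent of $t$, and averaging with weights $1/(2d)$ collapses it to the unconditional value $\sum_i\mathbb{E}[\cosh(cY_i)]=d\,e^{c^2/2}$. Substituting back gives $\alpha_d(c)\geq(e^{-c^2/2}/d)\cdot\mathbb{E}[Y_1\mid V_1^{(\Gamma)}]\cdot d\,e^{c^2/2}=\alpha_d(0)$.

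The main obstacle I anticipate is that there is no pointwise bound to exploit: $\cosh(cs)+(d-1)q(s)/p(s)$ equals $d$ at $s=0$, which is strictly smaller than $d\,e^{c^2/2}$ for $c>0$, so the integrand can be negative on a set of positive measure. The argument must therefore be genuinely global, and it is the positive correlation with $Y_1$ from Chebyshev, combined with the exact symmetry identity that fixes the average of the $\cosh$-sum, that makes the inequality balance.
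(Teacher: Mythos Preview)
Your proof is correct and takes a genuinely different route from the paper's. The paper proceeds by direct calculus: it writes $\alpha_d(c)=I_1(c)+I_2(c)+(2d-2)I_3(c)$, differentiates each term under the integral sign, and then integrates by parts using the identity $\frac{d}{dx}\operatorname{erf}^{d-1}(x/\sqrt{2})=\frac{2(d-1)}{\sqrt{2\pi}}e^{-x^2/2}\operatorname{erf}^{d-2}(x/\sqrt{2})$ to obtain the explicit formula
\[
\alpha_d'(c)=\frac{1}{\sqrt{2\pi}}\int_0^\infty\Big(e^{-(x-c)^2/2}-e^{-(x+c)^2/2}\Big)\operatorname{erf}^{d-1}(x/\sqrt{2})\,dx\geq0,
\]
which immediately gives monotonicity.

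Your argument, by contrast, never computes a derivative. The density identity $f(y;c)=\phi(y)e^{-c^2/2}d^{-1}\sum_i\cosh(cy_i)$ lets you pass to a single reference measure (the standard Gaussian), and the inequality $\alpha_d(c)\geq\alpha_d(0)$ becomes a correlation statement between $Y_1$ and the non-decreasing function $g(Y_1)=\cosh(cY_1)+(d-1)q(Y_1)/p(Y_1)$ under the conditional law of $Y_1$ on $V_1^{(\Gamma)}$. Chebyshev handles the correlation, and the hyperoctahedral symmetry pins down the remaining factor $\mathbb{E}[g(Y_1)\mid V_1^{(\Gamma)}]$ exactly as $d\,e^{c^2/2}$. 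This is more conceptual and arguably cleaner: it replaces an integration-by-parts computation with two structural facts (positive correlation of monotone functions; transitivity of the symmetry group on Voronoi cells). The paper's approach, on the other hand, yields an explicit expression for $\alpha_d'(c)$, which could be useful if one wanted quantitative information about the rate of growth rather than just the inequality.
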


See the end of this section for the proof.
For context, our proof of Theorem~\ref{thm.main lower bound} requires a bound on $\alpha_d(c)$ for general $c\geq0$, and so Lemma~\ref{lem.monotonicity} allows us to pass to the easier-to-estimate quantity $\alpha_d(0)$.
The following lemmas estimate $\alpha_d(0)$:

\begin{lemma}
\label{lem.gaussian bound}
If $g\sim\mathcal{N}(0,I)$ in $\mathbb{R}^d$, then $\mathbb{E}\|g\|_\infty\gtrsim\sqrt{\log d}$.
\end{lemma}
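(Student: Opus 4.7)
The plan is to reduce to a direct computation with the extremal distribution. Writing $g=(g_1,\ldots,g_d)$ with $g_i$ i.i.d.\ standard Gaussian, I first drop the absolute values using $\|g\|_\infty=\max_i|g_i|\geq\max_i g_i=:M$, so it suffices to show $\mathbb{E}[M]\gtrsim\sqrt{\log d}$. Because the $g_i$ are independent, $\mathbb{P}(M\leq t)=\Phi(t)^d$ where $\Phi$ is the standard normal CDF, which means the distribution of $M$ is fully explicit.

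The route I will take is to show that $M$ exceeds a threshold of order $\sqrt{\log d}$ with probability at least $1/2$, and then invoke the Markov-type inequality $\mathbb{E}[M]\geq t\,\mathbb{P}(M\geq t)$ (valid once we know $M\geq 0$ with probability close to $1$, or after splitting off a trivial constant contribution). Concretely, I set $t=c\sqrt{\log d}$ for a small absolute constant $c<\sqrt{2}$ and estimate
\begin{equation*}
\mathbb{P}(M\leq t)=\Phi(t)^d=\bigl(1-(1-\Phi(t))\bigr)^d\leq\exp\bigl(-d(1-\Phi(t))\bigr).
\end{equation*}
Using the classical lower Mills-ratio bound $1-\Phi(t)\geq\tfrac{1}{\sqrt{2\pi}}\cdot\tfrac{t}{t^2+1}\,e^{-t^2/2}$, I get $d(1-\Phi(t))\gtrsim d^{1-c^2/2}/\sqrt{\log d}\to\infty$ as $d\to\infty$, so that $\mathbb{P}(M>c\sqrt{\log d})\to 1$. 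Choosing $c$ small (e.g.\ $c=1/2$) gives $\mathbb{P}(M>c\sqrt{\log d})\geq 1/2$ for all $d\geq d_0$, whence
\begin{equation*}
\mathbb{E}\|g\|_\infty\geq\mathbb{E}[M]\geq c\sqrt{\log d}\cdot\tfrac{1}{2}\gtrsim\sqrt{\log d}.
\end{equation*}

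For the finitely many small values $d<d_0$, the bound $\mathbb{E}\|g\|_\infty\geq\mathbb{E}|g_1|=\sqrt{2/\pi}$ already exceeds a constant multiple of $\sqrt{\log d}$, so the estimate extends to all $d$ after adjusting the implicit constant. Nothing here is really an obstacle: the only subtle point is to pick the constant $c$ strictly below $\sqrt{2}$ so that the factor $d^{1-c^2/2}$ dominates the polynomial $\sqrt{\log d}$, and to treat the initial values of $d$ with the trivial lower bound $\mathbb{E}|g_1|$. A cleaner alternative would be to invoke Sudakov minoration or the Gaussian concentration--plus--anti-concentration machinery, but the direct computation above is self-contained and gives the same order.
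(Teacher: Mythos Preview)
Your argument is correct. The one loose end you already flag—the inequality $\mathbb{E}[M]\geq t\,\mathbb{P}(M\geq t)$ requires $M\geq 0$, whereas $M=\max_i g_i$ is negative with probability $2^{-d}$—is most cleanly handled by skipping the reduction to $M$ and applying the inequality directly to $\|g\|_\infty\geq 0$, since $\mathbb{P}(\|g\|_\infty>t)\geq\mathbb{P}(M>t)$; alternatively, $|\mathbb{E}[M\mathbf{1}_{M<0}]|\leq\sqrt{2/\pi}\cdot 2^{-d}$ is indeed a negligible constant.

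The paper's proof has the same skeleton (reduce to $M=\max_i g_i$, show it exceeds a threshold of order $\sqrt{\log d}$ with probability bounded away from zero, control the remainder) but fills in the details differently. Rather than the Mills ratio, the threshold $z$ is defined implicitly by $\mathbb{P}(g_1\geq z)=1/d$, so that $\mathbb{P}(M<z)=(1-1/d)^d$ is bounded away from $1$ automatically; the estimate $z\gtrsim\sqrt{\log d}$ then comes from the Gaussian tail \emph{upper} bound. The negative contribution is handled via the pointwise inequality $M\geq\tfrac{1}{d}\sum_i g_i$, which yields $\mathbb{E}[M\mid M<0]\geq -\sqrt{2/\pi}$ directly, without appealing to the small probability of the event. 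Your explicit-CDF route is the more standard extreme-value computation and is arguably cleaner once you sidestep the sign issue; the paper's quantile-and-conditioning argument trades the Mills-ratio lower bound for a separate trick on the negative tail. Both give the same order with comparable effort.
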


\begin{proof}
When $d=1$, $\|g\|_\infty$ has half-normal distribution, and so its expected value is $\sqrt{2/\pi}$.
Otherwise, $d\geq2$.
Since $\|g\|_\infty\geq\max_ig(i)$, we will estimate $\mathbb{E}\max_ig(i)$.
To this end, take $z$ such that $\mathbb{P}(g(1)\geq z)=1/d$, denote $j:=\arg\max_ig(i)$, and condition on the event that $g(j)<z$, which occurs with probability $(1-1/d)^d$:
\begin{align}
\mathbb{E}g(j)
\nonumber
&=\mathbb{E}\Big[g(j)\Big|g(j)<z\Big]\cdot(1-1/d)^d+\mathbb{E}\Big[g(j)\Big|g(j)\geq z\Big]\cdot\Big(1-(1-1/d)^d\Big)\\
\label{eq.gaussian bound 1}
&\geq\frac{1}{2}\mathbb{E}\Big[g(j)\Big|g(j)<0\Big]\cdot(1/4)+z\cdot \Big(1-(1-1/d)^d\Big).
\end{align}
Since $g(j)\geq\frac{1}{d}\sum_{i=1}^d g(i)$, we have
\[
\mathbb{E}\Big[g(j)\Big|g(j)<0\Big]
\geq\mathbb{E}\bigg[\frac{1}{d}\sum_{i=1}^d g(i)\bigg|g(i)<0~\forall i\bigg]
=-\sqrt{\frac{2}{\pi}}.
\]
With this, we may continue \eqref{eq.gaussian bound 1} to get
\[
\mathbb{E}g(j)
\geq -\frac{1}{8}\sqrt{\frac{2}{\pi}}+z\cdot(1-e^{-1})
\gtrsim z
\gtrsim\sqrt{\log d},
\]
where the last step follows from rearranging $1/d=\mathbb{P}(g(1)\geq z)\geq e^{-O(z^2)}$.
\end{proof}

\begin{lemma}
\label{lem.bound on alpha zero}
The function $\alpha_d$ defined by \eqref{eq.defn of alpha} satisfies $\alpha_d(0)\gtrsim\sqrt{\log d}$.
\end{lemma}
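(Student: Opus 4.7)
My plan is to exploit the symmetry of the standard Gaussian to reduce $\alpha_d(0)$ to the expected $\ell_\infty$ norm of a standard Gaussian, and then invoke Lemma~\ref{lem.gaussian bound}.

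First, observe that when $c=0$, all $2d$ components of the mixture coincide at the origin, so $\mathcal{D}_0 = \mathcal{N}(0, I_d)$. Next, I would identify the Voronoi region $V_1^{(\Gamma)}$ of $e_1$ relative to the orthoplex $\Gamma = \{\pm e_i\}_{i=1}^d$: the inequalities $\|x-e_1\|^2 < \|x \mp e_i\|^2$ simplify to $x_1 > \pm x_i$, so $V_1^{(\Gamma)} = \{x \in \mathbb{R}^d : x_1 > |x_j| \text{ for all } j \neq 1\}$ (note $x_1>0$ is automatic). The key consequence is that $X_1 = \|X\|_\infty$ on $V_1^{(\Gamma)}$.

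Second, I would use the invariance of $\mathcal{N}(0, I_d)$ under coordinate permutations and sign flips to argue that all $2d$ Voronoi regions $V_{i,\pm}$ (one per orthoplex vertex) have equal probability $1/(2d)$, and that the integrals
\[
\mathbb{E}\bigl[X_i \mathbf{1}_{V_{i,+}}\bigr]
= \mathbb{E}\bigl[-X_i \mathbf{1}_{V_{i,-}}\bigr]
= \mathbb{E}\bigl[X_1 \mathbf{1}_{V_1^{(\Gamma)}}\bigr]
\]
for every $i$. Since $\|X\|_\infty \mathbf{1}_{V_{i,\pm}} = (\pm X_i)\mathbf{1}_{V_{i,\pm}}$ and the $V_{i,\pm}$ partition $\mathbb{R}^d$ up to a null set, summing over all $2d$ regions yields
\[
\mathbb{E}\|X\|_\infty = 2d \cdot \mathbb{E}\bigl[X_1 \mathbf{1}_{V_1^{(\Gamma)}}\bigr] = \frac{\mathbb{E}\bigl[X_1 \mathbf{1}_{V_1^{(\Gamma)}}\bigr]}{\mathbb{P}(X \in V_1^{(\Gamma)})} = \alpha_d(0).
\]

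Finally, I would apply Lemma~\ref{lem.gaussian bound} to conclude $\alpha_d(0) = \mathbb{E}\|X\|_\infty \gtrsim \sqrt{\log d}$.

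There is essentially no obstacle here: the argument is almost entirely bookkeeping once the Voronoi region is identified and the symmetry is exploited. The only mildly subtle point is the sign handling in the identity $\|X\|_\infty \mathbf{1}_{V_{i,-}} = -X_i \mathbf{1}_{V_{i,-}}$, which is needed to make all $2d$ contributions equal rather than cancel; this is immediate once one notes that $V_{i,-}$ is precisely the image of $V_{i,+}$ under the sign flip of coordinate $i$, an isometry preserving both $\mathcal{N}(0,I_d)$ and $\|\cdot\|_\infty$.
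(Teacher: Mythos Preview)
Your proposal is correct and follows essentially the same route as the paper: both identify $V_1^{(\Gamma)}$ as the region where $X_1=\|X\|_\infty$, use the signed-permutation symmetry of $\mathcal{N}(0,I_d)$ to conclude $\alpha_d(0)=\mathbb{E}\|X\|_\infty$, and then invoke Lemma~\ref{lem.gaussian bound}. The only cosmetic difference is that the paper phrases the symmetry step as ``$\|g\|_\infty$ is independent of the event $g\in V_1^{(\Gamma)}$,'' whereas you obtain the same identity by explicitly summing the contributions of the $2d$ Voronoi cells.
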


\begin{proof}
It's straightforward to verify that $x\in V_t^{(\Gamma)}$ precisely when
\[
j=\arg\max_{i\in\{1,\ldots,d\}}|x(i)|
\qquad
\text{and}
\qquad
\gamma_t
=\operatorname{sign}(x(j))\cdot e_j,
\]
and so $x\in V_t^{(\Gamma)}$ implies $\langle x,\gamma_t\rangle=\|x\|_\infty$.
Letting $g\sim\mathcal{N}(0,I)$ in $\mathbb{R}^d$, then
\[
\alpha_d(0)
=\mathbb{E}\Big[\langle g,\gamma_1\rangle\Big|g\in V_1^{(\Gamma)}\Big]
=\mathbb{E}\Big[\|g\|_\infty\Big|g\in V_1^{(\Gamma)}\Big]
=\mathbb{E}\|g\|_\infty,
\]
where the last step follows from the fact that $\|\Pi g\|_\infty$ has the same distribution for every signed permutation $\Pi$, since this implies that the random variable is independent of the event $g\in V_1^{(\Gamma)}$.
The result then follows from Lemma~\ref{lem.gaussian bound}.
\end{proof}

We are now ready to prove the theorem of interest:

\begin{proof}[Proof of Theorem~\ref{thm.main lower bound}]
Take $\Gamma$ to be the standard orthoplex of dimension $d=k/2$.
Observe that Theorem~\ref{thm.significance of stable isogons} along with a change of variables in \eqref{eq.defn of alpha} gives
\[
\mu_t^{(\Gamma,\mathcal{D})}
=\alpha \gamma_t
=\sigma \alpha_d(1/\sigma)\gamma_t.
\]
This then implies
\begin{align}
\|\mu_t^{(\Gamma,\mathcal{D})}-\gamma_t\|_2
=|\langle \mu_t^{(\Gamma,\mathcal{D})}-\gamma_t,\gamma_t\rangle|
\nonumber
&\geq|\langle \mu_t^{(\Gamma,\mathcal{D})},\gamma_t\rangle|-|\langle \gamma_t,\gamma_t\rangle|\\
\label{eq.RHS}
&=\sigma\alpha_d(1/\sigma)-\Delta_\mathrm{min}/\sqrt{2}
\geq\sigma\alpha_d(0)-\Delta_\mathrm{min}/\sqrt{2},
\end{align}
where the last step follows from Lemma~\ref{lem.monotonicity}.
At this point, we consider two cases.
In the first case, $\eqref{eq.RHS}\geq\sigma\alpha_d(0)/2$, which implies
\[
\|\mu_t^{(\Gamma,\mathcal{D})}-\gamma_t\|_2
\geq\sigma\alpha_d(0)-\Delta_\mathrm{min}/\sqrt{2}
\geq\sigma\alpha_d(0)/2
\gtrsim\sigma\sqrt{\log k},
\]
where the last step follows from Lemma~\ref{lem.bound on alpha zero}.
Since this bound is independent of $t$, we then get
\[
\min_{t\in\{1,\ldots,k\}}\|\mu_t^{(\Gamma,\mathcal{D})}-\gamma_t\|_2
\gtrsim \sigma\sqrt{\log k}.
\]
In the remaining case, we have $\eqref{eq.RHS}<\sigma\alpha_d(0)/2$ which one may rearrange to get
\[
\sigma
<\frac{2}{\alpha_d(0)}\cdot\frac{\Delta_\mathrm{min}}{\sqrt{2}}
\lesssim \Delta_\mathrm{min}/\sqrt{\log k}.
\qedhere
\]
\end{proof}

\begin{proof}[Proof of Lemma~\ref{lem.monotonicity}]
We will show that $\alpha_d(c)$ has a nonnegative derivative, and the result will follow from the mean value theorem.
First, we write $\alpha_d(c)=\sum_{t=1}^{2d} I_t(c)$, where
\begin{equation}
\label{eq.defn I_t}
I_t(c)
:=\frac{1}{(2\pi)^{d/2}}\int_{x\in V_1^{(\Gamma)}}x_1e^{-\|x-c\gamma_t\|_2^2/2}dx.
\end{equation}
Denote $\gamma_1=e_1$ and $\gamma_2=-e_1$, i.e., the first identity basis element and its negative.
We claim that $I_s(\cdot)=I_t(\cdot)$ whenever $s,t\in\{3,\ldots,2d\}$.
This can be seen by changing variables in \eqref{eq.defn I_t} with any signed permutation that fixes $\gamma_1$ (and therefore $\gamma_2$ and $V_1^{(\Gamma)}$), and that sends $\gamma_s$ to $\gamma_t$.
As such, we have $\alpha_d(c)=I_1(c)+I_2(c)+(2d-2)I_3(c)$, where we take $\gamma_3=e_2$ without loss of generality.
At this point, we factor out the $x_1$-dependence in the integrands of $I_1(c)$ and $I_2(c)$ and observe that
\[
\int_{x_2=-x_1}^{x_1}\cdots\int_{x_d=-x_1}^{x_1} e^{-(x_2^2+\cdots+x_d^2)/2}dx_d\cdots dx_2
=\bigg(\int_{-x_1}^{x_1}e^{-z^2/2}dz\bigg)^{d-1}
=(2\pi)^{(d-1)/2}\operatorname{erf}^{d-1}(x_1/\sqrt{2})
\]
to get
\[
I_1(c)+I_2(c)
=\frac{1}{\sqrt{2\pi}}\int_0^\infty x\Big(e^{-(x-c)^2/2}+e^{-(x+c)^2/2}\Big)\operatorname{erf}^{d-1}(x/\sqrt{2})dx.
\]
Similarly,
\[
I_3(c)
=\frac{1}{2\pi}\int_{x_1=0}^\infty x_1e^{-x_1^2/2}\bigg(\int_{x_2=-x_1}^{x_1}e^{-(x_2-c)^2/2}dx_2\bigg)\operatorname{erf}^{d-2}(x_1/\sqrt{2})dx_1.
\]
At this point, we apply differentiation under the integral sign to get
\begin{align*}
I_1'(c)+I_2'(c)
&=\frac{1}{\sqrt{2\pi}}\int_0^\infty x\Big((x-c)e^{-(x-c)^2/2}-(x+c)e^{-(x+c)^2/2}\Big)\operatorname{erf}^{d-1}(x/\sqrt{2})dx,\\
I_3'(c)
&=\frac{1}{2\pi}\int_0^\infty xe^{-x^2/2}\Big(e^{-(x+c)^2/2}-e^{-(x-c)^2/2}\Big)\operatorname{erf}^{d-2}(x/\sqrt{2})dx.
\end{align*}
To continue, note that
\[
\frac{d}{dx}\operatorname{erf}^{d-1}(x/\sqrt{2})
=\frac{2(d-1)}{\sqrt{2\pi}}e^{-x^2/2}\operatorname{erf}^{d-2}(x/\sqrt{2}).
\]
With this, we integrate by parts to change the expression for $I_3'(c)$:
\[
I_3'(c)
=\frac{1}{2d-2}\bigg(-I_1'(c)-I_2'(c)+\frac{1}{\sqrt{2\pi}}\int_0^\infty\Big(e^{-(x-c)^2/2}-e^{-(x+c)^2/2}\Big)\operatorname{erf}^{d-1}(x/\sqrt{2})dx\bigg).
\]
Overall, we have
\[
\alpha_d'(c)
=I_1'(c)+I_2'(c)+(2d-2)I_3'(c)
=\frac{1}{\sqrt{2\pi}}\int_0^\infty\Big(e^{-(x-c)^2/2}-e^{-(x+c)^2/2}\Big)\operatorname{erf}^{d-1}(x/\sqrt{2})dx,
\]
which is nonnegative since the integrand is everywhere nonnegative.
\end{proof}

\section*{Acknowledgments}

The authors thank Pablo Parrilo for suggesting SDPNAL+ as a solver for the $k$-means semidefinite program, and David Bowie for the music that sustained our investigation of stable isogons and the Voronoi Means Conjecture.
DGM was supported by an AFOSR Young Investigator Research Program
award, NSF Grant No. DMS-1321779, and AFOSR Grant No. F4FGA05076J002.
RW was supported in part by an NSF CAREER grant and ASOFR Young Investigator Award 9550-13-1-0125. The views expressed in this
article are those of the authors and do not reflect the official
policy or position of the United States Air Force, Department of
Defense, or the U.S. Government.

\bibliographystyle{alpha}
\bibliography{literature_review}

\end{document}